\tikzstyle{decision}=[diamond,draw]
\tikzstyle{line}=[draw]
\tikzstyle{elli}=[draw,ellipse]
\tikzstyle{arrow} = [thick]
\newcommand{\rsa}{\rightsquigarrow}
\newcommand{\nn}{\nonumber}
\newcommand{\R}{\Re} 
\newcommand{\ra}{\rightarrow}
\newcommand{\E}[1]{\mathbb{E}\left[#1\right]}
\newcommand{\N}{\mathcal{N}}
\newcommand{\Tb}{{\Theta}}
\newcommand{\Tw}{{\Theta}^w}
\newcommand{\Tg}{{\Theta}^g}
\newcommand{\tg}{\theta^g}
\newcommand{\G}{\mathcal{G}}
\newcommand{\norm}[1]{\|#1\|}
\renewcommand{\epsilon}{\varepsilon}
\newtheorem{theorem}{Theorem}[section]
\newtheorem{lemma}[theorem]{Lemma}
\newtheorem{corollary}[theorem]{Corollary}
\newtheorem{assumption}{Assumption}
\newtheorem{definition}{Definition}[section]
\newtheorem{remark}{Remark}[section]
\newcommand{\ip}[1]{\langle #1\rangle}
\def\Re{\mathbb{R}}
\def\P{\mathcal{P}}
\def\S{\mathcal{S}}
\def\A{\mathcal{A}}
\newcounter{subequation}[equation]
\def\mathdisplay#1{%
  \ifmmode \@badmath
  \else
    $\def\@currenvir{#1}%
    \let\dspbrk@context\z@
    \let\tag\tag@in@display \SK@equationtrue 
    \global\let\df@label\@empty \global\let\df@tag\@empty
    \global\tag@false
    \let\mathdisplay@push\mathdisplay@@push
    \let\mathdisplay@pop\mathdisplay@@pop
    \if@fleqn
      \edef\restore@hfuzz{\hfuzz\the\hfuzz\relax}%
      \hfuzz\maxdimen
      \setbox\z@\hbox to\displaywidth\bgroup
        \let\split@warning\relax \restore@hfuzz
        \everymath\@emptytoks \m@th $\displaystyle
    \fi
}
\newcounter{algostep}
\newcounter{acalgorithm}
\icmltitlerunning{Deep Gated Networks (A study In Paths)}
\title{Deep Gated Networks: A framework to understand training and generalisation in deep learning}
\author{Chandrashekar Lakshminarayanan${}^*$ and Amit Vikram Singh\thanks{Both authors contributed equally.},\\ Indian Institute of Technology Palakkad\\\{chandru@iitpkd.ac.in,amitkvikram@gmail.com\}}
\begin{document}

\maketitle
\begin{abstract}
Understanding the role of (stochastic) gradient descent (SGD) in the training and generalisation of deep neural networks (DNNs) with ReLU activation has been the object study in the recent past. In this paper, we make use of deep gated networks (DGNs) as a framework to obtain insights about DNNs with ReLU activation. In DGNs, a single neuronal unit has two components namely the pre-activation input (equal to the inner product the weights of the layer and the previous layer outputs), and a gating value which belongs to $[0,1]$ and the output of the neuronal unit is equal to the multiplication of pre-activation input and the gating value. The standard DNN with ReLU activation, is a special case of the DGNs, wherein the gating value is $1/0$ based on whether or not the pre-activation input is positive or negative. We theoretically analyse and experiment with several variants of DGNs, each variant suited to understand a particular aspect of either training or generalisation in DNNs with ReLU activation.
Our theory throws light on two questions namely i) why increasing depth till a point helps in training and ii) why increasing depth beyond a point hurts training? We also present experimental evidence to show that gate adaptation, i.e., the change of gating value through the course of training is key for generalisation.
\end{abstract}
\section{Introduction}\label{sec:intro}
Given a dataset $(x_s,y_s)_{s=1}^n\in \R^{d_{in}}\times \R$, and a deep neural network (DNN), parameterised by $\Theta\in \R^{d_{net}}$, whose prediction on input example $x\in \R^{d_{in}}$ is $\hat{y}_{\Theta}(x_s)\in \R$, in this paper, we are interested in the stochastic gradient descent (SGD) procedure to minimise the squared loss given by $L_{\Theta}=\sum_{s=1}^n \left(\hat{y}_{\Tb}(x_s) - y_s\right)^2$.  
As with some of the recent works by \citet{dudnn,dudln} to understand SGD in deep networks, we adopt the trajectory based analysis, wherein, one looks at the (error) \emph{trajectory}, i.e., the dynamics of the error defined as $e_t(s)\stackrel{def}=\hat{y}_{\Tb_t}(x_s)-y_s$. Let $e_t\stackrel{def}=(e_t(s),s\in[n])\in\R^n$\footnote{We use $[n]$ to denote the set $\{1,\ldots,n\}$.}, then the error dynamics is given by:
\begin{align}\label{eq:basictraj}
e_{t+1}=e_t-\alpha_t K_t e_t,
\end{align}
where $\alpha_t>0$ is a small enough step-size, $K_t=\Psi_t^\top\Psi_t$ is an $n\times n$ \emph{Gram} matrix, and $\Psi_t$ is a $d_{net}\times n$ neural tangent feature (NTF) matrix whose entries are given by $\Psi_t(m,s)=\frac{\partial \hat{y}_{\Theta_t}(x_s)}{\partial \theta(m)}$\footnote{We assume that the weights can be enumerated as $\theta(m),m=1,\ldots, d_{net}$.}. In particular, we obtain several new insights related to the following:

$1.$ \emph{The Depth Phenomena}: It is well known in practice that increasing depth (of DNNs) till a point improves their training performance. However, increasing the depth beyond a point degrades training. We look at the spectral properties of the Gram matrix $K_0$ for randomised (symmetric Bernoulli) initialisation, and reason about the depth phenomena.

$2.$ \emph{Gate adaptation}, i.e., the dynamics of the gates in a deep network and its role in generalisation performance.

\textbf{Conceptual Novelties:} In this paper, we bring in two important conceptual novelties. First novelty is the framework of \emph{deep gated networks} (DGN), previously studied by \cite{sss}, wherein, the gating is decoupled from the pre-activation input. 
 Second novelty is what we call as the \emph{path-view}. We describe these two concepts first, and then explain the gains/insights we obtain from them.

\textbf{Deep Gated Networks (DGNs):} We consider networks with depth $d$, and width $w$ (which is the same across layers). At time $t$, the output $\hat{y}_{t}(x)\in \R$ of a DGN for an input $x\in \R^{d_{in}}$ can be specified by its gating values and network weights $\Theta_t\in \R^{d_{net}}$ as shown in \Cref{tb:dgn}.

\FloatBarrier
\begin{table}[h]
\centering
\begin{tabular}{|c|c|}\hline
Input layer & $z_{x_s,\Theta_t}(0)=x_s$ \\\hline
Pre-activation & $q_{x_s,\Theta_t}(l)={\Theta_t(l)}^\top z_{x_s,\Theta_t}(l-1)$\\\hline
Layer output & $z_{x_s,\Theta_t}(l)=q_{x_s,\Theta_t}(l)\odot G_{x_s,t}(l)$ \\\hline
Final output & $\hat{y}_t(x_s)={\Theta_t(d)}^\top z_{x_s,\Theta_t}(d-1)$\\\hline
\end{tabular}
\caption{A deep gated network. Here $x_s\in \R^{d_{in}},s\in [n]$ is the input, and $l\in[d-1]$ are the intermediate layers. $G_{x_s,t}(l)\in [0,1]^w$ and $q_{x,\Theta_t}(l)\in \R^w$ are the gating and pre-activation input values respectively at time $t$.}
\label{tb:dgn}
\end{table}
\newpage
$\Theta_t$ together with the collection of the gating values at time $t$ given by $\G_t\stackrel{def}=\{G_{x_{s},t}(l,i) \in [0,1], \forall s\in[n],l\in[d-1],i\in[w]\}$ (where $G(l,i)$ is the gating of $i^{th}$ node in $l^{th}$ layer), recovers the outputs $\hat{y}_t(x_s)\in \R$ for all the inputs $\{x_s\}_{s=1}^n$ in the dataset using the definition in \Cref{tb:dgn}. 

Note that the standard DNN with ReLU activation is a special DGN, wherein, $G_{x_s,t}(l,i)$, the $i^{th}$ node in the $l^{th}$ layer is given by $G_{x_s,t}(l,i)=\mathbbm{1}_{\{q_{x_s,\Theta_t}(l,i) >0\}}$.	

\textbf{Path-View:} A \emph{path} starts from an input node $i\in[d_{in}]$ of the given network, passes through any one of the weights in each layer of the $d$ layers and ends at the output node. Using the paths, we can express the output as the summation of individual  path contributions. The path-view has two important gains: i)  since it avoids the usual layer after layer approach we are able to obtain explicit expression for information propagation that separates the `signal' (the input $x_s\in \R^{d_{in}},s\in[n]$) from the `wire' (the connection of the weights in the network)  (ii) the role of the sub-networks becomes evident. Let $x\in \R^{d_{in}\times n}$ denote the data matrix, and let $\Theta_t(l,i,j)$ denote the ${(i,j)}^{th}$ weight in the $l^{th}$ layer and let $\P=[d_{in}]\times [w]^{d-1}\times[1]$ be a cross product of index sets. Formally,

$\bullet$ A path $p$ can be defined as $p\stackrel{def}=(p(0),p(1),\ldots,p(d))\in \P$, where $p(0)\in [d_{in}]$, $p(l)\in[w],\,\forall l\in[d-1]$ and $p(d)=1$. We assume that the paths can be enumerated as $p=1,\ldots, P = d_{in}w^{d-1}$. Thus, throughout the paper, we use the symbol $p$ to denote a path as well as its index in the enumeration.

$\bullet$ The \emph{strength} of a path $p$ is defined as $w_t(p)\stackrel{def}=\Pi_{l=1}^d \Theta_t(l,p(l-1),p(l))$. 

$\bullet$ The \emph{activation} level of a path $p$ for an input $x_s\in \R^{d_{in}}$ is defined as $A_{\G_t}(x_s,p)\stackrel{def}{=}\Pi_{l=1}^{d-1} G_{x_s,t}(l,p(l))$.

\textbf{Conceptual Gain I (Feature Decomposition):} Define $\phi_{x_s,\G_t}\in \R^P$, where $\phi_{x_s,\G_t}(p)\stackrel{def}=x(p(0),s)A_{\G_t}(x_s,p)$. The output is then given by:
\begin{align}\label{eq:featstrength}
\hat{y}_{t}(x_s)=\phi_{x_s,\G_t }^\top w_{t},
\end{align}	
where $w_{t}=(w_{t}(p),p=1,\ldots,P)\in \R^P$. In this paper, we interpret $\phi_{x_s,\G_t}\in \R^P$ as the \emph{hidden feature} and $w_{t}\in \R^P$, the strength of the paths as the \emph{weight vector}.

\textbf{A hard dataset for DNNs:} The ability of DNNs to fit data has been demonstrated in the past \cite{ben}, i.e., they can fit even random labels, and random pixels of standard datasets such as MNIST. 
However, for standard DNNs with ReLU gates, with no bias parameters, a dataset with $n=2$ points namely $(x,1)$ and $(2x,-1)$ for some $x\in \R^{d_{in}}$ cannot be memorised. The reason is that the gating values are the same for both $x$ and $2x$ (for that matter any positive scaling of $x$), and hence $\phi_{2x,\G_t }= 2\phi_{x,\G_t }$, and thus it not possible to fit arbitrary values for $\hat{y}_t(x)$ and $\hat{y}_t(2x)$.

\textbf{Conceptual Gain II (Similarity Metric):}  In DGNs similarity of two different inputs $x_s,x_{s'}\in \R^{d_{in}}, s,s' \in [n]$ depends on the overlap of sub-networks that are simultaneously \emph{active} for both the inputs. Let $\Phi_t=\left[\phi_{x_1,\G_t},\ldots,\phi_{x_n,\G_t}\right]\in\R^{P\times n}$ be the hidden feature matrix obtained by stacking $\phi_{x_s,t},\forall s\in[n]$. Now the Gram matrix $M_t$ of hidden features is given by $M_t=\Phi^\top_t\Phi_t=(x^\top x)\odot \lambda_t$ where $\lambda_t(s,s')\stackrel{def}=\underset{p\rsa i}{\sum} A_{\G_t}(x_s,p) A_{\G_t}(x_{s'},p)$\footnote{Here $p\rsa (\cdot)$ denote the fact that a path $p$ passes through $(\cdot)$ (which is either a node or a weight). },  stands for the total number of paths that start at any input node $i$ (due to symmetry this number does not vary with $i\in [d_{in}]$) and are \emph{active} for both input examples $s,s'\in[n]$. Each input example has a sub-network that is \emph{active}, and similarity (inner product) of two different inputs depends on the similarity of between the corresponding sub-networks (in particular the total number of paths that are simultaneously active) that are active for the two inputs.

\textbf{Conceptual Gain III (Deep Information Propagation):} An explicit expression for the Gram matrix as $K_t(s,s')=\sum_{i=1}^{d_{in}} x_s(i) x_{s'}(i)\kappa_t(s,s',i), \forall s,s'\in[n]$. Here, $\kappa_t(s,s',i)\in \R$ is a summation of the inner-products of the \emph{path features} (see \Cref{sec:optimisation}). Thus the input signals $x_s,x_{s'}\in \R^{d_{in}}$ stay as it is in the calculations in an algebraic sense, and are separated out from the wires, i.e., the network whose effect is captured in $\kappa_t(s,s,',i)$.

\textbf{A Decoupling assumption:} We assume that the gating $\G_0$ and weights $\Theta_0$ are statistically independent, and that weights ($d_{net}$ of them ) are sampled from $\{-\sigma,+\sigma\}$ with probability $\frac{1}2$. Under these assumptions we obtain the following key results and insights:

$1.$ \textbf{Depth Phenomena I:} Why does increasing depth till a point helps training?

Because, \emph{increasing depth causes whitening of inputs}. In particular, we show that $\E{K_0}=d\sigma^{2(d-1)}\left(x^\top x \odot \lambda_0\right)$, where $\odot$ is the \emph{Hadamard} product. The ratio $\frac{{\lambda_0}(s,s')}{{\lambda_0}(s,s)}$ is the fractional overlap of active sub-networks, say at each layer the overlap of active gates is $\mu\in (0,1)$, then for a depth $d$, the fractional overlap decays at exponential rate, i.e., $\frac{{\lambda_0}(s,s')}{{\lambda_0}(s,s)}\leq \mu^d$, leading to whitening.

$2.$ \textbf{Depth Phenomena II:} Why does increasing depth beyond a point hurts training?

Because, $Var\left[K_0(x,x')\right]\leq O(\max\{\frac{d^2}{w}, \frac{d^3}{w^2}\})$ (for $\sigma^2=O(\frac{1}w)$). Thus for large width $K_0$ converges to its expected value. However, for a fixed width, increasing depth makes the entries of $K_0$ deviate from $\E{K_0}$, thus degrading the spectrum of $K_0$.

$3.$ \textbf{Key Take away:}  To the best of our knowledge, we are the first to present a theory to explain the depth phenomena. While the ReLU gates do not satisfy the decoupling assumption, we hope to relax the decoupling assumption in the future and extend the results for decoupled gating to ReLU activation as well.

\textbf{Conceptual Gain IV  (Twin Gradient Flow):} The NTF matrix can be decomposed as
$
\Psi_t(m,s)=\underbrace{\phi_{x_s,\G_t }^\top\frac{\partial w_{t}} {\partial \theta(m)}}_{\text{strength adaptation}}+ \underbrace{\frac{\partial \phi_{x_s,\G_t }^\top}{\partial \theta(m)} w_{t} }_{\text{gate adaptation}}
$, from which it is evident that the gradient has two components namely i) \emph{strength adaptation:} keeping the sub-networks (at time $t$) corresponding to each input example fixed, the component learns the strengths of the paths in those sub-networks, and  ii) \emph{gate adaptation:} this component learns the sub-networks themselves.  Ours is the first work to analytically capture the two gradients.

\textbf{Conceptual Gain V  (Fixing the ReLU artefact):} In standard ReLU networks the gates are $0/1$ and hence $\frac{\partial \phi_{x_s,\G_t }^\top}{\partial \theta(m)}=0$. Thus the role of gates has been unaccounted for in the current literature.
By parameterising the gates by $\Tg\in\R^{d_{net}}$, and introducing a soft-ReLU gating (with values in $(0,1)$ ), we can show that Gram matrix can be decomposed into $K_t=K^w_t+K^a_t$, where $K^w_t$ is the Gram matrix of strength adaptation and $K^a_t$ is the Gram matrix corresponding to activation adaptation. Ours is the first work to point out that the Gram matrix has a gate adaptation component.

$\bullet$ \textbf{Conceptual Gain VI (Sensitivity sub-network) :} Our expression for $\E{K_0^a}$ involves what we call \emph{sensitivity} sub-network formed by gates which take intermediate values, i..e, are not close to either $0$ or $1$. We contend that by controlling such sensitive gates, the DGN is able to learn the features $\phi_{x_s,\G_t}$ over the course of training.

$\bullet$ \textbf{Evidence I (Generalisation needs gate adaptation):}  We experiment with two datasets namely standard CIFAR-10 (classification) and Binary-MNIST which has classes $4$ and $7$ with labels $\{-1,+1\}$ (squared loss). We observe that whenever gates adapt, test performance gets better.

$\bullet$ \textbf{Evidence II (Lottery is in the gates:)} We obtain $56\%$ test accuracy just by tuning the gates of a parameterised DGN with soft-gates. We also observe that by copying the gates from a learnt network and training the weights from scratch also gives good generalisation performance. This gives a new interpretation for the lottery ticket hypothesis \cite{lottery}, i.e., the real lottery is in the gates.

\textbf{Lessons Learnt:} Rethinking generalisation needs to involve a study on how gates adapt. Taking a cue from \cite{arora}, we look at $\nu_t=y^\top K_t^{-1}y$, and observe that $\nu_t$ in the case of adaptive gates/learned gates is always upper bounded by $\nu_t$ when gates are non-adaptive/non-learned gates.

\textbf{Organisation:} The rest of the paper has \Cref{sec:optimisation}, where, we consider DGNs with fixed or frozen gates, and \Cref{sec:generalisation}, where, we look at DGNs with adaptable gates. The idea is to obtain insights by progressing stage by stage from easy to difficult variants of DGNs, ordered naturally according to the way in which paths/sub-networks are formed. The proofs of the results are in the supplementary material.

\section{Deep Information Propagation in DGN}\label{sec:optimisation}
In this section, we study deep information propagation (DIP) in DGNs when the gates are frozen, i.e., $\G_t=\G_0,\forall t\geq 0$, and our results are applicable to the following:

(i) \emph{Deep Linear Networks (DLN):} where, all the gating values are $1$. Here, since all the paths are \emph{always on}, we do not have any control over how the paths are formed.

(ii) \emph{Fixed random gating (DGN-FRG):} Note that $\G_0$ contains $n\times (d-1)\times w$ gating values, corresponding to the $n$ input examples, $(d-1)$ layer outputs and $w$ nodes in each layer. In DGN-FRG, $\G_0\in \{0,1\}^{n\times (d-1)\times w}$, where each gating value is chosen to be $0$ or $1$ with probabilities $1-p$ and $p$ respectively. Here, we have full control over the gating/activation level of the paths. These networks are restricted solely towards understanding questions related to optimisation. Generalisation is irrelevant for DGN-FRG networks because there is no natural means to extend the random gate generation for unseen inputs.

(iii) \emph{Gated linear unit (GaLU):} networks, wherein the gating values are generated by \emph{another separate} network which is DNN with ReLU gates. Unlike DGN-FRG, GaLU networks can generalise.

We first express the Gram matrix $K_t$ in the language of the paths. We then state our assumption in Assumption~\ref{assmp:mainone},~\ref{assmp:maintwo} followed by our main result on deep information propagation in DGNs (\Cref{th:dgnexp}). We then demonstrate how our main result applies to DGN-FRG and GaLU networks.\footnote{DLN discussion has been moved to the Supplementary Material in the end.}

When the gates are frozen, the weight update affects only the path strengths $w_t(p),p\in[P]$. This is captured as follows:

\textbf{Sensitivity of the path strength}: Let $p\in[P]$ be a path, and $w_t(p)$ be its strength at time $t$. Further, let $\theta(m),m\in[d_{net}]$ be a weight and without loss of generality, let $\theta(m)$ belong to layer $l'(m)\in[d]$. At time $t$, the derivative of path $p$ with respect to $\theta(m)$ denoted by $\varphi_{t,p}(m)$ is given by:
\begin{align}
\begin{split}
\varphi_{t,p}(m)&=\underset{l\neq l'(m)}{\underset{l=1}{\overset{d}{\Pi}}} \Theta_t(l,p(l-1),p(l)), \forall p\rsa\theta(m),\\
\varphi_{t,p}(m)&=0, \forall p\bcancel{\rsa}\theta(m)
\end{split}
\end{align}
The sensitivity of a path $p$ with respect to $\Theta_t\in\R^{d_{net}}$ is then given by the vector $\varphi_{t,p}=(\varphi_{t,p}(m),m\in[d_{net}])\in \R^{d_{net}}$.

\begin{lemma}[Signal vs Wire Decomposition]\label{lm:sigwire}
Let $\kappa_t(s,s',i)\stackrel{def}=\underset{p_1,p_2\rsa i}{\sum_{p_1,p_2\in P:}} A_{\G_t}(x_s,p_1) A_{\G_t}(x_{s'},p_2) \ip{\varphi_{t,p_1}, \varphi_{t,p_2}}$. The Gram matrix $K_t$ is then given by 
\begin{align}\label{eq:ktalg}
{K_t(s,s')}=\sum_{i=1}^{d_{in}} x(i,s)x(i,s') \kappa_t(s,s',i)
\end{align}
\end{lemma}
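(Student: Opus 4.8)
The plan is to expand $K_t=\Psi_t^\top\Psi_t$ entrywise and substitute a path-based formula for the tangent features. First I would use the feature decomposition \eqref{eq:featstrength}: because the gates are frozen, $\G_t=\G_0$, the activation levels $A_{\G_t}(x_s,p)$ — and hence the feature coefficients $\phi_{x_s,\G_t}(p)=x(p(0),s)\,A_{\G_t}(x_s,p)$ — do not depend on $\Theta_t$, so $\hat{y}_t(x_s)=\sum_{p}\phi_{x_s,\G_t}(p)\,w_t(p)$ is linear in the path strengths with fixed coefficients. Differentiating term by term with respect to $\theta(m)$ and noting that $\partial w_t(p)/\partial\theta(m)$ is precisely the path-strength sensitivity $\varphi_{t,p}(m)$ introduced just above the statement (the product of the remaining $d-1$ weights when $p$ passes through $\theta(m)$, and $0$ otherwise), one gets $\Psi_t(m,s)=\sum_{p}x(p(0),s)\,A_{\G_t}(x_s,p)\,\varphi_{t,p}(m)$, the sum effectively running over $p\rsa\theta(m)$ only.

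Second, I would plug this into $K_t(s,s')=\sum_{m=1}^{d_{net}}\Psi_t(m,s)\,\Psi_t(m,s')$, expand the product of the two sums over paths, and swap the finite sums over $m$ and over path pairs $(p_1,p_2)$; since $\sum_{m}\varphi_{t,p_1}(m)\varphi_{t,p_2}(m)=\ip{\varphi_{t,p_1},\varphi_{t,p_2}}$, this gives $K_t(s,s')=\sum_{p_1,p_2}x(p_1(0),s)\,x(p_2(0),s')\,A_{\G_t}(x_s,p_1)\,A_{\G_t}(x_{s'},p_2)\,\ip{\varphi_{t,p_1},\varphi_{t,p_2}}$.

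Third, I would reorganise this double sum according to the input node the paths start from: grouping the pairs by $i=p_1(0)=p_2(0)$ and pulling the data factors $x(i,s)\,x(i,s')$ out of the inner sum reproduces exactly $K_t(s,s')=\sum_{i=1}^{d_{in}}x(i,s)\,x(i,s')\,\kappa_t(s,s',i)$ with $\kappa_t$ as defined. I expect this last regrouping to be the step requiring the most care: a priori the expansion in the second step ranges over \emph{all} pairs $(p_1,p_2)$, including those with $p_1(0)\neq p_2(0)$, which carry a data weight $x(i,s)x(j,s')$ with $i\neq j$ that does not match the template $\sum_i x(i,s)x(i,s')(\cdot)$. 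The tool for handling this is the support structure of the sensitivities — $\varphi_{t,p}$ is supported only on the $d$ weights lying on $p$, so $\ip{\varphi_{t,p_1},\varphi_{t,p_2}}$ is governed by the weights common to $p_1$ and $p_2$ — which one tracks layer by layer to see how the first-layer index ties each contributing pair to a single input node; this is the combinatorial heart of the argument, and everything else is routine finite-sum bookkeeping once it is in place.
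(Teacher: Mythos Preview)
Your first two steps reproduce the paper's argument exactly: differentiate the path expansion $\hat{y}_t(x_s)=\sum_p x(p(0),s)A_{\G_t}(x_s,p)w_t(p)$ to obtain $\Psi_t(m,s)=\sum_p x(p(0),s)A_{\G_t}(x_s,p)\varphi_{t,p}(m)$, then expand $K_t(s,s')=\sum_m\Psi_t(m,s)\Psi_t(m,s')$ and collect the $m$-sum into $\ip{\varphi_{t,p_1},\varphi_{t,p_2}}$.

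You are also right that the third step is the delicate one, and you deserve credit for flagging it; the paper's own proof simply writes the passage from the unrestricted double path sum to the sum constrained by $p_1(0)=p_2(0)=i$ as an equality, with no justification. However, your proposed resolution---using the support structure of $\varphi_{t,p}$ to force $p_1(0)=p_2(0)$---cannot work. Two paths with different input nodes can still share weights in layers $2,\ldots,d$, so $\ip{\varphi_{t,p_1},\varphi_{t,p_2}}$ need not vanish. Concretely, take $d_{in}=2$, $w=1$, $d=2$: the two paths $(1,1,1)$ and $(2,1,1)$ both pass through the unique last-layer weight $\Theta_t(2,1,1)$, and one computes $\ip{\varphi_{t,p_1},\varphi_{t,p_2}}=\Theta_t(1,1,1)\,\Theta_t(1,2,1)$, generically nonzero. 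There is no layer-by-layer combinatorial mechanism that ties the first-layer index across the pair; the ``combinatorial heart'' you anticipate is empty.

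So the identity in \eqref{eq:ktalg}, read as a deterministic equality, omits the cross terms with $p_1(0)\neq p_2(0)$, and neither your argument nor the paper's displayed chain of equalities accounts for them. This does not affect the downstream use in Theorem~\ref{th:dgnexp}: under Assumption~\ref{assmp:mainone} all such cross terms have zero expectation by Lemma~\ref{lm:pathdot}, so $\E{K_0}$ comes out the same. But as a pointwise statement the lemma is not established by either proof, and your plan to close the gap via support considerations will not succeed.
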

In Lemma~\ref{lm:sigwire}, $\kappa_t(s,s',i)$ is the amount of overall interaction within the DGN in the $i^{th}$ dimension of inputs $x_s,x_{s'}\in \R^{d_{in}}$. Note that, thanks to the path-view, the `signal', i.e., $x_s(i)x_{s'}(i)$ gets separated from the `wire', i.e., the connections in the DGN, which is captured in the `$\kappa_t$' term. Further, the algebraic expression for $K_t$ in \eqref{eq:ktalg} applies to all DGNs (including the standard DNN with ReLU activations).

\textbf{Simplifying $\kappa$}, which contains the joint path activity given by $A(x_s,p_1)A(x_{s'},p_2)$, and the inter-path interaction given by $\ip{\varphi_{t,p_1}, \varphi_{t,p_2}}$, is the next step. Towards this end, we state and discuss Assumptions~\ref{assmp:mainone},~\ref{assmp:maintwo}.
\begin{assumption}\label{assmp:mainone}
$\Theta_0\stackrel{iid}\sim Ber\left(\frac{1}{2}\right)$ over the set $\{-\sigma,+\sigma\}$. 
\end{assumption}
\textbf{Decoupling of paths from one another}, which stands for the fact that the inner product $\ip{\varphi_{t,p_1}, \varphi_{t,p_2}}$ of two different paths $p_1,p_2$ is $0$ on expectation. This is captured in Lemma~\ref{lm:pathdot} below.
\begin{lemma}\label{lm:pathdot}
Under Assumption~\ref{assmp:mainone}, for paths $p,p_1,p_2\in \P, p_1\neq p_2$, at initialisation we have (i) $\E{\ip{\varphi_{0,p_1}, \varphi_{0,p_2}}}= 0$, (ii) ${\ip{\varphi_{0,p}, \varphi_{0,p}}}= d\sigma^{2(d-1)}$
\end{lemma}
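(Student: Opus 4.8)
The plan is to expand $\ip{\varphi_{0,p_1},\varphi_{0,p_2}}=\sum_{m=1}^{d_{net}}\varphi_{0,p_1}(m)\varphi_{0,p_2}(m)$ over the weight index $m$ and use that the $m$-th term is nonzero only when both paths pass through the weight $\theta(m)$. Before anything else I would record two elementary bookkeeping facts. First, since each entry $\Theta_0(l,\cdot,\cdot)$ lies in $\{-\sigma,+\sigma\}$, its square equals the constant $\sigma^2$. Second, the weight $\theta(m)$ \emph{is} a specific edge (say from node $i$ to node $j$) of layer $l'(m)$, so $p\rsa\theta(m)$ forces $p(l'(m)-1)=i$ and $p(l'(m))=j$; in particular, if both $p_1\rsa\theta(m)$ and $p_2\rsa\theta(m)$, then $p_1$ and $p_2$ use the same edge at layer $l'(m)$. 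With these in hand, part (ii) is immediate: $\varphi_{0,p}(m)\neq 0$ exactly for the $d$ indices $m$ corresponding to the single weight of $p$ in each layer, and for each such $m$ the quantity $\varphi_{0,p}(m)$ is a product of the $d-1$ remaining edge weights of $p$, each of magnitude $\sigma$, hence $\varphi_{0,p}(m)^2=\sigma^{2(d-1)}$ deterministically; summing the $d$ equal terms gives $\ip{\varphi_{0,p},\varphi_{0,p}}=d\sigma^{2(d-1)}$ with no expectation required.

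For part (i) I would fix an $m$ with $p_1\rsa\theta(m)$ and $p_2\rsa\theta(m)$ (all other $m$ contribute $0$ to the sum) and show $\E{\varphi_{0,p_1}(m)\varphi_{0,p_2}(m)}=0$. Since $p_1\neq p_2$ their edge sequences differ at some layer, and by the second bookkeeping fact they cannot differ at layer $l'(m)$; hence they use different edges at some layer $l^*\neq l'(m)$. The entry $\Theta_0\big(l^*,p_1(l^*-1),p_1(l^*)\big)$ then appears in the product $\varphi_{0,p_1}(m)\varphi_{0,p_2}(m)$ to the first power only: it occurs in $\varphi_{0,p_1}(m)$ (as $l^*\neq l'(m)$), and it differs from every entry of $\varphi_{0,p_2}(m)$ (different edge at layer $l^*$, different layer elsewhere) and from the other entries of $\varphi_{0,p_1}(m)$ (different layers). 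Under Assumption~\ref{assmp:mainone} the entries are independent and mean zero, so factoring the expectation of this lone first-power factor yields $\E{\varphi_{0,p_1}(m)\varphi_{0,p_2}(m)}=0$. Linearity of expectation over $m$ then gives $\E{\ip{\varphi_{0,p_1},\varphi_{0,p_2}}}=0$.

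The only real difficulty is making the ``which random variables coincide'' step rigorous: one must argue carefully that among the $\Theta_0$-entries appearing in $\varphi_{0,p_1}(m)$ and in $\varphi_{0,p_2}(m)$ a coincidence can occur only within the same layer (entries of distinct layers are distinct, independent variables), and that sharing the weight $\theta(m)$ really does pin the layer-$l'(m)$ edge for both paths, so that $p_1\neq p_2$ forces a genuine disagreement at some layer $l^*\neq l'(m)$. I would also add a remark that the lemma asserts only the vanishing of the \emph{expected} cross term; the vectors $\varphi_{0,p}$ for distinct paths $p$ are not independent, and no such statement is claimed or used.
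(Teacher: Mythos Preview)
Your proposal is correct and follows essentially the same approach as the paper: expand the inner product over $m$, discard terms where a path misses $\theta(m)$, and for the remaining terms use independence and zero mean at a layer $l^*\neq l'(m)$ where the two paths disagree; part~(ii) is handled identically by counting the $d$ nonzero terms, each equal to $\sigma^{2(d-1)}$. Your write-up is in fact slightly more careful than the paper's in making explicit why the disagreement layer cannot be $l'(m)$ and why the chosen entry appears to the first power only.
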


\begin{assumption}\label{assmp:maintwo}
$\G_0$ is statistically independent of $\Theta_0$.
\end{assumption}
\textbf{Decoupling of gates and paths:} In a standard DNN with ReLU activations, the gating and path strengths are statistically dependent, because, conditioned on the fact that a given ReLU activation is \emph{on}, the incoming weights of that activation cannot be simultaneously all \emph{negative}. Assumption~\ref{assmp:maintwo} makes path strength statistically independent of path activity.

\begin{theorem}[DIP in DGN]\label{th:dgnexp}
Under Assumption~\ref{assmp:mainone}, ~\ref{assmp:maintwo}, and $\frac{4d}{w^2}<1$ it follows that
 \begin{align*}
&\E{K_0}=d\sigma^{2(d-1)}(x^\top x \odot \lambda_0)\\
&Var\left[K_0\right]\leq O\left(d^2_{in}\sigma^{4(d-1)}\max\{d^2w^{2(d-2)+1}, d^3w^{2(d-2)}\}\right)
\end{align*}
\end{theorem}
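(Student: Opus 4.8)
The plan is to work throughout in the path picture. With the gates frozen, differentiating $\hat y_0(x_s)=\phi_{x_s,\G_0}^\top w_0$ coordinate-wise gives $\Psi_0(m,s)=\sum_{p\in\P}\phi_{x_s,\G_0}(p)\,\varphi_{0,p}(m)$, hence
\begin{align*}
K_0(s,s')=\sum_{p_1,p_2\in\P}\phi_{x_s,\G_0}(p_1)\,\phi_{x_{s'},\G_0}(p_2)\,\ip{\varphi_{0,p_1},\varphi_{0,p_2}}.
\end{align*}
By \Cref{assmp:maintwo} the gate factors (the $A_{\G_0}$'s inside the $\phi$'s) are independent of the weight factors ($w_0$ and $\varphi_0$), and I take expectations over the $\pm\sigma$ weights with $\G_0$ held fixed, which is consistent with the way $\lambda_0$ enters the statement. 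The one algebraic identity I would record first is that, since $\Theta_0(l,i,j)^2=\sigma^2$ deterministically, $\ip{\varphi_{0,p_1},\varphi_{0,p_2}}=\sigma^{-2}\,|p_1\cap p_2|\,w_0(p_1)\,w_0(p_2)$, where $|p_1\cap p_2|$ is the number of layers in which $p_1$ and $p_2$ use the same weight; this both recovers \Cref{lm:pathdot} and reduces the two computations to moment computations for products of $\pm\sigma$ variables.

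For the mean: taking the expectation and using independence, a generic term factors as $\E{A_{\G_0}(x_s,p_1)\,A_{\G_0}(x_{s'},p_2)}\cdot\E{\ip{\varphi_{0,p_1},\varphi_{0,p_2}}}$, and by \Cref{lm:pathdot} the second factor is $0$ unless $p_1=p_2$ (two distinct paths differ in some layer, so their union carries a weight of odd multiplicity), in which case it equals $d\sigma^{2(d-1)}$. Grouping the surviving diagonal terms by their common input node $i\in[d_{in}]$ and using \Cref{lm:sigwire}, the sum over paths through $i$ active for both examples is $\lambda_0(s,s')$, independent of $i$ by symmetry; hence $\E{K_0(s,s')}=d\sigma^{2(d-1)}\sum_{i}x(i,s)x(i,s')\lambda_0(s,s')=d\sigma^{2(d-1)}(x^\top x\odot\lambda_0)(s,s')$.

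For the variance I would expand $\E{K_0(s,s')^2}$ over ordered quadruples $(p_1,p_2,p_3,p_4)$, pull out the deterministic gate and $x$ factors (bounded, and entering the final estimate through a factor $\le1$ each), and use that $\E{w_0(p_1)w_0(p_2)w_0(p_3)w_0(p_4)}$ equals $\sigma^{4d}$ when every weight occurs an even number of times among the four paths and $0$ otherwise. Because weights in different layers are distinct, this even-multiplicity condition decouples layer by layer: in each layer the four edges must pair up, so the four paths, read layer by layer, are either all merged or travel in one of the pairings $(1,2)(3,4)$, $(1,3)(2,4)$, $(1,4)(2,3)$, and a switch between two distinct pairings is possible only through an all-merged layer. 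Two structural facts then drive the bound: the quadruples with $p_1=p_2$ and $p_3=p_4$ reproduce exactly $\E{K_0(s,s')}^2$ and cancel; and on the remaining admissible quadruples $|p_1\cap p_2|=|p_3\cap p_4|$ equals the number of layers whose two endpoints both sit in an all-merged or $(1,2)(3,4)$ state, so a long merged/$(1,2)(3,4)$ stretch simultaneously increases the number of free intermediate-node assignments ($w(w-1)$ choices per split layer versus $w$ per merged layer) and suppresses the $|p_1\cap p_2|\,|p_3\cap p_4|$ factor.

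The remaining step, and the one I expect to be the main obstacle, is the combinatorial bookkeeping: enumerate the admissible merge/split profiles along the $d$ layers; weight each by at most $d_{in}^2$ for the free input node(s), by $w$ or $w^2$ per layer for the free hidden nodes, and by $|p_1\cap p_2|\,|p_3\cap p_4|$; and show the resulting weighted count is $O(d^2 w^{2(d-2)+1})$ or $O(d^3 w^{2(d-2)})$, according to whether a profile with one long merged stretch (which buys an extra power of $w$ but pays a $d^2$ from $|p_1\cap p_2|^2$) or one with several short stretches (which instead pays a power of $w$ but accumulates a $d^3$ from summing squared stretch lengths) dominates. The hypothesis $4d/w^2<1$ is what makes the sums over the number and placement of split blocks geometric and hence convergent, so that these two boundary profiles are the extremal ones; restoring the $d_{in}^2\sigma^{4(d-1)}$ prefactor then gives the stated bound on $Var[K_0]$.
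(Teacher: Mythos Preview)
Your plan is correct and matches the paper's proof. For the variance the paper likewise expands over quadruples, cancels the $p_1=p_2,\,p_3=p_4$ block against $\E{K_0}^2$, and parametrizes the remainder by two ``base paths'' $\rho_a,\rho_b$ and their layerwise crossings (this is exactly your pairing/merge profile in different coordinates); it then carries out the step you flag as the main obstacle by a direct case analysis on the number and position (first/last vs.\ middle layer) of crossings, showing that each additional middle-layer crossing multiplies the bound by at most $4d/w^2$, so the sum over crossings is geometric and the two extremal cases yield the $d^2w^{2(d-2)+1}$ and $d^3w^{2(d-2)}$ terms.
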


\textbf{Choice of $\sigma$:} Note that, in the case of gates taking values in $\{0,1\}$, $\lambda_0(s,s'), s,s'\in [n]$ is a measure of overlap of sub-networks that start at any given input node, end at the output node, and are active for both input examples $s,s'$.
Loosely speaking, say, in each layer $\mu\in(0,1)$ fraction of the gates are \emph{on}, then $\lambda_0(s,s)$ is about $(\mu w)^{d-1}$. Thus, to ensure that the signal does not blow up in a DGN, we need to ensure $(\sigma^2 \mu w)^{d-1}=1$, which means $\sigma=O\left(\sqrt{\frac1{\mu w}}\right)$. 

In the expression for $\E{K_0}$, note that the input Gram matrix $x^\top x \in \R^{n\times n}$ is separate from $\lambda_0\in \R^{n\times n}$,	 which is a Gram matrix of the active sub-networks. Thanks to the path-view, we do not lose track of the input information, which, is otherwise bound to happen if we were to choose a layer by layer view of DIP in DGNs.

\begin{figure*}
\resizebox{\textwidth}{!}{
\begin{tabular}{ccc}
\includegraphics[scale=0.4]{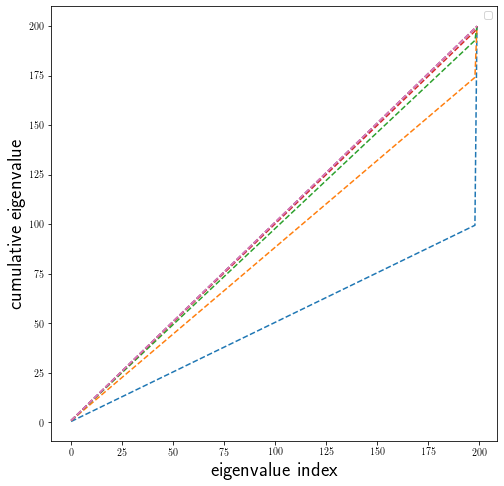}
&
\includegraphics[scale=0.4]{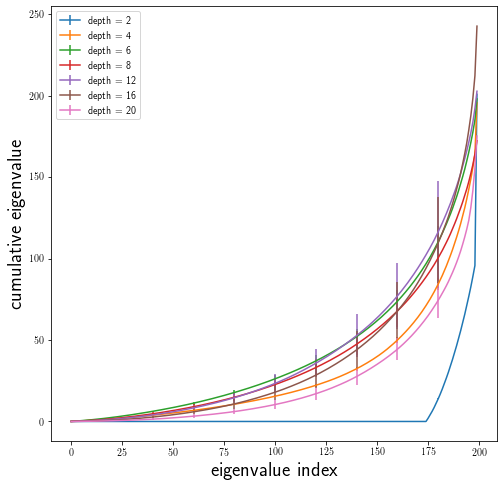}
&
\includegraphics[scale=0.4]{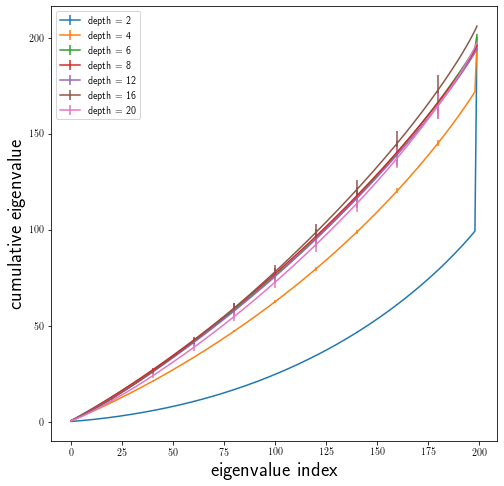}
\end{tabular}
}
\caption{Shows the plots for DGN-FRG with $\mu=\frac{1}{2}$ and $\sigma=\sqrt{\frac{2}{w}}$. The first plot in the left shows the ideal cumulative eigenvalue (e.c.d.f) for various depths $d=2,4,6,8,12,16,20$. Note that the ideal plot converges to identity matrix as $d$ increases. The second and third plot (from the left),  
plots respectively show the cumulative eigenvalues (e.c.d.f) for $w=500$ and $w=25$ respectively. Note that the e.c.d.f of higher width $w=500$ is better conditioned than the e.c.d.f of $w=25$.}
\label{fig:dgn-frg-gram-ecdf}
\end{figure*}

\textbf{Fixed Random Gating (DGN-FRG)} involves sampling the gates $\G_0$ from $Ber\left(\mu\right)$, and hence there is a random sub-network which is active for each input. Under FRG, we can obtain closed form expression for the `$\lambda(\cdot,\cdot,\cdot)$' term as below:

\begin{lemma}\label{lm:dgn-fra}  
 Under Assumption~\ref{assmp:mainone},~\ref{assmp:maintwo} and gates sampled iid $Ber(\mu)$, we have, $\forall s,s'\in[n]$

(i) $\mathbb{E}_p\left[\lambda_0(s,s)\right]=\bar{\lambda}_{self}=(\mu w)^{d-1}$

ii) $\mathbb{E}_p\left[\lambda_0(s,s')\right]=\bar{\lambda}_{cross}= (\mu^2w)^{d-1}$
\end{lemma}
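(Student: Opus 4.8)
The plan is to compute each expectation directly from the definition $\lambda_0(s,s')=\sum_{p\rsa i}A_{\G_0}(x_s,p)A_{\G_0}(x_{s'},p)$, exploiting the product form $A_{\G_0}(x_s,p)=\prod_{l=1}^{d-1}G_{x_s,0}(l,p(l))$ together with the fact that under DGN-FRG the entries of $\G_0$ are mutually independent $Ber(\mu)$ random variables indexed by the triple (input example, layer, node). Note that $\lambda_0$ depends only on the gates, so the only randomness relevant here is the gate sampling.

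First I would fix an arbitrary input node $i\in[d_{in}]$ (by the symmetry already noted, the answer does not depend on $i$) and count the paths through $i$: such a path is determined by the choices $p(1),\dots,p(d-1)\in[w]$, so there are exactly $w^{d-1}$ of them. For the self term, since each gate takes values in $\{0,1\}$ we have $A_{\G_0}(x_s,p)^2=A_{\G_0}(x_s,p)$, hence $\lambda_0(s,s)=\sum_{p\rsa i}A_{\G_0}(x_s,p)$. For each fixed path the $d-1$ gates $G_{x_s,0}(l,p(l))$, $l\in[d-1]$, lie in distinct layers and are therefore independent, so $\E{A_{\G_0}(x_s,p)}=\prod_{l=1}^{d-1}\E{G_{x_s,0}(l,p(l))}=\mu^{d-1}$. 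Summing over the $w^{d-1}$ paths through $i$ and using linearity of expectation gives $\E{\lambda_0(s,s)}=w^{d-1}\mu^{d-1}=(\mu w)^{d-1}$.

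For the cross term with $s\neq s'$, I would rewrite the summand for a fixed path as $A_{\G_0}(x_s,p)A_{\G_0}(x_{s'},p)=\prod_{l=1}^{d-1}G_{x_s,0}(l,p(l))\,G_{x_{s'},0}(l,p(l))$. The $2(d-1)$ gates appearing here are all distinct entries of $\G_0$ (any two of them differ either in the layer index or in the input-example index), hence mutually independent, so the expectation factorises as $\prod_{l=1}^{d-1}\E{G_{x_s,0}(l,p(l))}\,\E{G_{x_{s'},0}(l,p(l))}=(\mu^2)^{d-1}$. Summing again over the $w^{d-1}$ paths through $i$ yields $\E{\lambda_0(s,s')}=w^{d-1}\mu^{2(d-1)}=(\mu^2 w)^{d-1}$.

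There is no real obstacle here; the only points requiring care are (a) observing that along a single path the gates sit in distinct layers, so that the expectation of their product factorises, and (b) observing that the gates attached to two different input examples are, by the DGN-FRG construction, independent draws — this is precisely what makes the cross-term per-path expectation equal to $\mu^{2(d-1)}$ rather than $\mu^{d-1}$, and hence is the source of the gap between $\bar\lambda_{cross}$ and $\bar\lambda_{self}$. The use of $\{0,1\}$-valued gates to collapse $A_{\G_0}(x_s,p)^2$ to $A_{\G_0}(x_s,p)$ in part (i) is a minor convenience rather than an essential step.
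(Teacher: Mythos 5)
Your proof is correct and follows essentially the same counting argument as the paper, which only sketches it in two sentences (average of $\mu w$ active gates per layer for the self term, $\mu^2 w$ overlapping gates per layer for the cross term); your per-path expectation computation, summed over the $w^{d-1}$ paths through a fixed input node, is just the rigorous form of that sketch. The two points you flag — independence of gates across layers along a single path, and independence of the gates attached to distinct input examples — are exactly what the paper's informal argument implicitly relies on.
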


\textbf{DIP in DGN-FRG:} For $\sigma=\sqrt{\frac{1}{\mu w}}$, we have:

\begin{align*}
\frac{\E{K_0}}{d}=\left[\begin{matrix}
\cdot&\cdot &\cdot &\cdot &\cdot \\ 
\cdot&\ip{x_s,x_s}\quad\quad &\cdot &\quad\quad\ip{x_s,x_{s'}}\mu^{d-1} &\cdot\\ 
\cdot&\ip{x_{s'},x_s}\mu^{d-1} &\cdot &\ip{x_{s'},x_{s'}} &\cdot \\
\cdot&\cdot &\cdot &\cdot &\cdot  
\end{matrix}\right]
\end{align*}

\textbf{Experiment $1$:} Consider the dataset $(x_s,y_s)_{s=1}^n\in \R\times \R$, where $x_s=1,\forall s\in [n]$, and $y_s\sim unif([-1,1])$, $n=200$. The input Gram matrix $x^\top x$ is a $n\times n$ matrix with all entries equal to $1$ and its rank is equal to 1. Since all the inputs are identical, this is the worst possible case for optimisation.

\textbf{Why increasing depth till a point helps ?} 
In the case of \textbf{Experiment $1$}, we have:

\begin{align}\label{eq:mat}
\frac{\E{K_0}}{d}=\left[\begin{matrix}
1 &\mu^{d-1} &\ldots &\mu^{d-1} &\ldots\\ 
\ldots &1 &\ldots &\mu^{d-1} &\ldots\\ 
\ldots &\mu^{d-1} &\ldots &1 &\ldots \\
\ldots &\mu^{d-1} &\ldots &\mu^{d-1} &1\\ 
\end{matrix}\right]
\end{align}

i.e., all the diagonal entries are $1$ and non-diagonal entries are $\mu^{d-1}$. Now, let $\rho_i\geq 0,i \in [n]$ be the eigenvalues of $\frac{\E{K_0}}{d}$, and let $\rho_{\max}$ and $\rho_{\min}$ be the largest and smallest eigenvalues. From the structure of \eqref{eq:mat}, one can easily show that $\rho_{\max}=1+(n-1)\mu^{d-1}$ and corresponds to the eigenvector with all entries as $1$, and $\rho_{\min}=(1-\mu^{d-1})$ repeats $(n-1)$ times, which corresponds to eigenvectors given by $[0, 0, \ldots, \underbrace{1, -1}_{\text{$i$ and $i+1$}}, 0,0,\ldots, 0]^\top \in \R^n$ for $i=1,\ldots,n-1$.

\textbf{Why increasing depth beyond a point hurts?} 
In \Cref{th:dgnexp}, note that for a fixed width $w$, as the depth increases, the variance of $K_0(s,s')$ increases, and hence the entries of $K_0$ deviates from its expected value $\E{K_0}$. Thus the structure of the Gram matrix degrades from \eqref{eq:mat}, leading to smaller eigenvalues.

\textbf{Numerical Evidence (Gram Matrix):} We fix arbitrary diagonal and non-diagonal entries, and look at their value averaged over $20$ run (see \Cref{fig:dgn-frg-gram-diag}). The actual values shown in bold indeed follow the ideal values shown in the dotted lines and the values are as per \eqref{eq:mat}).

\begin{figure}[h]
\resizebox{\columnwidth}{!}{
\begin{tabular}{cc}
\includegraphics[scale=0.4]{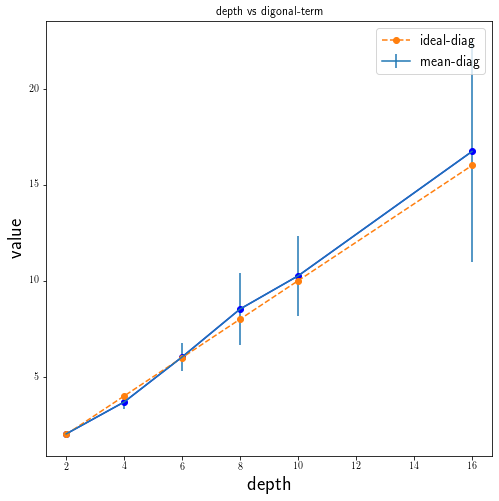}
&
\includegraphics[scale=0.4]{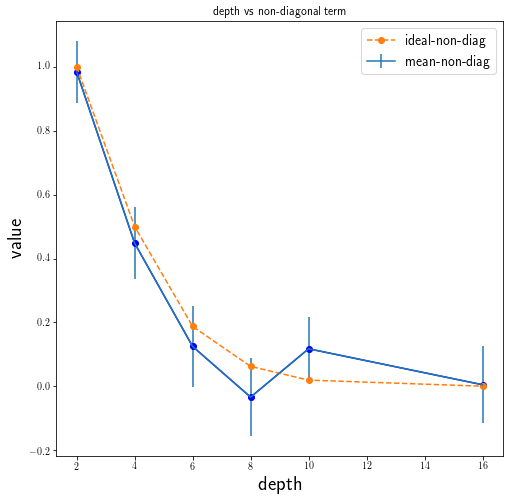}
\end{tabular}
}
\caption{For $w=500$, and $\mu=0.5$, (an arbitrary) diagonal (left plot) and non-diagonal (right plot) of the Gram matrix $K_0$ for dataset in Experiment~$1$ is shown. The plots are averaged over $20$ runs.}
\label{fig:dgn-frg-gram-diag}
\end{figure}

\textbf{Numerical Evidence (Spectrum):} 
Next, we look at the cumulative eigenvalue (e.c.d.f) obtained by first sorting the eigenvalues in ascending order then looking at their cumulative sum. The ideal behaviour (middle plot of \Cref{fig:dgn-frg-gram-ecdf}) as predicted from theory is that for indices $k\in[n-1]$, the e.c.d.f should increase at a linear rate, i.e., the cumulative sum of the first $k$ indices is equal to $k(1-\mu^{d-1})$, and the difference between the last two indices is $1+(n-1)\mu^{d-1}$. In \Cref{fig:dgn-frg-gram-ecdf}, we plot the e.c.d.f for various depths $d=2,4,6,8,12,16,20$ and two different width namely $w=25,500$. It can be seen that as $w$ increases, the difference between the ideal and actual e.c.d.f curves is less ($w=500$ when compared to $w=25$).

\textbf{Numerical Evidence (Role of Depth):} 
In order to compare how the rate of convergence varies with the depth in DGN-FRG network, we set the step-size $\alpha=\frac{0.1}{\rho_{\max}}$, $w=100$, and fit the data described in \textbf{Experiment $1$}. We use the vanilla SGD-optimiser. Note that if follows from \eqref{eq:basictraj} that the convergence rate is determined by a linear recursion, and choosing $\alpha=\frac{0.1}{\rho_{\max}}$ can be seen to be equivalent to having a constant step-size of $\alpha=0.1$ but dividing the Gram matrix by its maximum eigenvalue instead. Thus, after this rescaling, the maximum eigenvalue is $1$ uniformly across all the instances, and the convergence should be limited by the smaller eigenvalues. We also look at the convergence rate of the ratio $\frac{\norm{e_t}^2_2}{\norm{e_0}^2_2}$, and we observe that the convergence rate gets better with depth as predicted by theory (\Cref{fig:dgn-frg}).

\begin{figure*}
\resizebox{\textwidth}{!}{
\begin{tabular}{cccc}

\includegraphics[scale=0.4]{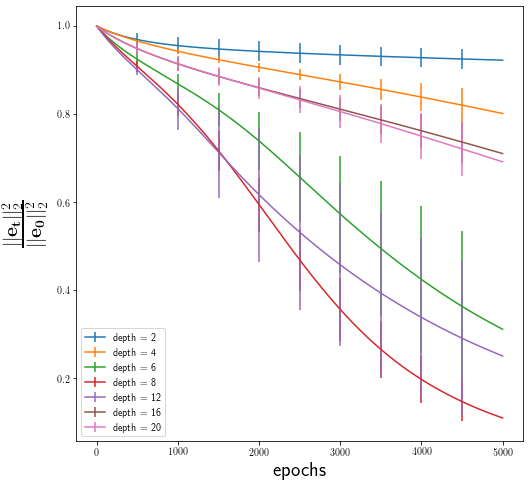}
&
\includegraphics[scale=0.4]{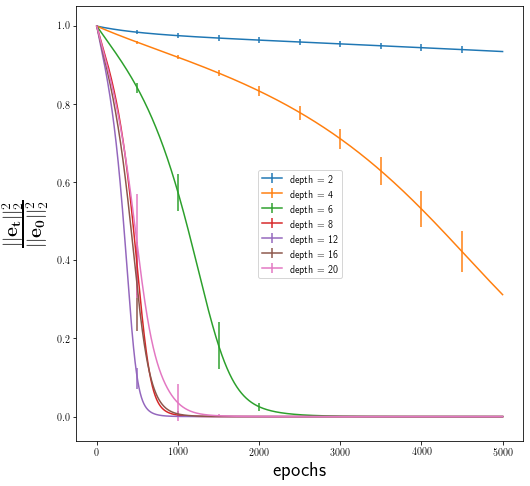}
&
\includegraphics[scale=0.4]{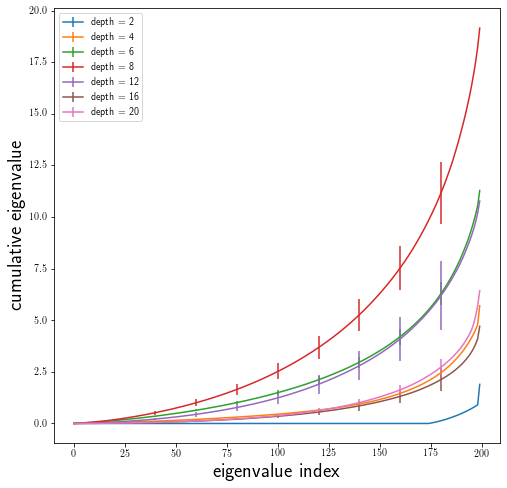}
&
\includegraphics[scale=0.4]{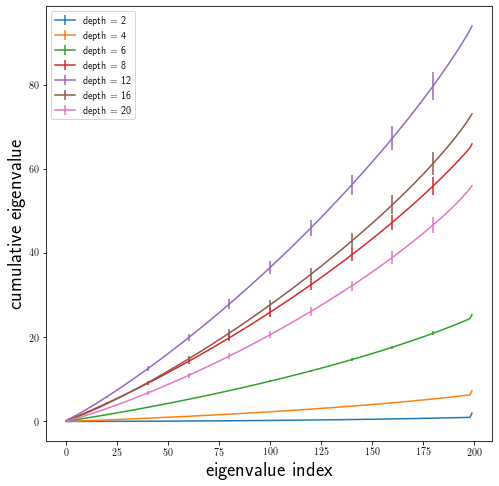}
\end{tabular}
}
\caption{Shows the plots for DGN-FRG. The left two plots show the convergence rates for $w=25$ and $w=500$. The right two values are showing the e.c.d.f obtained by first dividing the Gram matrix by their maximum eigenvalue. The plots are averaged over $5$ runs.}
\label{fig:dgn-frg}
\end{figure*}
\begin{figure*}
\resizebox{\textwidth}{!}{
\begin{tabular}{cccc}
\includegraphics[scale=0.4]{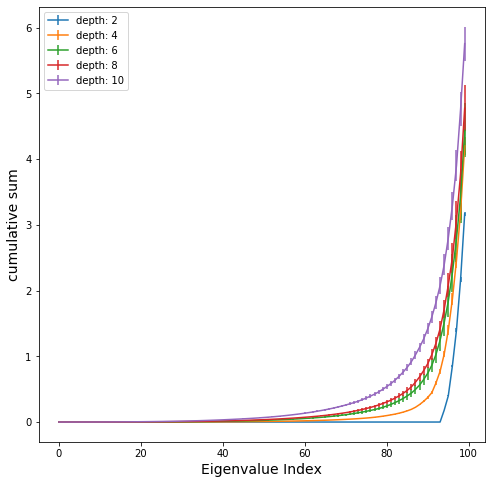}
&
\includegraphics[scale=0.4]{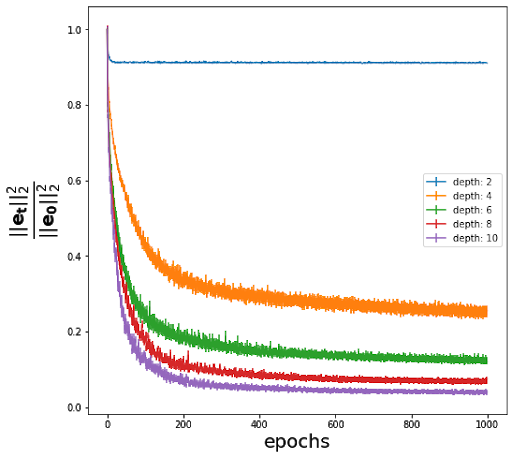}
&
\includegraphics[scale=0.4]{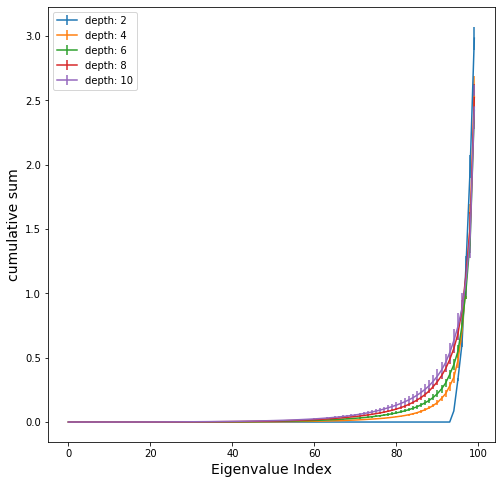}
&
\includegraphics[scale=0.4]{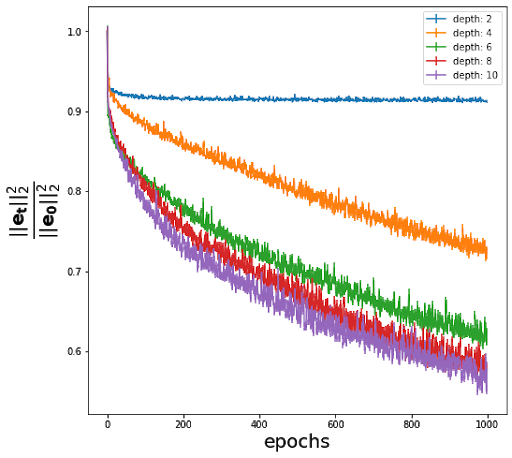}
\end{tabular}
}
\caption{The left two plots shows the e.c.d.f and convergence rates for various depth in GaLU networks $w=100$. The third and fourth plot from the left show the e.c.d.f and convergence rates for various depth in ReLU networks $w=100$. The plots are averaged over $5$ runs. }
\label{fig:galu-d}
\end{figure*}

\textbf{GaLU Networks:} Here, the gating values are obtained from a DNN with ReLU activation, parameterised by $\Tg\in \R^{d_{net}}$ (these weights are frozen). Now, let us define $\bar{\lambda}_{self}(s)\stackrel{def}=\mathbb{E}_{\Tg_0}\left[\lambda_0(s,s)\right]$, and $\bar{\lambda}_{cross}(s,s')\stackrel{def}= \mathbb{E}_{\Tg_0}\left[\lambda_0(s,s')\right]$. Note that, due to the inherent symmetry in weights (Assumption~\ref{assmp:mainone}) , we can expect roughly half the number of activations to be \emph{on}, and it follows that $\bar{\lambda}_{self}(s)\approx (\mu w)^{d-1}$ with $\mu\approx\frac12$. Also, let $\tau(s,s',l)\stackrel{def}=\sum_{i=1}^w G_{x_s,t}(l,i)G_{x_s',t}(l,i)$,  let $\eta\stackrel{def}=\max_s\left(\max_{s',l} \frac{\tau(s,s',l)}{\tau(s,s,l)}\right)$ be the maximum overlap between gates of a layer (maximum taken over over input pairs $s,s'\in[n]$ and layers $l\in [d]$), then it follows that $\max_{s,s'\in [n]} \frac{\bar{\lambda}_{cross}(s,s')}{\bar{\lambda}_{self}(s)}\leq \eta^{d-1}$. Thus,  we can see that while the non-diagonal entries of the $\E{K_0}$ decay at a different rates, the rate of decay is nonetheless upper bounded by $\eta^{d-1}$. Note that in DGN-FRG decay of non-diagonal terms is at a uniform rate given by $\mu^{d-1}$.

\textbf{Experiment $2$:} To characterise the optimisation performance of GaLU and ReLU networks, we consider the dataset $(x_s,y_s)_{s=1}^{n}\in \R^2\times \R$, where, $x_s\stackrel{iid}\sim unif(\left[-1,1\right]^2)$ and $y_s\stackrel{iid}\sim unif([-1,1])$, $n=100$. The results are shown in \Cref{fig:galu-d}. The rationale behind choosing this data set is that, we want the inputs to be highly correlated by choice.
 
 \textbf{GaLU Networks (Depth helps in training): }The trend is similar to DGN-FRG case, in that, both e.c.d.f as well as convergence get better with increasing depth. Here too we set the step-size $\alpha=\frac{0.1}{\rho_{\max}}$ (and use vanilla SGD). We also observe that in \textbf{Experiment $2$} \emph{GaLU networks optimise better than standard ReLU networks}, and it is also true that the e.c.d.f for the case of GaLU is better than that of ReLU. This can be attributed to the fact that, in ReLU network the dot product of two different active paths is not zero and hence the Gram matrix entries fall back to the algebraic expression for $K_t$ in \eqref{eq:ktalg}.

\section{Generalisation}\label{sec:generalisation}
\textbf{ReLU networks generalise better than GaLU:} We trained both ReLU and GaLU networks on standard MNIST dataset to close to $100\%$ accuracy. We observed that the GaLU network trains a bit faster than the ReLU network (see \Cref{fig:galu-relu} ). However, in test data we obtain accuracy of around $96.5\%$ and  $98.5\%$ for GaLU and ReLU networks respectively. A key difference between GaLU and ReLU networks is that, in ReLU networks the gates are adapting, i.e., $\G_t$ keeps changing with time. This leads us to the following natural question:
\begin{center}
\emph{Is gate adaptation key for generalisation performance?}
\end{center}
\begin{figure*}
\resizebox{\textwidth}{!}{
\begin{tabular}{cccc}
\includegraphics[scale=0.1]{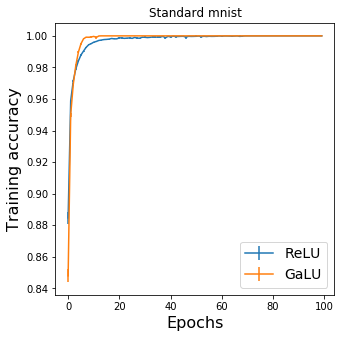}
&
\includegraphics[scale=0.1]{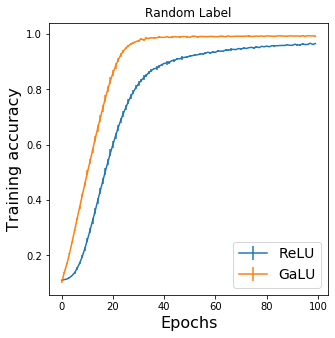}
&
\includegraphics[scale=0.1]{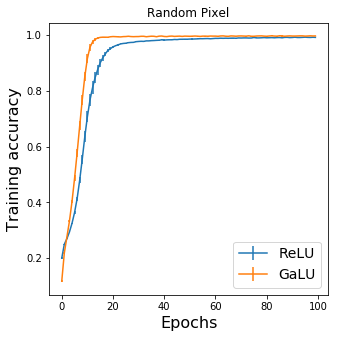}
&
\includegraphics[scale=0.1]{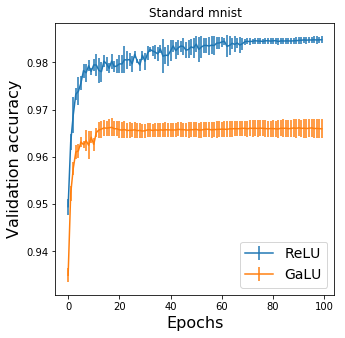}
	\end{tabular}
}
\caption{First three plots from the left show optimisation in ReLU and GaLU networks for standard MNIST, MNIST with random label and pixels  The right most plot shows generalisation of ReLU and GaLU networks in standard MNIST. Architecture we use is $d=7$, with layer widths from the first to last given by $512,512,256,256,128,64,10$ followed by a soft-max layer. In the case of GaLU there are two such network, wherein, one network is used to generate the gating values (whose weights are frozen) and the other network has the weights that are trained.}
\label{fig:galu-relu}
\end{figure*}

\textbf{Hidden features are in the sub-networks and are learned:} We consider ``Binary''-MNIST data set with two classes namely digits $4$ and $7$, with the labels taking values in $\{-1,+1\}$ and squared loss. We trained a standard DNN with ReLU activation ($w=100$, $d=5$). Recall from \Cref{sec:intro}, that $M_t=\Phi^\top_t\Phi_t$  (the Gram matrix of the features) and let $\widehat{M}_t=\frac{1}{trace(M_t)}M_t$ be its normalised counterpart. For a subset size, $n'=200$ ($100$ examples per class) we plot $\nu_t=y^\top (\widehat{M}_t)^{-1} y$, (where $y\in\{-1,1\}^{200}$ is the labeling function), and observe that $\nu_t$ reduces as training proceeds (see middle plot in \Cref{fig:path-norm}). Note that $\nu_t=\sum_{i=1}^{n'}(u_{i,t}^\top y)^2 (\hat{\rho}_{i,t})^{-1}$, where $u_{i,t}\in \R^{n'}$ are the orthonormal eigenvectors of $\widehat{M}_t$ and $\hat{\rho}_{i,t},i\in[n']$ are the corresponding eigenvalues. Since $\sum_{i=1}^{n'}\hat{\rho}_{i,t}=1$, the only way $\nu_t$ reduces is when more and more energy gets concentrated on $\hat{\rho}_{i,t}$s for which $(u_{i,t}^\top y)^2$s are also high. However, in $M_t=(x^\top x)\odot \lambda_t$, only $\lambda_t$ changes with time. Thus, $\lambda_t(s,s')$ which is a measure of overlap of sub-networks active for input examples $s,s'\in[n]$, changes in a manner to reduce $\nu_t$. We can thus infer that the \emph{right} active sub-networks are learned over the course of training.

\begin{figure}
\centering
\includegraphics[scale=0.5]{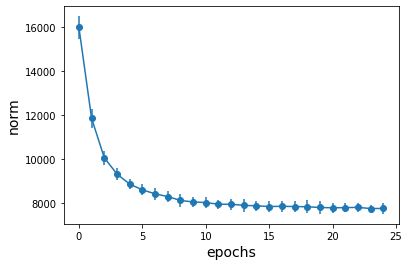}
\caption{Shows $\nu_t=y^\top (\widehat{M}_t)^{-1}y$, where $M_t=\Phi_t^\top \Phi_t$.}
\label{fig:path-norm}
\end{figure}

\textbf{DGNs with adaptable gates:} To investigate the role of gate adaptation,  we study parameterised DGNs of the general form given in \Cref{tb:dgn-parameterised}. The specific variants we study in the experiments are in \Cref{tb:dgn-family}.
\FloatBarrier
\begin{table}[h]
\resizebox{\columnwidth}{!}{
\centering
\begin{tabular}{|c|c|c|}\hline
Gating Network & Weight Network\\\hline
$z_{x,\Tg_t}(0)=x$ & $z_{x,\Tw_t}(0)=x$ \\\hline
 $q_{x,\Tg_t}(l)={\Tg_t(l)}^\top z_{x,\Tg_t}(l-1)$ & $q_{x,\Tw_t}(l)={\Tw_t(l)}^\top z_{x,\Tw_t}(l-1)$\\\hline
 $z_{x,\Tg_t}(l)=q_{x,\Tg_t}(l)\odot G_{x,\Tg}(l)$ & $z_{x,\Tw_t}(l)=q_{x,\Tw_t}(l)\odot G_{x,\Tg_t}(l)$ \\\hline
 \multicolumn{2}{|c|}{$\hat{y}_{t}(x)={\Tw_t(d)}^\top z_{x,\Tw_t}(d-1)$}\\\hline 
 \multicolumn{2}{|c|}{$\begin{aligned}\beta >0: G_{x,\Tg_t}(l,i)&=\chi_{\epsilon}(-\beta q_{x,\Tg_t}(l,i)), \\ \beta=\infty: G_{x,\Tg_t}(l,i)&=\mathbbm{1}_{\{q_{x,\Tg_t}(l,i)>0\}}\end{aligned}$}\\\hline 
\end{tabular}
}
\caption{A DGN with parameterised gates. Here, for $\epsilon\geq 0$, $\chi_{\epsilon}(v)=\frac{1+\epsilon}{1+\exp(v)}, \forall v\in \R$}
\label{tb:dgn-parameterised}
\end{table}

\FloatBarrier
\begin{table}[h]
\centering
\resizebox{\columnwidth}{!}{
\begin{tabular}{|c|c|c|}\hline
Terminology& Notation & Remarks\\\hline
ReLU & $\N(\Theta_t,\infty;\Theta_t)$ & $\Theta_t\in \R^{d_{net}}$\\\hline
GaLU (Frozen) &$\N(\Tg_{\dagger},\infty;\Tw_t)$ & $\Tw_t\in \R^{d_{net}}$, $\Tg_{\dagger}\in \R^{d_{net}}$\\\hline
Soft-ReLU &$\N(\Theta_t,\beta;\Theta_t)$ & $G\in(0,1)$ (not decoupled)\\\hline
Soft-GaLU &$\N(\Tg_t,\beta;\Tw_t)$ &  $G\in(0,1)$ (decoupled) \\\hline
\end{tabular}
}
\caption{Shows the variants of DGNs in the experiments. Here, $\dagger$ stands for frozen weights that are initialised by not trained.}
\label{tb:dgn-family}
\end{table}
\begin{figure*}
\resizebox{\textwidth}{!}{
\begin{tabular}{cccc}
\includegraphics[scale=0.1]{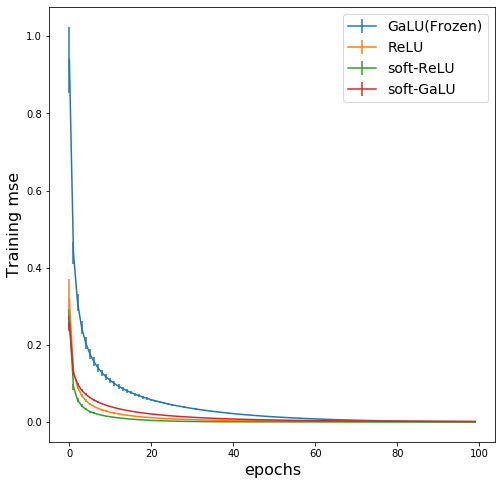}
&
\includegraphics[scale=0.1]{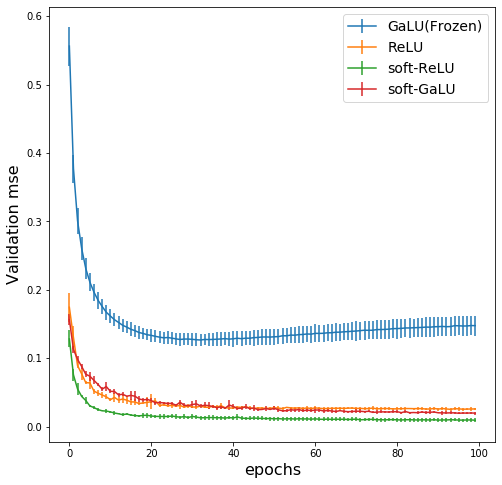}
&
\includegraphics[scale=0.1]{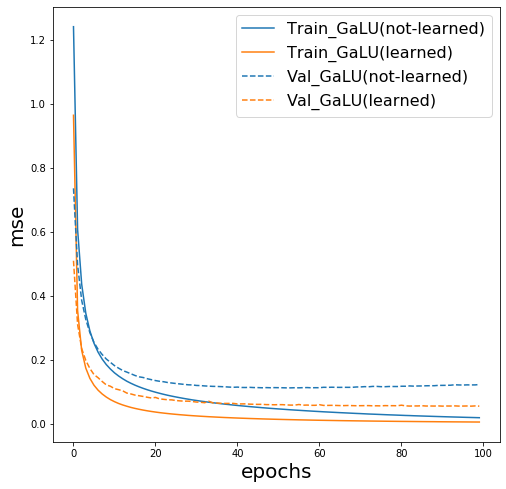}
&
\includegraphics[scale=0.1]{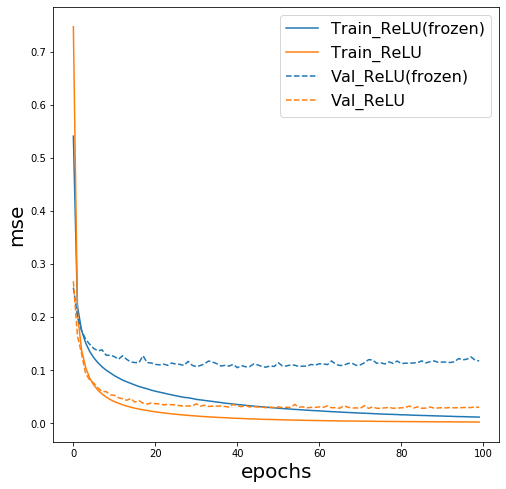}

\end{tabular}
}
\caption{The left two plots show respectively the training and generalisation in the $4$ different networks with $w=100$, $d=6$. Generalisation performance (dotted lines) of learned gates vs unlearned gates ($3^{rd}$ from left), adaptable gates vs frozen gates (rightmost). The plots are averaged over $5$ runs. }
\label{fig:adapt}
\end{figure*}

\begin{figure*}
\resizebox{\textwidth}{!}{
\begin{tabular}{ccccc}
\includegraphics[scale=0.1]{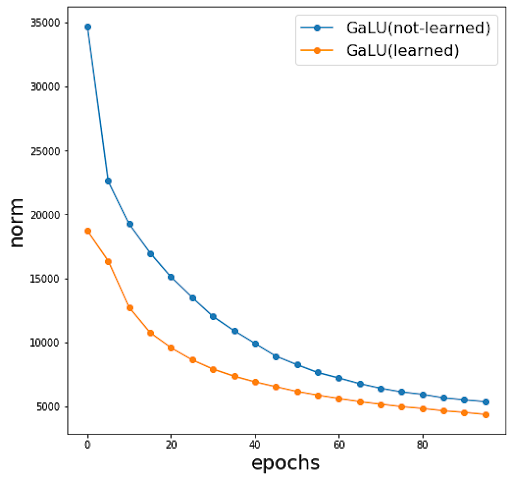}
&
\includegraphics[scale=0.1]{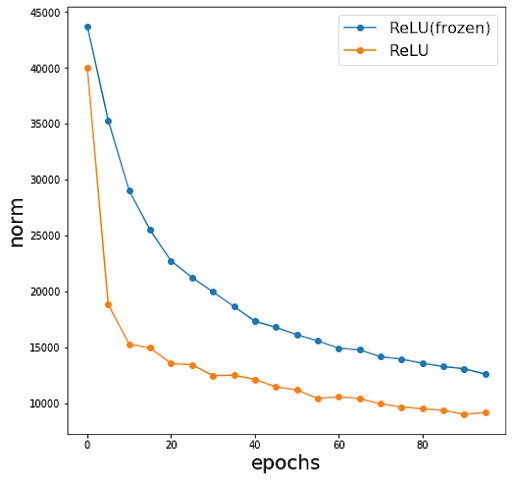}
&

\includegraphics[scale=0.18]{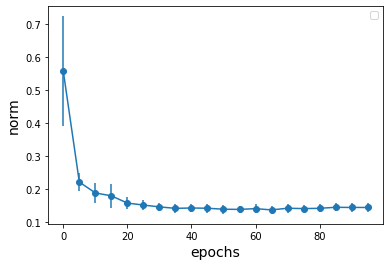}
&
\includegraphics[scale=0.18]{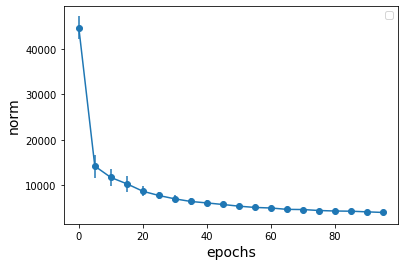}
\end{tabular}
}
\caption{Shows $\nu_t=y^\top (H_t)^{-1}y$ for (from the left) i) $H_t=\widehat{K}_t$, ii) $H_t=\widehat{K}_t$,  (iii) $H_t=K^a_t$ (iv) $H_t=\widehat{K^a_t}$. Here $K^a_t$ is the Gram matrix of activations in the soft-GaLU network.}
\label{fig:adapt-norm}
\end{figure*}

We now explain the idea behind the various gates (and some more) in \Cref{tb:dgn-family} as follows:

$1.$ The most general gate is called the \emph{soft-GaLU} gate denoted by $\N(\Tg,\beta;\Tw)$. Here, the gating values and hence the path activation levels are decided by $\Tg_t\in\R^{d_{net}}$, and the path strengths are decided by $\Tw_t\in \R^{d_{net}}$. This network has $2d_{net}$ parameters. 

$2.$ The standard DNN with ReLU gating is denoted by $\N(\Theta_t,\infty;\Theta_t)$, where $\infty$ signifies that the outputs are $0/1$ (see \Cref{tb:dgn-parameterised}). Here, both the gating (and hence path activation levels) and the path strengths are decided by the same parameter namely $\Theta_t\in\R^{d_{net}}$.

$3.$ $\N(\Theta_t,\beta;\Theta_t)$ is a DNN with what we call the \emph{soft-ReLU} gates, where the gating values are in $(0,1+\epsilon)$ instead of $0/1$. Here too, like the standard ReLU networks, both the gating values and the path strengths are decided by $\Theta_t\in\R^{d_{net}}$.

$4.$ $\N(\Tg_{\dagger}, \infty;\Tw_t)$ is what we call a GaLU-frozen DGN, where the gating parameters $\Tg\in \R^{d_{net}}$ are initialised but not trained. 

$5.$ $\N(\Tg_t,\beta;\Tw_{\dagger})$ is a network where only the gating parameters $\Tg_t\in \R^{d_{net}}$ are trainable and the parameters which dictate the path strengths namely $\Tw$ are initialised by not trained.

\textbf{Gradient of gate adaptation}

$\bullet$ \textbf{Fixing ReLU artefact:} We refer to the function $\chi_\epsilon(v)$ in \Cref{tb:dgn-parameterised} as the \emph{soft} gating function. Note that, the NTF matrix has two components given by $\Psi_t(m,s)={\phi_{x_s,\G_t }^\top\frac{\partial w_{t}} {\partial \theta(m)}+ \frac{\partial \phi_{x_s,\G_t }^\top}{\partial \theta(m)} w_{t}}$. In the case of ReLU activation, the gating values are either $0/1$, the activation levels are also $0/1$ and hence their derivative is $0$. In contrast, the `soft-gating' is differentiable, and hence if follows that $\frac{\partial \phi_{x_s,\G_t }^\top}{\partial \theta(m)} w_{t}\neq 0$, which is also accounted in the analysis.
 
$\bullet$ \textbf{For soft-GaLU:} Since there are two set of parameters (total $2d_{net}$) $\Psi_t^\top=[{\Psi^w}^\top_t,{\Psi^a}^\top_t]$ is a $n\times 2d_{net}$, we have $K_t=K^w_t+K^a_t$,  where $K^w_t={\Psi^w_t}^\top \Psi^w_t$, and $K^a_t={\Psi^a_t}^\top \Psi^a_t$.

$\bullet$ \textbf{For soft-ReLU:} $K_t={K^w_t}+{K^a_t}+{\Psi^w_t}^\top {\Psi^a_t}+{\Psi^a_t}^\top {\Psi^w_t}$.

\begin{definition} For a soft-GaLU DGN, using any $i\in[d_{in}]$, define $\delta(s,s')\stackrel{def}= \underset{{p\rsa i}}{\sum} \sum_{m=1}^{d_{net}}\frac{\partial A_{\Tg_0}(x_s,p)}{\partial \tg(m)} \frac{\partial A_{\Tg_0}(x_{s'},p)}{\partial \tg(m)}$.
\end{definition}

\begin{lemma} Under Assumptions~\ref{assmp:mainone},~\ref{assmp:maintwo}, in soft-GaLU networks we have: (i) $\E{K_0}=\E{K^w_0}+\E{K^a_0}$, 
 (ii) $\E{K^w_0}=\sigma^{2(d-1)} (x^\top x)\odot \lambda$,  (iii) $\E{K^a_0}=\sigma^{2d}  (x^\top x)\odot \delta$
\end{lemma}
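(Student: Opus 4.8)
The plan is to expand each of the two blocks of $K_0$ through the path/feature identity \eqref{eq:featstrength}, take expectations, and combine two facts: by \Cref{assmp:maintwo} the gating parameters $\Tg_0$ and the weight parameters $\Tw_0$ are independent, so expectations of products of activation terms with strength terms factor; and by \Cref{assmp:mainone} each per-layer weight is i.i.d.\ symmetric on $\{-\sigma,+\sigma\}$, so a parity argument kills every off-diagonal pair of paths. Part (i) is a matter of definition: the soft-GaLU network carries the disjoint parameter blocks $\Tg$ and $\Tw$, so the NTF matrix stacks as $\Psi_0=[{\Psi^w_0};{\Psi^a_0}]$ and therefore $K_0={\Psi^w_0}^\top\Psi^w_0+{\Psi^a_0}^\top\Psi^a_0=K^w_0+K^a_0$; (i) follows by linearity of expectation.

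\textbf{Part (ii).} From $\hat y_0(x_s)=\sum_p x(p(0),s)\,A_{\Tg_0}(x_s,p)\,w_0(p)$, differentiating in a weight coordinate $\theta^w(m)$ treats $A_{\Tg_0}(x_s,p)$ as constant and recovers the path sensitivity $\varphi_{0,p}(m)$ of \Cref{lm:sigwire}, so $\Psi^w_0(m,s)=\sum_p x(p(0),s)A_{\Tg_0}(x_s,p)\varphi_{0,p}(m)$. Forming $\sum_m\Psi^w_0(m,s)\Psi^w_0(m,s')$ produces a double sum over paths $(p_1,p_2)$; taking expectations, independence factors out $\E{\ip{\varphi_{0,p_1},\varphi_{0,p_2}}}$, which by \Cref{lm:pathdot} vanishes for $p_1\neq p_2$ and equals $d\sigma^{2(d-1)}$ for $p_1=p_2$. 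The surviving diagonal sum is $\sum_p x(p(0),s)x(p(0),s')\E{A_{\Tg_0}(x_s,p)A_{\Tg_0}(x_{s'},p)}$; since $A_{\Tg_0}(\cdot,p)$ does not involve $p(0)$, the sum over $p(0)\in[d_{in}]$ rebuilds $x^\top x$ and the remaining coordinates give $\lambda$ (the $\Tg_0$-conditional counterpart of the $\lambda_0$ of \Cref{th:dgnexp}), so that $\E{K^w_0}=\sigma^{2(d-1)}(x^\top x)\odot\lambda$ up to the overall scalar inherited from \Cref{th:dgnexp}.

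\textbf{Part (iii).} Differentiating $\hat y_0(x_s)$ instead in a gating coordinate $\tg(m)$ treats $w_0(p)$ as constant, so $\Psi^a_0(m,s)=\sum_p x(p(0),s)\big(\partial A_{\Tg_0}(x_s,p)/\partial\tg(m)\big)w_0(p)$, and $\sum_m\Psi^a_0(m,s)\Psi^a_0(m,s')$ is again a double path-sum whose expectation factors into $\E{w_0(p_1)w_0(p_2)}$ times the expected inner product of the activation-gradients. The new ingredient, the strength-analogue of \Cref{lm:pathdot}, is that $w_0(p)=\prod_{l=1}^d\Tw_0(l,p(l-1),p(l))$ is a product of $d$ \emph{distinct} weights each $\pm\sigma$, so $w_0(p_1)w_0(p_2)$ carries every weight variable to an even power iff $p_1$ and $p_2$ use the same edge in every layer, i.e.\ $p_1=p_2$; hence $\E{w_0(p_1)w_0(p_2)}=\sigma^{2d}\mathbbm{1}\{p_1=p_2\}$. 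The diagonal sum $\sigma^{2d}\sum_p x(p(0),s)x(p(0),s')\,\E{\sum_m(\partial_{\tg(m)}A_{\Tg_0}(x_s,p))(\partial_{\tg(m)}A_{\Tg_0}(x_{s'},p))}$ then splits, via the $p(0)$-independence of $A_{\Tg_0}$, into $x^\top x$ combined (Hadamard) with $\delta$, giving $\E{K^a_0}=\sigma^{2d}(x^\top x)\odot\delta$.

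\textbf{Main obstacle.} The work is bookkeeping rather than conceptual. The delicate checks are the two ``orthogonality'' identities — $\E{\ip{\varphi_{0,p_1},\varphi_{0,p_2}}}=d\sigma^{2(d-1)}\mathbbm{1}\{p_1=p_2\}$ (taken verbatim from \Cref{lm:pathdot}) and $\E{w_0(p_1)w_0(p_2)}=\sigma^{2d}\mathbbm{1}\{p_1=p_2\}$ — which must be verified for path pairs that agree on some layers but diverge on others (a weight shared in one layer but not another, so one checks no accidental cancellation leaves an odd power); and confirming that the factorization of expectations across the $\Tg_0$- and $\Tw_0$-blocks is exactly what \Cref{assmp:maintwo} grants. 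Reindexing the double path-sum onto its diagonal and extracting $x^\top x$ via the $p(0)$-independence of $A_{\Tg_0}$ are then routine.
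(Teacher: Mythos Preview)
Your proposal is correct and follows essentially the same approach as the paper. The paper's own proof is a one-liner: ``Follows from Lemma~\ref{lm:pathdot}, and noting that $\Tg_0$ and $\Tw_0$ are iid,'' and your expansion spells out exactly those two ingredients---the path-orthogonality from \Cref{lm:pathdot} (and its full-strength analogue $\E{w_0(p_1)w_0(p_2)}=\sigma^{2d}\mathbbm{1}\{p_1=p_2\}$) together with the $\Tg_0$--$\Tw_0$ independence from \Cref{assmp:maintwo}. Your flag ``up to the overall scalar inherited from \Cref{th:dgnexp}'' is apt: the constant $d$ appearing in \Cref{th:dgnexp} and \Cref{lm:pathdot}(ii) is silently absorbed (or dropped) in the lemma's stated form of $\E{K^w_0}$, so the discrepancy you noticed is in the paper's statement, not in your argument.
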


\textbf{Adaptable gates generalise better:} In all experiments below, we use step-size $\alpha=1e^{-4}$, $w=100,d=6$ and we use \emph{RMSprop} to train.
\comment{
\FloatBarrier
\begin{figure}[h]
\resizebox{\columnwidth}{!}{
\begin{tabular}{c}
\includegraphics[scale=0.1]{figs/allnet-train.png}
\end{tabular}
}
\caption{Shows the training performance of the $4$ different networks in \textbf{Experiment $3$} with $w=100$, $d=6$.} 
\label{fig:adapt-4net-train}
\end{figure}
}

$\bullet$ \textbf{Experiment $3$:} On `Binary'-MNIST, we train four different networks ($w=100$, $d=6$), namely, $\N(\Tg_t,\beta=4;\Tw_t)$ (soft-GaLU), $\N(\Theta_t,\beta=4;\Theta_t)$ (soft-ReLU), $\N(\Theta_t,\infty;\Theta_t)$ (ReLU), and $\N(\Tg_{\dagger}, \infty;\Tw_t)$ (GaLU with frozen gates). We observe that the $3$ networks with adaptable gates generalise better the GaLU network with frozen gates as shown in \Cref{fig:adapt}.
\comment{
\FloatBarrier
\begin{figure}[h]
\resizebox{\columnwidth}{!}{
\begin{tabular}{c}
\includegraphics[scale=0.1]{figs/allnet-gen.png}

\end{tabular}
}
\caption{Shows the generalisation performance of the $4$ different networks in \textbf{Experiment $3$} with $w=100$, $d=6$.} 
\label{fig:adapt-4net-gen}
\end{figure}
}

$\bullet$\textbf{Experiment $4$:} We train a frozen ReLU network $\N(\Tg_{\dagger},\infty;\Tw_t)$, where $\Tg_0=\Tw_0$ (weights chosen according to Assumption~\ref{assmp:maintwo}), and a standard ReLU network $\N(\Theta_t,\infty;\Theta_t )$ in which the gates adapt. We observe that generalisation is better when the gates adapt (see right most plot in \Cref{fig:adapt}).  
\comment{
\FloatBarrier
\begin{figure}[h]
\resizebox{\columnwidth}{!}{
\begin{tabular}{c}
\includegraphics[scale=0.1]{figs/relu-froze-no-froze.png}
\end{tabular}
}
\caption{ \textbf{Experiment $4$} with $w=100$, $d=6$. Generalisation performance (dotted lines) adaptable gates is better than frozen gates. The training performance is shown in bold lines. The plots are averaged over $5$ runs. } 
\label{fig:adapt-relu-frozen}
\end{figure}
}

$\bullet$ \textbf{Experiment $5$ (Lottery Ticket):} We train $\N(\Theta_t,\infty;\Theta_t)$ (a standard DNN with ReLU gates) till time $T=100$ epochs. We then use the weights to initialise $\Tg_0=\Theta_T$ for the network $\N(\Tg_{\dagger},\infty;\Tw_t)$ and train $\Tw_t$ (initialised independently) and we call this GaLU with learned gates. We also consider the case of GaLU network with non-learned gates, where we train $\N(\Tg_{\dagger},\infty;\Tw_t)$ where $\Tg_0$ and $\Tw_0$ is initialised according to Assumptions~\ref{assmp:mainone},~\ref{assmp:maintwo}. We observe that the learned gates generalise better than the non-learned case (see third plot from the left in \Cref{fig:adapt}). This shows that the real lottery is in the gates.
\comment{
\FloatBarrier
\begin{figure}[h]
\resizebox{\columnwidth}{!}{
\begin{tabular}{c}
\includegraphics[scale=0.1]{figs/relu-froze-no-froze.png}
\end{tabular}
}
\caption{ \textbf{Experiment $5$} with $w=100$, $d=6$. Generalisation performance (dotted lines) of learned gates is better than `non'-learned gates. The training performance is shown in bold lines. The plots are averaged over $5$ runs. } 
\label{fig:adapt-galu-relu}
\end{figure}
}

$\bullet$\textbf{Experiment $6$ (Lottery Ticket):} We consider $\N(\Tg_t,\infty;\Tw_{\dagger})$, where the weights corresponding to the strengths $\Tw$ are frozen, but the weights $\Tg$ that parameterise the gates are trained. We observed  a $56\%$ test performance in CIFAR-10 just by tuning the gates.  For this experiment, we used a \emph{convolutional} neural network, of the following architecture: input layer is $(32,32,3)$, followed by $conv(3, 3) : {64, 64, 128, 128}$, followed by $Flatten(), 256, 256, 10$.

$\bullet$ \textbf{Measure for generalisation:} In order to look for a possible explanation for the better generalisation performance (in the case of learned/adaptable gates), we plot $\nu_t=y^\top (H_t)^{-1}y$ (see \Cref{fig:adapt-norm}), for the following choices of $H_t$, namely  i) $H_t=\widehat{K}_t$,  $H_t=K^a_t$ and $H_t=\widehat{K^a_t}$,  Experiment $3$. We observe that, $\nu_t$ is smaller for the learned/adaptable gates when compared respectively to the case of non-learned/frozen gates. The behaviour of $\nu_t$ points to the fact that, when gates adapt, the underlying Gram matrices align their eigen-space in accordance with the labeling function.

\comment{
\begin{figure}
\resizebox{\columnwidth}{!}{
\begin{tabular}{c}
\includegraphics[scale=0.1]{figs/galu-learn-no-learn-knorm.png}
\end{tabular}
}
\caption{Shows $\nu_t=y^\top (H_t)^{-1}y$ for $H_t=K_t$.}
\label{fig:adapt-norm-1}
\end{figure}

\begin{figure}
\resizebox{\columnwidth}{!}{
\begin{tabular}{c}
\includegraphics[scale=0.1]{figs/relu-froze-no-froze-knorm.png}
\end{tabular}
}
\caption{Shows $\nu_t=y^\top (H_t)^{-1}y$ for $H_t=\widehat{K}_t$.}
\label{fig:adapt-norm-2}
\end{figure}

\begin{figure}
\resizebox{\columnwidth}{!}{
\begin{tabular}{c}
\includegraphics[scale=0.18]{figs/activation-gram-unnorm.png}
\end{tabular}
}
\caption{Shows $\nu_t=y^\top (H_t)^{-1}y$ for $H_t=K^a_t$. Here, $K^a_t$ is the Gram matrix of activations in the soft-GaLU network.}
\label{fig:adapt-norm-3}
\end{figure}

\begin{figure}
\resizebox{\columnwidth}{!}{
\begin{tabular}{c}
\includegraphics[scale=0.18]{figs/activation-gram-norm.png}
\end{tabular}
}
\caption{Shows $\nu_t=y^\top (H_t)^{-1}y$ for $H_t=\widehat{K^a_t}$. Here, $K^a_t$ is the Gram matrix of activations in the soft-GaLU network.}
\label{fig:adapt-norm-4}
\end{figure}
}

$\bullet$ \textbf{Experiment $7$:} We train convolution networks with soft-ReLU gates (i.e., $\chi_{\epsilon}(v)$) for various values of $\beta$ and $\epsilon$ on CIFAR-10 dataset (the architecture is same as the one described in \textbf{Experiment $6$}).  For moderately high values of $\beta$ we obtain generalisation performance of $72\%$ which is comparable to what we obtain for DNNs (identical architecture, and hyper-parameters) with standard ReLU activation.
\FloatBarrier
\begin{table}[h]
\centering
\resizebox{\columnwidth}{!}{
\begin{tabular}{|c|c|c|}\hline
Network
&Training Accuracy
&Test Accuracy\\\hline
Deep Linear Network
&0.5042
&0.3939\\\hline
Relu
&0.99
&0.71\\\hline

$\beta=$1, $\epsilon=$0.1
&0.99
&0.59\\\hline
$\beta=$2, $\epsilon=$0.1
&0.99
&0.64\\\hline
$\beta=$4, $\epsilon=$0.1
&0.99
&0.68\\\hline
$\beta=$8, $\epsilon=$0.1
&0.99
&0.71\\\hline
$\beta=$12, $\epsilon=$0.1
&0.99
&0.71\\\hline
$\beta=$16, $\epsilon=$0.1
&0.99
&0.72\\\hline
$\beta=$20, $\epsilon=$0.1
&0.99
&0.72\\\hline
$\beta=$24, $\epsilon=$0.1
&0.99
&0.72\\\hline
$\beta=$1, $\epsilon=$0.4
&0.99
&0.56\\\hline
$\beta=$2, $\epsilon=$0.4
&0.99
&0.63\\\hline
$\beta=$4, $\epsilon=$0.4
&0.99
&0.68\\\hline
$\beta=$8, $\epsilon=$0.4
&0.99
&0.71\\\hline
$\beta=$12, $\epsilon=$0.4
&0.99
&0.71\\\hline
$\beta=$16, $\epsilon=$0.4
&0.99
&0.71\\\hline
$\beta=$20, $\epsilon=$0.4
&0.99
&0.71\\\hline
$\beta=$24, $\epsilon=$0.4
&0.99
&0.71\\\hline
\end{tabular}
}
\caption{Shows performance of various networks for CIFAR 10 dataset. The results are averaged over $5$ runs. The reported results are mean of the best performance obtained in each run. The optimiser used was \emph{Adam} with step-size equal to $3e^{-4}$.}
\label{tb:cifar}
\end{table}

\subsection{Preliminary analysis of gate adaptation:} 
Recall that $\G_t\stackrel{def}=\{G_{x_{s},t}(l,i) \in [0,1], \forall s\in[n],l\in[d-1],i\in[w]\}$. We now define the following:

$\bullet$ \textbf{Active Gates:} For an input $x_{s}\in \R^{d_{in}}$, and a threshold value $\tau_{\A}\in (0,1+\epsilon)$, define $\G_t^{\A}(x_s,\tau_{\A})\stackrel{def}=\left\{G_{x_s,t}(l,i)\colon G_{x_s,t}(l,i)> \tau_{\A}, l\in[d-1],i\in[w]\right\}$. These are the gates that are \emph{on} (i.e., more than threshold $\tau_{\A}$) for input $x_s\in\R^{d_{in}}$.

$\bullet$ \textbf{Sensitive Gates:} For an input $x_{s}\in \R^{d_{in}}$, and a threshold value $\tau_{\S}>0$, define $\G_{t}^{\S}(x_s,\tau_{\S})\stackrel{def}=\cup_{m=1}^{d_{net}}\left\{G_{x_s,t}(l,i)\colon \left|\frac{\partial G_{x_s,t}(l,i)}{\partial \tg(m)}\right| >\tau_{\S},l\in[d-1],i\in[w] \right\}$. These are set of gates that are sensitive to changes in anyone of the $\tg(m),m\in[d_{net}]$ tunable parameters that control the gates.

$\bullet$ \textbf{Relation between sensitive and active gates:} From the nature of the soft-gating function $\chi_{\epsilon}(v)$ it follows that for any given $\tau_{\A}\in(0,1+\epsilon)$, it follows that $\G_t^{\A}(x_s,\tau_{\A})\cap \G_t^{\S}(x_s,\tau_{\S})=\emptyset,\forall \tau_{\S}>\frac{d \chi_{\epsilon}(v)}{d v}|_{v=\chi^{-1}_{\epsilon}(\tau_{\A})}$. Also, note that as $\tau_{\A}\ra (1+\epsilon)$, $\tau_{\S}\ra 0$.

$\bullet$ \textbf{Sensitivity of Activations:} For a path $p$, and a gating parameter $\tg(m),m\in[d_{net}]$ we have $\frac{\partial A_{\Tg_t}(x_s,p)}{\partial \tg(m)}$ to be equal to
\begin{align}\label{eq:sensitivity}
\sum_{l=1}^{d-1} \Big(\frac{\partial G_{x_s,\Tg_t}(l,p(l))}{\partial \tg(m)} \Big)\Big(\Pi_{l'\neq l} G_{x_s,\Tg_t}(l',p(l'))\Big)
\end{align}

In what follows we assume that $\tau_{\S}>\frac{d \chi_{\epsilon}(v)}{d v}|_{v=\chi^{-1}_{\epsilon}(\tau_{\A})}$.

$\bullet$ \textbf{Active Sub-Network:} Which paths are active for input $x_s\in\R^{d_{in}}$?\quad Choose a threshold $\tau_{\A}$ close to $1$. The paths that pass through gates in $\G_t^{\A}(x_s,\tau_{\A})$ do not matter much in gate adaptation because they are already \emph{on}, and are responsible for holding the memory for input $x_s\in\R^{d_{in}}$. In particular, \eqref{eq:sensitivity} evaluates close to $0$ for such paths because $\left|\frac{\partial G_{x_s,\Tg_t}(l,p(l))}{\partial \tg(m)}\right|<\frac{d \chi_{\epsilon}(v)}{d v}|_{v=\chi^{-1}_{\epsilon}(\tau_{\A})}$.

$\bullet$ \textbf{Sensitive Sub-Network:} Which paths are learning for input $x_s\in\R^{d_{in}}$? \quad Those paths that have one gate from $\G^{\S}_t(x_s,\tau_{\S})$ and the rest of the $(d-2)$ gates from the set  $\G^{\A}_t(x_s,\tau_{\A})$. For such paths, the magnitude of at least one of the $(d-1)$ terms in \eqref{eq:sensitivity} will be greater than $\tau_{\S}(\tau_{A})^{(d-2)}$, and the rest of the $(d-2)$ terms will contain a term whose magnitude is less than $\frac{d \chi_{\epsilon}(v)}{d v}|_{v=\chi^{-1}_{\epsilon}(\tau_{\A})}$ component and hence will contribute less to the summation.

$\bullet$ \textbf{Role of $\beta$} for now is limited to an analytical convenience, especially to address the non-differentiability artefact in ReLU gates. The ideas is that as $\beta\ra\infty$, the analysis applies more sharply to networks with ReLU gates.

\section{Understanding the role of convolutions and pooling operations}\label{sec:conv}
In this section, we will use the frameworks of DGNs and ``path-view'' to obtain insights about (i) convolutional layers and (ii) pooling: global average pooling\footnote{The arguments can be extended to $\max$-pooling with technical modifications.}. In this section, we continue to be in the DGN setup, i.e., we will have separate parameterisations $\Tg$ and $\Tw$, and assume that Assumptions~\ref{assmp:mainone}, \ref{assmp:maintwo} hold. However, we impose additional restrictions to account for the presence of convolutional and pooling layers, which, we describe below.

\textbf{Circular Convolutional layers:}

$1.$ We assume that, the initial $0<L<d$ layers are convolutional layers. In particular, each layer uses a $1$-dimensional kernel of width $0<\hat{w}<d_{in}$, and the output of each layer is a $d_{in}$-dimensional vector.

$2.$ We consider circular convolutional operations instead of zero padding, i.e., during the convolution operation, say index $i$ exceeds $d_{in}$ then it will be considered as $i-d_{in}$, and in the case when a negative index is required, i.e., if index $i<0$ is needed, then $d_{in}+i$ will be used instead. We illustrate this circular convolution with the help of \Cref{fig:circconv}, wherein, $\hat{w}=2$, $d_{in}=3$. Here, $\theta^{(l)}(i),l=1,\ldots,L-1, i=1,2$ are the weights, and the final layer weight $\theta^{(L)}=\left[\frac{1}{d_{in}},\ldots, \frac{1}{d_{in}}\right]^\top\in \R^{d_{in}}$ in the case of global average pooling. Note that, in \Cref{fig:circconv}, we have used only one network, and we have also used a simpler and different notation for the weights: this is because, in DGN (with circular convolutions), both the gating network parameterised by $\Tg$ and the weights network parameterised by $\Tw$ will have identical architecture, and in order to explain just the circular convolution alone more clearly, we have used a simpler notation for the weights and have left the gating information unspecified in \Cref{fig:circconv}.

\begin{figure}
\centering
\includegraphics[scale=0.25]{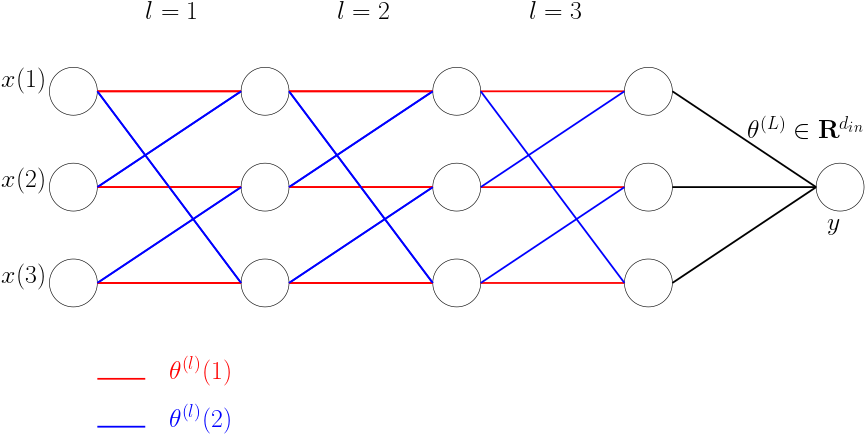}
\caption{Shows a circular convolutional network with $d_{in}=3$ and kernel size $\hat{w}=2$. Note that there are only $8$ unique path strengths in this example (in the case of global average pooling).}
\label{fig:circconv}
\end{figure}

\textbf{Path Sharing:} With the understanding of circular convolution in the background, we now investigate the similarity of two inputs $x_s\in \R^{d_{in}}$ and $x_{s'}\in \R^{d_{in}}$ after they pass through $L$ convolutional layers. To be specific, let $x_s(L)\in \R^{d_in}$ and $x_{s'}(L)\in \R^{d_{in}}$ be the outputs obtained after the $L$ convolutional layers. Note that $x_s(L)=(x_s(L,i),i\in[d_{in}])\in \R^w$ is a $d_{in}$-dimensional vector with $i=1,\ldots,d_{in}$ components, wherein, the $i^{th}$ component is obtained by circular convolution using a kernel of size $0<\hat{w}<d_{in}$. Further, we restrict our attention to the first $L$ layers which perform the convolution operations. We are interested in investigating the following: 
\begin{align}
\E{\ip{x_s(L),x_{s'}(L) }}=\sum_{i=1}^{d_{in}} \E{x_s(L,i)x_{s'}(L,i)}
\end{align}
Given the randomised and symmetric nature of the weight initialisation, without loss of generality, it is sufficient to study $\E{x_s(L,1)x_{s'}(L,1)}$, i.e., it is enough to consider the case of $L$ convolutions with kernel of size $\hat{w}$ followed by a global average pooling. We now make the following observations:

$1.$ There are $p=1,\ldots,\hat{P}=d_{in}\hat{w}^{d-1}$ paths.

$2.$ There are $k=1,\ldots,\hat{B}=\hat{w}^{d-1}$ unique path strengths. This is due to the fact that the same path strength repeats $d_{in}$ times. For instance, in \Cref{fig:circconv}, the path strength $\theta^{1}(1)\theta^{1}(2)\theta^{1}(3)\frac{1}{d_{in}}$ repeats $3$ times.

$3.$ Paths can be grouped into bundles $b_k,k\in[\hat{w}^{d-1}]$, wherein, bundle $b_k$ comprises of $d_{in}$ paths, all of which have the same path strength. Without loss of generality, $b_k$ comprises of paths $(k-1)d_{in}+1,\ldots, kd_{in}$.

$4.$ The path strength $w_t=(w_t(b_1),\ldots, w_t(b_{\hat{B}}))\in \R^{\hat{P}}$, where $w_t(b_k)=(w_t(p),p=(k-1)d_{in}+1,\ldots,kd_{in})\in \R^{d_{in}}$. 

$5.$ The output $x_s(L,1)=\phi^\top_{x_s,\G_t} w_t$.

\begin{lemma}\label{lm:invariance}
At $t=0$, under Assumptions~\ref{assmp:mainone},\ref{assmp:maintwo}, convolutional layers with global average pooling at the end causes translational invariance.
\begin{align*}
&\E{x_s(L,1)x_{s'}(L,1)}\\&=\frac{\sigma^{2(d-1)}}{d^2_{in}}\sum_{k=1}^{\hat{B}} \sum_{p_1,p_2\in b_k}  \Big( x(p_1(0),s) A(x_s,p_1)\\
&\quad\quad \quad\quad \quad\quad x(p_2(0),s') A(x_{s'},p_2) \Big)
\end{align*}
\end{lemma}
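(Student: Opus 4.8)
The plan is to unwind the convolutional output in the path language and then isolate the only randomness that survives the expectation, namely the product of two path strengths. First I would use observation $5$ to write $x_s(L,1)=\phi_{x_s,\G_0}^\top w_0$ with $\phi_{x_s,\G_0}(p)=x(p(0),s)A(x_s,p)$, and expand
\begin{align*}
&x_s(L,1)\,x_{s'}(L,1)\\
&\quad=\sum_{p_1,p_2=1}^{\hat P}\phi_{x_s,\G_0}(p_1)\phi_{x_{s'},\G_0}(p_2)w_0(p_1)w_0(p_2).
\end{align*}
Conditioning on the gating network and invoking Assumption~\ref{assmp:maintwo}, the weights $\Theta_0$ are still i.i.d.\ symmetric on $\{-\sigma,+\sigma\}$ and independent of the activations $A(x_s,p)$; so taking expectations and pulling the (deterministic, given the gates) activation and input factors out reduces the claim to evaluating $c(p_1,p_2):=\E{w_0(p_1)w_0(p_2)}$ for each ordered pair of paths.

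The heart of the proof is then the dichotomy $c(p_1,p_2)=\sigma^{2(d-1)}/d_{in}^2$ when $p_1,p_2$ lie in a common bundle, and $c(p_1,p_2)=0$ otherwise. By observation $2$ each path strength is a product of one $\pm\sigma$ kernel entry per weight layer times the $1/d_{in}$ pooling weight, i.e.\ $w_0(p)=\pm\,\sigma^{d-1}/d_{in}$, so $w_0(p)^2=\sigma^{2(d-1)}/d_{in}^2$ deterministically. If $p_1$ and $p_2$ are circular shifts of one another --- equivalently, in the same bundle $b_k$ --- then by the shared-kernel property of circular convolution they select the \emph{same} kernel entry at every layer, whence $w_0(p_1)=w_0(p_2)$ identically and $c(p_1,p_2)=\sigma^{2(d-1)}/d_{in}^2$. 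If instead $p_1\in b_{k_1}$ and $p_2\in b_{k_2}$ with $k_1\neq k_2$, their offset sequences disagree at some layer $l^\star$, so $w_0(p_1)w_0(p_2)$ carries the product of two distinct kernel entries of layer $l^\star$; under Assumption~\ref{assmp:mainone} these are independent, mean zero, and independent of the remaining factors, so $c(p_1,p_2)=0$ --- the same cancellation as in \Cref{lm:pathdot}. Substituting $c(p_1,p_2)$ back and regrouping the sum over all path pairs into a sum of double sums over the $\hat B$ bundles produces exactly the stated identity.

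Translational invariance is then read off the formula: it is a bundle-wise sum of $\sum_{p_1,p_2\in b_k}(\cdot)$, and the natural cyclic action of the $d_{in}$ input coordinates (for which circular convolution is equivariant) permutes the $d_{in}$ paths inside each $b_k$ while fixing the collection of bundles, so the expression --- hence $\E{x_s(L,1)x_{s'}(L,1)}$ --- is unchanged under cyclic shifts of $x_s$ and of $x_{s'}$.

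The step I expect to be the main obstacle is this bundle dichotomy for $c(p_1,p_2)$: one has to argue carefully from the weight-sharing of circular convolution that within a bundle the path strengths are \emph{identically} (not merely in expectation) equal, while across bundles the two offset sequences must differ at some layer, contributing an independent mean-zero factor that annihilates the expectation. Everything else is the signal/wire bookkeeping already used in \Cref{lm:sigwire} and \Cref{lm:pathdot}: expanding $\phi^\top w$, splitting the gate and weight expectations via Assumption~\ref{assmp:maintwo}, and re-indexing paths by bundles.
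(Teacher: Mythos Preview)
Your proposal is correct and follows essentially the same route as the paper: write $x_s(L,1)=\phi_{x_s,\G_0}^\top w_0$, use the gate--weight independence to reduce to $M=\E{w_0 w_0^\top}$, establish the bundle dichotomy $M(p_1,p_2)\in\{0,\sigma^{2(d-1)}/d_{in}^2\}$, and substitute back. The paper merely states the two observations about $M$ without justification, whereas you actually argue why within-bundle strengths coincide (weight sharing) and why cross-bundle products vanish (an independent mean-zero kernel factor at some layer); your concluding translational-invariance reading is exactly what the paper records in the Remark immediately after the lemma.
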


\textbf{Remark:} Now, for $i\in\{0,\ldots, d_{in}-1\}$, let $z^{(i)}\in \R^{d_{in}}$ be the clockwise rotation of $z\in \R^{d_{in}}$ by $i$ co-ordinates, and let $x^{(i)}\in \R^{d_{in}\times n}$ be the data matrix obtained by clockwise rotation of the columns of the data matrix $x\in \R^{d_{in}\times n}$ by $i$ co-ordinates. Then, we have

\begin{align*}
&\E{x_s(L,1)x_{s'}(L,1)}\\
&=\frac{\sigma^{2(d-1)}}{d^2_{in}}\sum_{k=1}^{\hat{B}} \sum_{i=1}^{d_{in}} \sum_{p\in b_k}   \Big(x(p(0),s) A(x,p) \\ 
&\quad\quad \quad\quad \quad\quad x^{(i)}(p(0),s') A(x^{(i)}_{s'},p) \Big)
\end{align*}
The term $\sum_{p\in b_k}  x(p(0),s) A(x,p) x^{(i)}(p(0),s') A(x^{(i)}_{s'},p)$ is translation invariant.

\comment{
\textbf{Claim $2$:} At $t=0$, under Assumptions~\ref{assmp:mainone},\ref{assmp:maintwo}, convolutional layers with $\max$-pooling at the end causes translational invariance.

\begin{proof}
The proof follows in a manner similar to \textbf{Claim $1$} made for the case of global average pooling. However, the technical challenge is the following: in the case of $\max$-pooling, only one of the $d_{in}$ paths connecting the $(L-1)^{th}$ layer to the output node is \emph{on}. This path connects the ``$\max$" node in the $(L-1)^{th}$ layer to the output node. This can be accounted in the calculations by setting the path strength to be $0$ for those paths that do not pass through the ``$\max$" node in the $(L-1)^{th}$ layer.  We make the following observations about $M$:

$1.$ For each bundle $b_k, k\in[\hat{B}]$, $\exists$ unique indices $i(k), j(k)\in [d_{in}]$ such that $M((k-1)d_{in}+i(k), (k-1)d_{in}+j(k))=\sigma^{2(d-1)}$, and rest of the entries of $M$ are $0$.

$2.$ $M(p_1,p_2)=\frac{\sigma^{2(d-1)}}{d^2_{in}}$, if $p_1$ and $p_2$ belong to the same bundle continues to hold trivially due to observation $1$ (of the current claim).

And by going through reductions similar to \textbf{Claim $1$}, we have

\begin{align*}
&\E{x_s(L,1)x_{s'}(L,1)}\\&=\phi^\top_{x_s,\G_0} M \phi^\top_{x_{s'},\G_0}\\
&=\sum_{p_1,p_2=1}^{\hat{P}} \Big(x(p_1(0),s) A(x_s,p_1) \\
&\quad\quad \quad\quad \quad\quad x(p_2(0),s') A(x_{s'},p_2) M(p_1,p_2)\Big)\\
&=\frac{\sigma^{2(d-1)}}\sum_{k=1}^{\hat{B}} \sum_{p_1,p_2\in b_k}  \Big( x(p_1(0),s) A(x_s,p_1)\\
&\quad\quad \quad\quad \quad\quad x(p_2(0),s') A(x_{s'},p_2) \Big)
\end{align*}
Now, for $i\in\{0,\ldots, d_{in}-1\}$, let $z^{(i)}\in \R^{d_{in}}$ be the clockwise rotation of $z\in \R^{d_{in}}$ by $i$ co-ordinates, and let $x^{(i)}\in \R^{d_{in}\times n}$ be the data matrix obtained by clockwise rotation of the columns of the data matrix $x\in \R^{d_{in}\times n}$ by $i$ co-ordinates. Then, we have

\begin{align*}
&\E{x_s(L,1)x_{s'}(L,1)}\\
&=\frac{\sigma^{2(d-1)}}{d^2_{in}}\sum_{k=1}^{\hat{B}} \sum_{i=1}^{d_{in}} \sum_{p\in b_k}   \Big(x(p(0),s) A(x,p) \\ 
&\quad\quad \quad\quad \quad\quad x^{(i)}(p(0),s') A(x^{(i)}_{s'},p) \Big)
\end{align*}
The term $\sum_{p\in b_k}  x(p(0),s) A(x,p) x^{(i)}(p(0),s') A(x^{(i)}_{s'},p)$ is translation invariant.

\end{proof}
}
\section{Related Work}
\cite{dudnn} show that in fully connected DNNs with $w=\Omega(poly(n)2^{O(d)})$, and in residual neural networks (ResNets) with $w=\Omega(poly(n,d))$ gradient descent converges to zero training loss. \cite{dudnn} claim to demystify the second part of what we called the depth phenomena (``why deeper networks are harder to train"), since, the dependence on the number of layers improves exponentially for ResNets. Our optimisation results are weaker than \cite{dudnn} in the sense that we consider only DGNs with decoupling assumptions. However, we show both parts of the depth phenomena, in particular why increasing depth till a point helps training. 

\comment{Further, the \emph{algebraic} nicety due to \emph{Hadamard} product decomposition of the Gram matrix is a useful take away. In addition, the connection to how the sub-network overlap is a conceptual gain. 
In comparison to \cite{dudln} the gain in our work is that, thanks to the path-view, we obtain a single expression for $\E{K_0}$ which can be applied to deep linear networks, GaLU networks and any networks whose gating values are known and fixed. Both \cite{dnn,dln} are analytically more involved, in that they provide guaranteed rates of converges with high probability, and in comparison, our work has stopped with the variance calculation.
}
In comparison to \cite{sss} who were the first to initiate the study on GaLU networks, we believe, our work has made significant progress. We introduced adaptable gates, and showed via experiments, that, gate adaptation is key in learning, thereby showing a clear separation between GaLU and ReLU networks. To support the claim, we have used idea from \cite{arora}, in that, we measure $\nu_t=y^\top {K_t}^{-1}y$ to show that the eigen spaces indeed align with respect to the labelling function.

In comparison to \cite{lottery}, we also show in our experiments that the winning lottery is in the gating pattern, which, in the case of ReLU networks is inseparable from the weights. However, our experiments show that the weights can be reinitialised if we have the learned gating pattern.

\section{Conclusion }
\comment{
In this paper, we looked at the gradient descent (GD) procedure to minimise the squared loss in deep neural networks. Prior literature \cite{dudnn} makes trajectory analysis (wherein the dynamics of the error terms are studied) to show that GD achieves zero training error. In this paper, we introduced to important conceptual novelties namely deep gated networks (DGNs) and path-view, to obtain additional insights about GD in the context of trajectory analysis. In particular, our theory threw light on i) the depth phenomena and ii) gate adaptation, i.e., the role played by the dynamics of the gates in generalisation performance.

The path-view lead following gains: (i) an explicit expression of information propagation in DGNs where in the input signal and the wires, i.e., the deep network itself are separated. This is unlike the conventional layer by layer approach, wherein, the input is lost in the hidden layers, (ii) explicitly identifying the role of sub-networks in training and generalisation of deep networks, so much so that, we can go so far as to say that the actual \emph{hidden features are in the paths and the sub-networks and not just the final layer output}, (iii) explicit identification of twin gradient flow, wherein, one component of the gradient flow to train the paths keeping the sub-network constant and the other component of the gradient takes care of learning the gating values.

We looked at various DGNs with adaptable gates and we observed  in experiments that the adaptable/learned gates generalise better than non-adapting/non-learned gates.  Based on our theory and experiments, we conclude that \emph{understanding generalisation would involve a study of gate adaptation}.
}
\comment{
In this paper, we introduced to important conceptual novelties namely deep gated networks (DGNs) and path-view, to obtain additional insights about gradient descent in deep learning. The path-view lead to the following gains: (i) an explicit expression of information propagation in DGNs (ii) explicitly identifying the role of sub-networks in training and generalisation of deep networks, (iii) explicit identification of twin gradient flow, wherein, one component of the gradient flow to train the path strengths keeping the sub-network constant and the other component of the gradient takes care of learning the gating values. Using the path-view and the DGNs, we showed  i) the depth helps is equivalent to whitening of data and increasing depth beyond degrades the spectrum of the Gram matrix at initialisation, and ii) gate adaptation, i.e., the role played by the dynamics of the gates is important for generalisation performance.

We looked at various DGNs with adaptable gates and we observed  in experiments that the adaptable/learned gates generalise better than non-adapting/non-learned gates.  Based on our theory and experiments, we conclude that \emph{understanding generalisation would involve a study of gate adaptation}.
}

In this paper, we introduced two important conceptual novelties namely deep gated networks (DGNs) and ``path-view", to obtain additional insights about gradient descent in deep learning. Using these two novel concepts, we achieved the following:

 (i) resolution to the depth phenomena for DGNs under decoupling assumption. In particular, our results showed that increasing depth is equivalent to whitening of data and increasing depth beyond a point degrades the spectrum of the Gram matrix at initialisation.
 
 (ii) each input example has a corresponding active sub-network, which are learned when the gates adapt.
 
 (iii) a preliminary theory to analyse gate adaptation. Our analysis points out to the presence of two complementary networks for each input example, one being the active sub-network which holds the memory for that input example and the other being the sensitivity sub-network of gates that are adapting.
 
(iv) we looked at various DGNs with adaptable gates and we observed  in experiments that the adaptable/learned gates generalise better than non-adapting/non-learned gates.  

Based on our theory and experiments, we conclude that :

(a) \emph{Hidden features are in the active sub-networks,} which are in turn decided by the gates.

(b) \emph{Understanding generalisation would involve a study of gate adaptation.}

\comment{
Let $\gamma>0$ be a threshold value, and let $G_{x_s,\Tg_t}(l,i)$ denote the gating value node $i$ in layer $l$. We say that the gate to be \emph{transitioning} for input $s\in[n]$, and weight $\tg(m),m\in[d_{net}]$ if
 \begin{align}
 \left|\frac{\partial G_{x_s,\Tg_t}(l,i)}{\partial \tg(m)}\right|>\gamma,
 \end{align}
 and define a gate to be \emph{flipped} otherwise. Note that,
\begin{align}\label{eq:sensitivepath}
\begin{split}
&\partial_{m}A_{\Tg_t}(x_s,p)=\partial_{m}\Pi_{l=1}^{d-1} G_{x_s,\Tg_t}(l,p(l))\\
&=\sum_{l=1}^{d-1} \partial_{m} G_{x_s,\Tg_t}(l,p(l)) \left(\Pi_{l'\neq l} G_{x_s,\Tg_t}(l',p(l'))\right)
\end{split}
\end{align}

\textbf{Remark:}

i) As $\beta\uparrow\infty$, the soft-ReLU gate resembles the ReLU gate. Thus for a given input example $s$, the gates whose pre-activation inputs have a large absolute value will be close to either $0$ or $1$, and one can always find a high enough $\beta$ such that their sensitivity to $\tg(m)$ is less than $\gamma$.

ii) For an input examples $s,s'\in[n]$, if a path $p$ is active (even for one of the inputs), i.e., $A(x_s,p)\approx 1$, then none of the gates in the path will be sensitive, and hence the magnitude contribution of such as path to the summation in $\delta$ is close to $0$.

iii) For an input examples $s,s'\in[n]$, consider a non-active path, such that all gates close to $1$ except for one of the gates (i.e., the right hand side of \eqref{eq:sensitivepath} is non-zero), which is transitioning. Such paths will make a significant contribution to $\delta$ term. We call the set of such paths the sensitive sub-network.

Based on the above discussion one can say  that a DGN with adaptable gates (which includes standard DNN with ReLU gates), at initialisation, has two kinds of sub-networks for every input example i) the active sub-network comprised of path for which $A(x_s,p)=1$\footnote{or $A(x_s,p)$  is above a given threshold value in the case of soft gates} and ii) the sensitive sub-network which is formed by the set of paths that are sensitive for a given input.
}
\bibliographystyle{plainnat}
\bibliography{refs}

\onecolumn
\begin{center}
Appendix/Supplementary Material
\end{center}
\comment{
\textbf{Errata:} 

\textbf{Errors:} The following are the errors in the submission file ``icml2020.pdf'':
\begin{enumerate}
\item The references to Theorem $6.3$  (page $3$, $2^{nd}$-column from left, line number $144$, and page $4$, $2^{nd}$ column from left, line number $207$) should be read as Theorem $2.2$. 
\item In Lemma $2.3$, the displayed equations in the main text (page $4$, $2^{nd}$ column from left, lines $173-174$) contain an extra $``d"$ term in the right hand side:

(i) $\mathbb{E}_p\left[\lambda_0(s,s)\right]=\bar{\lambda}_{self}=d(pw)^{d-1}$ 

(ii) $\mathbb{E}_p\left[\lambda_0(s,s')\right]=\bar{\lambda}_{cross}= d(p^2w)^{d-1}$,

\item The reference to Assumption $1$ (page $4$, left column, line $194$) is incorrect, it should instead be read as Assumption $2$.

\item The text, \textbf{Active Sub-Network}, in page $7$, $2^{nd}$ column from the left, line $373$, should be read as \textbf{Sensitive Sub-Network}.
\end{enumerate} 

\textbf{Corrections (without editing main latex-source):} We are submitting the full paper including the supplementary material. The following are the corrections in the supplementary file ``icml2020$\_$supplementary.pdf'':
\begin{enumerate}
\item At the time of submission, incorrect references to Theorem $6.3$ occurred due to the use of the same label for Theorem $2.2$ in the main body and Theorem $6.3$ in the supplementary material. We have fixed this by removing the label in the supplementary material. 
\item In Lemma $2.3$, the displayed equations should be read as:

(i) $\mathbb{E}_p\left[\lambda_0(s,s)\right]=\bar{\lambda}_{self}=(pw)^{d-1}$ 

(ii) $\mathbb{E}_p\left[\lambda_0(s,s')\right]=\bar{\lambda}_{cross}= (p^2w)^{d-1}$,

this error has been fixed in the supplementary material in Lemma~\ref{lm:dgn-fra-restate}.
\end{enumerate} 

\textbf{Missing Information} 

The architecture used in \textbf{Experiment $6$} and \textbf{Experiment $7$} is: input layer $(32,32,3)$, followed by $(3,3)$ convolutional layers of channels $64,64,128,128$, followed by fully-connected layers of width $256,256,10$. 

\FloatBarrier
\begin{table}[h]
\centering
\begin{tabular}{|c|c|c|}\hline
Network
&Training Accuracy(mean of best)
&Test Accuracy(mean of best)\\\hline
Deep Linear Network
&0.5042
&0.3939\\\hline
Relu
&0.9987
&0.7132\\\hline

$\beta=$1, $\epsilon=$0.1
&0.9948
&0.5969\\\hline
$\beta=$2, $\epsilon=$0.1
&0.9961
&0.6401\\\hline
$\beta=$4, $\epsilon=$0.1
&0.9978
&0.6841\\\hline
$\beta=$8, $\epsilon=$0.1
&0.9985
&0.7100\\\hline
$\beta=$12, $\epsilon=$0.1
&0.9987
&0.7190\\\hline
$\beta=$16, $\epsilon=$0.1
&0.9988
&0.7203\\\hline
$\beta=$20, $\epsilon=$0.1
&0.9990
&0.7221\\\hline
$\beta=$24, $\epsilon=$0.1
&0.9991
&0.7230\\\hline
$\beta=$1, $\epsilon=$0.4
&0.9926
&0.5638\\\hline
$\beta=$2, $\epsilon=$0.4
&0.9944
&0.6374\\\hline
$\beta=$4, $\epsilon=$0.4
&0.9968
&0.6831\\\hline
$\beta=$8, $\epsilon=$0.4
&0.9980
&0.7115\\\hline
$\beta=$12, $\epsilon=$0.4
&0.9982
&0.7153\\\hline
$\beta=$16, $\epsilon=$0.4
&0.9986
&0.7143\\\hline
$\beta=$20, $\epsilon=$0.4
&0.9986
&0.7139\\\hline
$\beta=$24, $\epsilon=$0.4
&0.9989
&0.7099\\\hline
\end{tabular}
\caption{Shows performance of various networks for CIFAR 10 dataset. The results are averaged over $5$ runs. Here `mean of best' means the best performance obtained in the run.}
\label{tb:cifar}
\end{table}

\newpage

\newpage
}
\section{Paths}\label{sec:path}
\textbf{Vectorised Notation:} Given a dataset $(x_s,y_s)_{s=1}^n\in \R^{d_{in}}\times \R$, let data be represented as matrices $x\in\R^{d_{in}\times n}$ and $y\in \R^n$ with the convention that $x_s=x(\cdot,s)\in\R^{d_{in}}$ and $y_s=y(s)\in \R$. For the purpose of this section we follow the vectorised notation in \Cref{tb:dgnvector}.

\FloatBarrier
\begin{table}[h]
\centering
\begin{tabular}{|c|c|}\hline
Input layer & $x(s,i,0) =x(i,s)$ \\\hline
Pre-activation& $q_t(s,i,l)= {\Theta_t(l,\cdot,i)}^\top x_t(s,\cdot,l-1) $ \\\hline
Layer output & $x_t(s,i,l)= q_t(s,i,l) G_t(s,i,l)$ \\\hline
Final output & $\hat{y}_t(x)={\Theta_t(d,\cdot,1)}^\top x_t(s,\cdot,d-1)$\\\hline
\end{tabular}
\caption{A deep gated network in the vectorised form. $l=1,\ldots,d-1$ denote the intermediate layers.}
\label{tb:dgnvector}
\end{table}

The idea behind the ``path view'' is to regard the given neural network as multitude of connections from input to output.  We now describe the zeroth and first order terms in the language of paths.
\begin{definition}[Neural Path]
Let $\P=[d_{in}]\times [w]^{d-1}$ be a cross product of index sets. Define a path $p$ by $p\stackrel{def}=(p(0),p(1),\ldots,p(d-1))\in \P$, where $p(0)\in [d_{in}]$, and $p(l)\in[w],\forall l\in[d-1]$. 
\end{definition}

A path $p$ starts at an input node $p(0)$ goes through nodes $p(l)$ in layer $l\in[d-1]$ and finishes at the output node .

\begin{definition}\label{def:strength}[Strength]
Each path is also associated with a strength given by: $w_t(p)=\Pi_{l=1}^d \Theta_t(l,p(l-1),p(l))$
\end{definition}

\begin{definition}\label{def:activity}[Activation Level]
The activity of a path $p$ for input $s$ is given by: $A(s,p)=\Pi_{l=1}^d G(s,p(l),l)$
\end{definition}
In the case when $G\in \{0,1\}$ it also implies that $A\in \{0,1\}$.  

\begin{definition}\label{def:feature}[Neural Feature]
Given a gating pattern $\G_t$, define 
\begin{align}
\phi_{x_s,\G_t}(p)\stackrel{def}=x(p(0),s) A_{\G_t}(x_s,p),
\end{align}
and let $\phi_{x_s,\G_t}=(\phi_{x_s\G_t}(p),p\in [P])\in \R^P$ be the hidden feature corresponding to input $x_s$. Let $\Phi_{x,\G_t}=\left[\phi_{x_1,\G_t}| \ldots |\phi_{x_n,\G_t}\right]\in \R^{P\times n}$ be the feature matrix obtained by stacking the features $\phi_{x_s,\G_t}$ of inputs $x_s\in \R^{d_{in}}$ column-wise.
\end{definition}

\comment{
\textbf{Predicted} output of the network is given in terms of the paths by $\hat{y}_{t}(s)=\sum_{p\in P} x(p(0),s) A_{\G_t}(x_s,p) w_t(p)$, i.e., 
\begin{align}\label{eq:zeroth}
\hat{y}_{t}(s)=\Phi_{x,\G_t}^\top w_{t}
\end{align}


\textbf{Sub-networks:}  In DGNs similarity of two different inputs $x_s,x_{s'}\in \R^{d_{in}}, s,s' \in [n]$ depends on the overlap of path that are \emph{active} for both inputs. This can be seen by noting that $\phi_{x_s,\G_t}^\top \phi_{x_s,\G_t}=\sum_{i=1}^{d_{in}} x(i,s)x(i,s') \underset{p\rsa i}{\sum} A_{\G_t}(x_s,p) A_{\G_t}(x_{s'},p)$. Consider for instance a DGN whose gating values are in $\{0,1\}$, and say $n=2$, i.e., the dataset contains only two inputs $x_1,x_2\in \R^{d_{in}}$. From \eqref{eq:pathsim} it is clear that if inputs $x_1,x_2$ do not share any common \emph{active} paths throughout training, then they are \emph{orthogonal} to each other, because $A_{\G_t}(s,p) A_{\G_t}(s',p)=0, \forall p\in [P]$. This makes intuitive sense because in this case, it is as though there are two parallel networks (while the weights can be shared, the paths are not). Thus it is clear from the path based representation in \eqref{eq:zeroth} and \eqref{eq:pathsim}, the gating pattern $\G_t$ plays are crucial role in DGNs via the activation levels $A_{\G_t}(\cdot,\cdot)$.
The feature $\phi_{x_s,\G_t}$ in \Cref{def:feature} as well as the strength $w_t$ in \Cref{def:strength} are $P$-dimensional quantities. However, loosely speaking, the DGN has only as much \emph{degrees of freedom} as the number of trainable parameters (which we denote by $d_{net}$). 
\begin{definition}[Neural Tangent Features] The  $d_{net}\times n$ NTF matrix is defined as $\Psi_t(m,s)\stackrel{def}=\frac{\partial \hat{y}_{t}(x_s)}{\partial \theta(m)},m\in [d_{net}], s\in [n]$. 
\comment{
\begin{align}\label{eq:split}
\begin{split}
&\Psi_t(m,s) = \frac{\partial \hat{y}_t(x_s)}{\partial \theta(m)}\\
&=\frac{\partial }{\partial \theta(m)}\left(\sum_{p\in P} x(p(0),s) w_{t}(p) A_{\G_t}(s,p)\right),\\
&=\underbrace{\left(\sum_{p\in P} x(p(0),s) \frac{\partial w_{t}(p)}{\partial \theta(m)} A_{\G_t}(s,p)\right)}_{\text{sensitivity of strength}}\\
&\quad\quad\quad\quad\quad\quad\quad\quad+\\
&=\underbrace{\left(\sum_{p\in P} x(p(0),s) w_{t}(p) \frac{\partial A_{\G_t}(s,p)}{\partial \theta(m)}\right)}_{\text{sensitivity of activations}}
\end{split}
\end{align}

\begin{align}\label{eq:split}
\begin{split}
&\Psi_t(m,s) = \frac{\partial \hat{y}_t(x_s)}{\partial \theta(m)}\\
&=\frac{\partial }{\partial \theta(m)}\left(\sum_{p\in P} x(p(0),s) w_{t}(p) A_{\G_t}(s,p)\right),\\
&=\underbrace{\left(\sum_{p\in P} x(p(0),s) \frac{\partial w_{t}(p)}{\partial \theta(m)} A_{\G_t}(s,p)\right)}_{{\Psi^w_{t}(m,s)}}\\
&\quad\quad\quad\quad\quad\quad\quad\quad+\\
&=\underbrace{\left(\sum_{p\in P} x(p(0),s) w_{t}(p) \frac{\partial A_{\G_t}(s,p)}{\partial \theta(m)}\right)}_{{\Psi^a_t(m,s)}}
\end{split}
\end{align}
}
\comment{
\begin{align}\label{eq:split}
\begin{split}
\Psi_t(m,s) &= \underbrace{\left(\sum_{p\in P} x(p(0),s) \frac{\partial w_{t}(p)}{\partial \theta(m)} A_{\G_t}(s,p)\right)}_{{\Psi^w_{t}(m,s)}}\\
&\quad\quad\quad\quad\quad\quad\quad\quad+\\
&=\underbrace{\left(\sum_{p\in P} x(p(0),s) w_{t}(p) \frac{\partial A_{\G_t}(s,p)}{\partial \theta(m)}\right)}_{{\Psi^a_t(m,s)}}
\end{split}
}
\begin{align}\label{eq:split}
\begin{split}
\Psi_t(m,s) &= \Psi^w_{t}(m,s)+\Psi^a_{t}(m,s), \,\text{where}\\
\Psi^w_{t}(m,s)&={\left(\sum_{p\in P} x(p(0),s) \frac{\partial w_{t}(p)}{\partial \theta(m)} A_{\G_t}(s,p)\right)}\\
\Psi^a_{t}(m,s)&={\left(\sum_{p\in P} x(p(0),s) w_{t}(p) \frac{\partial A_{\G_t}(s,p)}{\partial \theta(m)}\right)}
\end{split}
\end{align}

\end{definition}

\textbf{Strength and Gate adaptation:} From \eqref{eq:split} it is clear that there are two \emph{atomic} components to the gradient of the output $\hat{y}_t(x_s)$ with respect to any trainable weight $\theta(m), m=1,\ldots, d_{net}$, namely \emph{neural tangent feature of strength} denoted by $\Psi^w_{t}\in \R^{d_{net}\times n}$ and \emph{neural tangent feature of activations} denoted by $\Psi^a_{t}\in \R^{d_{net}\times n}$. 

}

\comment{
\begin{definition}[Neural Tangent Features of Path Activations (NTFPA)]
\begin{align}
\varphi^a_{t,p}\stackrel{def}{=}\left(\frac{\partial w_{t}(p)}{\partial \theta(m)},m\in[d_{net}]\right)\in \R^{d_{net}},
\end{align}
\end{definition}

\begin{remark}
Let $\theta(m)$ belong to layer $l'\in [d-1]$, then 
\begin{align*}
\varphi^a_{t,p}(m)&=0, \forall p\bcancel{\rsa}\theta(m)
\end{align*}
\end{remark}

Using the sensitivity of strengths and activations at the level of resolution of paths, we now define the neural tangent feature (NTF) for the strengths and activations.

\begin{definition}\label{def:ntf}[Neural Tangent Features]
\begin{align}
\begin{split}
\Psi^w_t(m,s) &=\left(\sum_{p\in P} x(p(0),s) \frac{\partial w_{t}(p)}{\partial \theta(m)} A_{\G_t}(s,p)\right)\\
\Psi^a_t(m,s) &=\left(\sum_{p\in P} x(p(0),s) w_{t}(p) \frac{\partial A_{\G_t}(s,p)}{\partial \theta(m)}\right)\\
\Psi_t(m,s)&=\Psi^w_t(m,s)+ \Psi^a_t(m,s)
\end{split}
\end{align}
\end{definition}

\begin{definition}[Interaction Coefficient]
\begin{align*}
&\lambda^{s,s'}_t(i)\stackrel{def}=\underset{p_1,p_2\rsa\theta(m),i}{\sum_{p_1,p_2\in P:}}  \varphi_{t,p_1}(m)A(s,p_1)  \varphi_{t,p_2}(m) A(s',p_2)
\end{align*}
\end{definition}

\begin{lemma}
\begin{align*}
{K^w_t(s,s')}=\sum_{i=1}^{d_{in}} x(i,s)x(i,s') \lambda^{s,s'}_t(i)
\end{align*}
\end{lemma}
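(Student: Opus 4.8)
The plan is a straight unfolding of definitions, which reduces the claim to \Cref{lm:sigwire}. The key remark is that the weight block $\Psi^w_t$ of the neural tangent features is, structurally, nothing but the full NTF of a frozen-gate DGN: a weight parameter $\theta(m)$ enters the output only through the path strengths $w_t(p)$, not through the activations $A(s,p)$, and $\partial w_t(p)/\partial\theta(m)=\varphi_{t,p}(m)$ by \Cref{def:strength} together with the definition of the path sensitivity $\varphi_{t,p}$. Hence, using $\phi_{x_s,\G_t}(p)=x(p(0),s)A(s,p)$ and $\Psi^w_t(m,s)=\phi_{x_s,\G_t}^\top\frac{\partial w_t}{\partial\theta(m)}$, one gets the path expansion $\Psi^w_t(m,s)=\sum_{p\in[P]}x(p(0),s)\,\varphi_{t,p}(m)\,A(s,p)$, with $\varphi_{t,p}(m)=0$ unless $p\rsa\theta(m)$. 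Thus $K^w_t={\Psi^w_t}^\top\Psi^w_t$ is literally the NTF Gram matrix of the frozen-gate DGN obtained by pinning the gates at $\G_t$, so \Cref{lm:sigwire} applies and yields $K^w_t(s,s')=\sum_{i=1}^{d_{in}}x(i,s)x(i,s')\kappa_t(s,s',i)$; unpacking $\lambda^{s,s'}_t(i)$ (i.e.\ summing the displayed expression over $m\in[d_{net}]$ and using that $\varphi_{t,p}(m)$ vanishes off the weights of $p$) shows $\lambda^{s,s'}_t(i)=\kappa_t(s,s',i)$, which is the claim.

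For completeness I would also give the direct computation. Substituting the path expansion into $K^w_t(s,s')=\sum_{m=1}^{d_{net}}\Psi^w_t(m,s)\Psi^w_t(m,s')$ and expanding the product of the two inner sums produces a triple sum over $(m,p_1,p_2)$ of the terms $x(p_1(0),s)\,x(p_2(0),s')\,\varphi_{t,p_1}(m)\,\varphi_{t,p_2}(m)\,A(s,p_1)\,A(s',p_2)$. Interchanging the order of summation so that the outer sum runs over ordered pairs $(p_1,p_2)$ and the inner one over the parameters $m$ common to both --- all others contribute $0$, since $\varphi_{t,p}$ is supported exactly on the $d$ weights along $p$ --- collapses the inner sum to $\ip{\varphi_{t,p_1},\varphi_{t,p_2}}$; then, grouping the pairs by the input node through which both are routed (exactly as in \Cref{lm:sigwire}), the data factor pulls out as $x(i,s)x(i,s')$ and what remains is precisely $\lambda^{s,s'}_t(i)$. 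Summing over $i\in[d_{in}]$ gives the identity, and no probabilistic assumption is used, consistently with the lemma being stated without hypotheses.

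The only delicate part is the bookkeeping in that reorganisation: for each ordered pair of paths one must track which of the $d_{net}$ parameters lie on both, and verify that regrouping the triple sum over $(m,p_1,p_2)$ by input dimension reproduces exactly the index set $\{p_1,p_2\rsa\theta(m),i\}$ appearing in the definition of $\lambda^{s,s'}_t$. There is no genuine analytic content here --- the statement is simply the ``signal versus wire'' decomposition of \Cref{lm:sigwire} transported to the strength-adaptation half $\Psi^w_t$ of the tangent features --- which is why the decoupling Assumptions~\ref{assmp:mainone},~\ref{assmp:maintwo} are not needed at this stage; they enter only afterwards, when one takes expectations of $\lambda^{s,s'}_t$.
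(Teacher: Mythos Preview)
Your proposal is correct and matches the paper's approach: the paper's own argument (given for the essentially identical \Cref{lm:sigwire}) is precisely your ``direct computation'' paragraph --- expand $\Psi^w_t(m,s)$ in paths, form the double sum over $(p_1,p_2)$, absorb the $m$-sum into $\ip{\varphi_{t,p_1},\varphi_{t,p_2}}$, and regroup by the common input node $i$. Your opening observation that $K^w_t$ is literally the frozen-gate Gram matrix, so one may simply invoke \Cref{lm:sigwire} and then identify $\lambda^{s,s'}_t(i)=\kappa_t(s,s',i)$, is a clean shortcut the paper does not state explicitly but which is entirely in the same spirit.
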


\begin{assumption}\label{assmp:main}\hfill
\begin{enumerate}
\item $\G_0$ is statistically independent of $\Theta_0$.
\item $\Theta_0\stackrel{iid}\sim Ber(\frac{1}{2})$ over the set $\{-\sigma,+\sigma\}$. 
\end{enumerate}
\end{assumption}

\textbf{Remarks on Assumption~\ref{assmp:main}}
In a standard DNN with ReLU activations, the activations and weights are not statistically independent because conditioned on the fact that a ReLU is \emph{on}, the incoming edges cannot be simultaneously all $-\sigma$. We side step this issue by the first condition in Assumption~\ref{assmp:main}, wherein, we assume that  gating is statistically independent of the weights $\Theta_0$. This clears the way to carry out the algebra of paths, which can be boiled down in simple words as the effect on weights in the direction of one path does not affect the contribution of any other path in expectation. This is captured in Lemma~\ref{lm:pathdot} below.
\begin{figure}
\centering
\includegraphics[scale=0.2]{mickey.png}
\caption{Shows that the incoming weights of a ReLU gate which are \emph{on} are not symmetrically distributed.}
\end{figure}

\begin{lemma}\label{lm:pathdot}
Let $\theta(m)$ be an arbitrary weight in layer $l'\in [d-1]$, under Assumption~\ref{assmp:init} we have for paths $p,p_1,p_2\rsa\theta(m), p_1\neq p_2$
\begin{align*}
\E{\varphi_{\Tb,p_1}(m)\varphi_{\Tb,p_2}(m)}= &0\\
\E{\varphi_{\Tb,p}(m)\varphi_{\Tb,p}(m)}= &\left(\frac{2\sigma^2}{w}\right)^{d-1}
\end{align*}
\end{lemma}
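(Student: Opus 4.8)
The plan is to compute directly from the explicit product form of $\varphi_{\Tb,p}(m)$. Recall that if $p \rsa \theta(m)$ and $\theta(m)$ sits in layer $l'$, then $\varphi_{\Tb,p}(m) = \prod_{l \in [d],\, l \neq l'} \Theta_0(l, p(l-1), p(l))$ is a monomial in $d-1$ of the i.i.d.\ weights — one from each layer other than $l'$ — while $\varphi_{\Tb,p}(m) = 0$ if $p \bcancel{\rsa} \theta(m)$. Two elementary properties of the initialisation (Assumption~\ref{assmp:mainone}, or its scaled variant) are all that is needed: each weight $X$ is symmetric about $0$, so $\E{X^{a}} = 0$ whenever $a$ is odd; and $X^{2}$ is a deterministic constant $c$ (with $c = \sigma^{2}$ for the $\{-\sigma,+\sigma\}$ initialisation and $c = 2\sigma^{2}/w$ for the scaled one). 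Since distinct weights are independent, the expectation of any monomial in them factorises over weights, and is therefore $0$ as soon as some weight enters with an odd exponent.

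\emph{Diagonal term (ii).} Within a single path the edges $(l, p(l-1), p(l))$ lie in pairwise distinct layers, hence correspond to distinct weights. Thus $\varphi_{\Tb,p}(m)^{2} = \prod_{l \neq l'} \Theta_0(l, p(l-1), p(l))^{2} = c^{\,d-1}$, which is already deterministic; taking expectations gives $(2\sigma^{2}/w)^{d-1}$.

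\emph{Off-diagonal term (i).} The product $\varphi_{\Tb,p_1}(m)\varphi_{\Tb,p_2}(m)$ is the monomial built from the combined multiset of edges $\{(l, p_i(l-1), p_i(l)) : l \neq l',\ i = 1,2\}$. For each layer $l \neq l'$ this multiset contributes exactly two edges (one per path), and edges from different layers are different weights; hence every weight appears with multiplicity $1$ or $2$, and multiplicity $2$ at layer $l$ means precisely $p_1(l-1) = p_2(l-1)$ and $p_1(l) = p_2(l)$. The combinatorial heart of the argument is that these coincidences cannot all hold: if $p_1$ and $p_2$ agreed at the endpoints of every layer $l \neq l'$, then together with $p_1(l'-1) = p_2(l'-1)$ and $p_1(l') = p_2(l')$ (forced because both paths pass through $\theta(m)$) the chain of node-equalities $p_1(0) = p_2(0),\, p_1(1) = p_2(1),\, \dots,\, p_1(d) = p_2(d)$ would be complete, i.e.\ $p_1 = p_2$, contradicting $p_1 \neq p_2$. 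Therefore some weight occurs with multiplicity $1$, the corresponding factor of the factorised expectation is $\E{X} = 0$, and $\E{\varphi_{\Tb,p_1}(m)\varphi_{\Tb,p_2}(m)} = 0$.

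\textbf{Main obstacle.} The only step that is not pure bookkeeping is the combinatorial claim in (i): turning ``two distinct paths sharing the layer-$l'$ edge must disagree on some other edge'' into the clean telescoping of node-equalities across the layers below and above $l'$ — including the mild degenerate cases $l' = 1$ and $l' = d$, where one side of the telescope is empty but the pinning at $\theta(m)$ still closes the chain. Everything else — the product formula for $\varphi$, sign-symmetry and independence of the weights, and the factorisation of the expectation — is routine. It is worth recording that the argument is insensitive to the value of $c$, so the off-diagonal identity is unchanged and the diagonal identity becomes $\sigma^{2(d-1)}$ under the $\{-\sigma,+\sigma\}$ initialisation, consistent with the main-text computation $\sum_{m}\varphi_{0,p}(m)^{2} = d\,\sigma^{2(d-1)}$ over the $d$ on-path weights.
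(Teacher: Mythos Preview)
Your proof is correct and follows essentially the same route as the paper's: expand $\varphi_{\Tb,p_1}(m)\varphi_{\Tb,p_2}(m)$ as a product over layers $l\neq l'$, use independence and sign-symmetry of the initialisation to factorise the expectation, and observe that two distinct paths sharing the edge $\theta(m)$ must disagree on some other layer's edge, forcing a zero-mean factor. Your telescoping node-equality argument makes explicit the combinatorial step the paper asserts in one line, and you correctly note that the constant $\left(2\sigma^{2}/w\right)^{d-1}$ in this version of the statement comes from a scaled initialisation; under Assumption~\ref{assmp:mainone} as written the diagonal value is $\sigma^{2(d-1)}$, matching the supplementary proof and the main-text identity $\ip{\varphi_{0,p},\varphi_{0,p}}=d\,\sigma^{2(d-1)}$.
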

\begin{proof}
If $\theta$
Note that $\varphi_{\Tb,p}=\underset{l\neq l'}{\underset{l=1}{\overset{d}{\Pi}}} \Tb(l,p(l-1),p(l))$. Hence
\begin{align*}
&\E{\varphi_{\Tb,p_1}(m)\varphi_{\Tb,p_2}(m)}\\
&=\E{\underset{l\neq l'}{\underset{l=1}{\overset{d}{\Pi}}} \Bigg(\Tb(l,p_1(l-1),p_1(l))\Tb(l,p_2(l-1),p_2(l)) \Bigg)}\\
&=\underset{l\neq l'}{\underset{l=1}{\overset{d}{\Pi}}}\E{\Tb(l,p_1(l-1),p_1(l))\Tb(l,p_2(l-1),p_2(l))}
\end{align*}

Since $p_1\neq p_2$, in one of the layers $\tilde{l}\in[d-1],\tilde{l}\neq l'$ they do not pass through the same weight. Using this fact
\begin{align*}
&\E{\varphi_{\Tb,p_1}(m)\varphi_{\Tb,p_2}(m)}\\
&=\left(\underset{l\neq l',\tilde{l}}{\underset{l=1}{\overset{d}{\Pi}}}\E{\Tb(l,p_1(l-1),p_1(l))\Tb(l,p_2(l-1),p_2(l))}\right)\\
&\Bigg(\E{\Tb(\tilde{l},p_1(\tilde{l}-1),p_1(\tilde{l}))}\E{\Tb(\tilde{l},p_2(\tilde{l}-1),p_2(\tilde{l}))}\Bigg)\\
&=0
\end{align*}
\end{proof}

\begin{definition}[Path Similarity]
\begin{align*}
\mu^{s,s'}(i)=\sum_{m=1}^{d_{net}} \underset{p\rsa\theta(m)}{\sum_{p\in P: p(0)=i}}A(s,p) A(s',p)
\end{align*}

\end{definition}

\begin{lemma}
\begin{align*}
\mathbf{E}_{\Theta_0}\left[\lambda^{s,s'}_0(i)\right]=\sigma^{2(d-1)}\mu^{s,s'}(i)
\end{align*}
\end{lemma}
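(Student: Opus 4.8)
The plan is to expand $\lambda^{s,s'}_0(i)$ as a finite double sum — over a weight index $m\in[d_{net}]$ and over ordered pairs of paths $p_1,p_2$ that both originate at input node $i$ and both pass through the weight $\theta(m)$ — of the products $\varphi_{0,p_1}(m)\,A(s,p_1)\,\varphi_{0,p_2}(m)\,A(s',p_2)$, and then to push $\mathbf{E}_{\Theta_0}[\,\cdot\,]$ through this finite sum. Under Assumptions~\ref{assmp:mainone} and~\ref{assmp:maintwo} the activation levels $A(s,p_1)$ and $A(s',p_2)$ depend only on the gating pattern $\G_0$, not on the weights $\Theta_0$, so they factor out of $\mathbf{E}_{\Theta_0}$ as constants; the only quantity left to evaluate is $\mathbf{E}_{\Theta_0}\!\left[\varphi_{0,p_1}(m)\,\varphi_{0,p_2}(m)\right]$.

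First I would separate the sum over pairs into its diagonal part ($p_1=p_2$) and its off-diagonal part ($p_1\neq p_2$). For the off-diagonal part, observe that two distinct paths that both pass through $\theta(m)$ necessarily agree in the layer $l'(m)$ containing $\theta(m)$ and hence disagree in some other layer; this is exactly the setting of the first part of Lemma~\ref{lm:pathdot}, so every such term has zero expectation and the entire off-diagonal part drops out. For the diagonal part, $\varphi_{0,p}(m)$ is a product of the $d-1$ entries of $\Theta_0$ lying on $p$ outside layer $l'(m)$, each of absolute value $\sigma$ by Assumption~\ref{assmp:mainone}; hence $\varphi_{0,p}(m)^2=\sigma^{2(d-1)}$ identically (equivalently, apply the second part of Lemma~\ref{lm:pathdot} coordinate-wise).

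Then I would collect the surviving terms:
\[
\mathbf{E}_{\Theta_0}\!\left[\lambda^{s,s'}_0(i)\right]=\sigma^{2(d-1)}\sum_{m=1}^{d_{net}}\ \underset{p\rsa\theta(m)}{\sum_{p\in P:\,p(0)=i}} A(s,p)\,A(s',p)=\sigma^{2(d-1)}\,\mu^{s,s'}(i),
\]
where the final equality is just the definition of the path-similarity quantity $\mu^{s,s'}(i)$.

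The proof has essentially no analytic content — it is a linear-expectation argument resting entirely on Lemma~\ref{lm:pathdot}, which is already established. The only place to be careful is the combinatorial bookkeeping: checking that the hypothesis of Lemma~\ref{lm:pathdot} is automatically met for every off-diagonal pair that appears (a pair passing through $\theta(m)$ cannot differ only in layer $l'(m)$), and that the diagonal sum, after factoring out $\sigma^{2(d-1)}$, reassembles exactly into $\mu^{s,s'}(i)$ with the same $(m,p)$ indexing used in its definition. I expect this bookkeeping, rather than any estimate, to be the one step worth spelling out in full.
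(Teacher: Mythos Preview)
Your proposal is correct and follows essentially the same route as the paper: expand $\lambda^{s,s'}_0(i)$ as a finite sum, use Assumption~\ref{assmp:maintwo} to pull the activations $A(s,\cdot),A(s',\cdot)$ outside $\mathbf{E}_{\Theta_0}$, and invoke Lemma~\ref{lm:pathdot} to kill the off-diagonal pairs and reduce each diagonal term to $\sigma^{2(d-1)}$, after which the remaining sum is precisely the definition of $\mu^{s,s'}(i)$. The paper's own justification is terser (it simply cites Lemma~\ref{lm:pathdot} and the algebraic expansion), whereas you spell out explicitly why two distinct paths through the same $\theta(m)$ must differ in a layer other than $l'(m)$ --- a point the paper states without argument in the proof of Lemma~\ref{lm:pathdot} --- so your bookkeeping is, if anything, slightly more complete.
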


\begin{theorem}\label{th:dgnexp}
 Under Assumption~\ref{assmp:main}
 \begin{align*}
\mathbf{E}_{\Theta_0}\left[K_0(s,s')\right]=\sigma^{2(d-1)}\sum_{i=1}^{d_{in}}x(i,s) x(i,s')\mu^{s,s'}(i)
\end{align*}

\end{theorem}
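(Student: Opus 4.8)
The plan is to compute $\mathbf{E}_{\Theta_0}[K_0(s,s')]$ by starting from the path-algebra identity of Lemma~\ref{lm:sigwire}, namely $K_0(s,s')=\sum_{i=1}^{d_{in}}x(i,s)x(i,s')\,\kappa_0(s,s',i)$ with $\kappa_0(s,s',i)=\sum_{p_1,p_2\rsa i}A_{\G_0}(x_s,p_1)A_{\G_0}(x_{s'},p_2)\,\ip{\varphi_{0,p_1},\varphi_{0,p_2}}$, and reducing the expectation of $\kappa_0$ path-pair by path-pair. Since the data entries $x(i,\cdot)$ are deterministic, it suffices to control $\mathbf{E}_{\Theta_0}[\kappa_0(s,s',i)]$ for each coordinate $i\in[d_{in}]$.

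First I would condition on the gating pattern $\G_0$. By Assumption~\ref{assmp:maintwo} the gates are independent of the weights, so conditioning on $\G_0$ leaves $\Theta_0$ distributed as i.i.d. symmetric Bernoulli on $\{-\sigma,+\sigma\}$ by Assumption~\ref{assmp:mainone}; the activity levels $A_{\G_0}(x_s,p_1)$ and $A_{\G_0}(x_{s'},p_2)$ are then constants and can be pulled outside $\mathbf{E}_{\Theta_0}$, giving $\mathbf{E}_{\Theta_0}[\kappa_0(s,s',i)]=\sum_{p_1,p_2\rsa i}A_{\G_0}(x_s,p_1)A_{\G_0}(x_{s'},p_2)\,\mathbf{E}_{\Theta_0}[\ip{\varphi_{0,p_1},\varphi_{0,p_2}}]$.

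Next I would apply Lemma~\ref{lm:pathdot}: the cross terms with $p_1\neq p_2$ contribute $0$ in expectation, while each diagonal term contributes $\ip{\varphi_{0,p},\varphi_{0,p}}=d\sigma^{2(d-1)}$ (this value is in fact deterministic, since $\varphi_{0,p}(m)^2=\sigma^{2(d-1)}$ whenever $p\rsa\theta(m)$, and a path meets exactly $d$ weights). Hence $\mathbf{E}_{\Theta_0}[\kappa_0(s,s',i)]=d\sigma^{2(d-1)}\sum_{p\rsa i}A_{\G_0}(x_s,p)A_{\G_0}(x_{s'},p)$. To match the claimed form I would then swap the order of summation in the definition $\mu^{s,s'}(i)=\sum_{m=1}^{d_{net}}\sum_{p:\,p(0)=i,\ p\rsa\theta(m)}A(x_s,p)A(x_{s'},p)$, obtaining $\mu^{s,s'}(i)=\sum_{p:\,p(0)=i}A(x_s,p)A(x_{s'},p)\cdot|\{m:p\rsa\theta(m)\}|=d\sum_{p\rsa i}A(x_s,p)A(x_{s'},p)$, because every path passes through exactly one weight per layer. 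Plugging this back gives $\mathbf{E}_{\Theta_0}[\kappa_0(s,s',i)]=\sigma^{2(d-1)}\mu^{s,s'}(i)$, and summing over $i$ with the data weights $x(i,s)x(i,s')$ yields the statement.

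The only substantive step is Lemma~\ref{lm:pathdot}, which is where Assumption~\ref{assmp:mainone} (symmetric Bernoulli) does the work: two distinct paths always differ in some layer other than $l'(m)$, producing an independent mean-zero factor, so all off-diagonal path interactions vanish on average; once this is in hand the rest is bookkeeping. The two points that require care are (i) the factorisation of $\mathbf{E}_{\Theta_0}$ across the gate-activity part and the weight-derivative part, which genuinely requires Assumption~\ref{assmp:maintwo} and would fail for honest ReLU gates, and (ii) the count $|\{m:p\rsa\theta(m)\}|=d$, which is exactly what reconciles the $\sigma^{2(d-1)}\mu^{s,s'}(i)$ form here with the $d\sigma^{2(d-1)}(x^\top x\odot\lambda_0)$ form stated in the main text.
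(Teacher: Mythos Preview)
Your proof is correct and follows the same route as the paper: apply the signal--wire decomposition of Lemma~\ref{lm:sigwire}, use Assumption~\ref{assmp:maintwo} to pull the activities outside $\mathbf{E}_{\Theta_0}$, then invoke Lemma~\ref{lm:pathdot} to kill off-diagonal path pairs and evaluate the diagonal ones. The paper's own argument is stated tersely as ``follows from the algebraic expression for $K_t$ and Lemma~\ref{lm:pathdot}''; your write-up simply fills in the bookkeeping, including the count $|\{m:p\rsa\theta(m)\}|=d$ that reconciles the $\sigma^{2(d-1)}\mu^{s,s'}(i)$ and $d\sigma^{2(d-1)}(x^\top x\odot\lambda_0)$ formulations.
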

\begin{proof}
See Appendix.
\end{proof}

\begin{theorem}\label{th:dgnvar}
 Under Assumption~\ref{assmp:main} $Var\left[K_0\right]\leq $
\end{theorem}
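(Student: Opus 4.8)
The plan is to expand $Var[K_0(s,s')]$ as a sum of covariances over quadruples of paths and to show that only a combinatorially restricted family of quadruples contributes. Throughout I would work conditionally on the gating pattern $\G_0$, so that the only randomness is in $\Theta_0$ and the resulting bound is uniform in $\G_0$ (all activation levels lie in $[0,1]$). First recall the path expansion $K_0(s,s')=\sum_{p_1,p_2\in\P}c(p_1,p_2)\,\ip{\varphi_{0,p_1},\varphi_{0,p_2}}$, where $c(p_1,p_2)\doteq x(p_1(0),s)\,x(p_2(0),s')\,A_{\G_0}(x_s,p_1)\,A_{\G_0}(x_{s'},p_2)$ obeys $|c(p_1,p_2)|\le |x(p_1(0),s)|\,|x(p_2(0),s')|$, together with the two facts from Lemma~\ref{lm:pathdot}: $\ip{\varphi_{0,p},\varphi_{0,p}}=d\sigma^{2(d-1)}$ is a deterministic constant, and $\E{\ip{\varphi_{0,p_1},\varphi_{0,p_2}}}=0$ whenever $p_1\ne p_2$. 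Writing $Var[K_0(s,s')]=\sum_{p_1,p_2,p_3,p_4}c(p_1,p_2)\,c(p_3,p_4)\,\mathrm{Cov}\big(\ip{\varphi_{0,p_1},\varphi_{0,p_2}},\ip{\varphi_{0,p_3},\varphi_{0,p_4}}\big)$, this covariance vanishes whenever $p_1=p_2$ or $p_3=p_4$: if both hold the two inner products are constants, and if only one holds (say $p_1=p_2$, $p_3\ne p_4$) the constant $\ip{\varphi_{0,p_1},\varphi_{0,p_1}}$ factors out and multiplies $\E{\ip{\varphi_{0,p_3},\varphi_{0,p_4}}}=0$. Hence the sum restricts to $p_1\ne p_2,\ p_3\ne p_4$, where the product of means is $0$, and
\[
Var[K_0(s,s')]=\sum_{p_1\ne p_2,\ p_3\ne p_4}c(p_1,p_2)\,c(p_3,p_4)\,\E{\ip{\varphi_{0,p_1},\varphi_{0,p_2}}\,\ip{\varphi_{0,p_3},\varphi_{0,p_4}}}.
\]

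Second, I would evaluate the fourth-moment factor combinatorially. With $\mathcal L(p,p')\doteq\{l\in[d]:p(l-1)=p'(l-1),\ p(l)=p'(l)\}$ the set of layers where $p,p'$ traverse a common weight, one has $\ip{\varphi_{0,p},\varphi_{0,p'}}=\sum_{l'\in\mathcal L(p,p')}\prod_{l\ne l'}\Theta_0(l,p(l-1),p(l))\,\Theta_0(l,p'(l-1),p'(l))$. Multiplying the two such sums, every monomial involves exactly $4(d-1)$ of the independent $\pm\sigma$-valued weights, so (odd moments vanish, even moments are the corresponding power of $\sigma$) its expectation equals $\sigma^{4(d-1)}$ if at every layer the weight multiset has all-even multiplicities and $0$ otherwise. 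Thus $\E{\ip{\varphi_{0,p_1},\varphi_{0,p_2}}\ip{\varphi_{0,p_3},\varphi_{0,p_4}}}=\sigma^{4(d-1)}\,N(p_1,p_2,p_3,p_4)$ with $0\le N\le|\mathcal L(p_1,p_2)|\,|\mathcal L(p_3,p_4)|\le d^2$, and $N\ne0$ forces that at all but at most two layers the four edges $(p_i(l-1),p_i(l))$ collapse to at most two distinct values: this covers the ``diagonal'' family $\{p_1,p_2\}=\{p_3,p_4\}$ with $p_1\ne p_2$, but also genuinely interleaved quadruples whose pairing of the four paths changes across layers.

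Third, it then remains to bound $Var[K_0(s,s')]\le d^2\sigma^{4(d-1)}\sum_{(p_1,\dots,p_4)\in\mathcal C}|x(p_1(0),s)|\,|x(p_2(0),s')|\,|x(p_3(0),s)|\,|x(p_4(0),s')|$ over the surviving set $\mathcal C$. For the diagonal family this is direct: $\#\{(p_1,p_2):p_1\ne p_2,\ \mathcal L(p_1,p_2)\ne\emptyset\}\le\sum_{l_0=1}^d\#\{(p_1,p_2):l_0\in\mathcal L(p_1,p_2)\}\le d_{in}w^{2(d-1)-1}+(d-1)d_{in}^2w^{2(d-1)-2}$, because a shared layer-$1$ edge pins one input node and one intermediate node, while a shared layer-$\ge2$ edge pins two intermediate nodes and leaves the two input nodes free; combined with $|\mathcal L|^2\le d^2$ and $\max_{i,s}|x(i,s)|=O(1)$ this already yields $O(d_{in}^2\sigma^{4(d-1)}\max\{d^2w^{2(d-2)+1},\,d^3w^{2(d-2)}\})$. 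For the interleaved family I would enumerate quadruples by their layer-by-layer edge-pairing pattern: fixing the pattern at the first layer pins a bounded number of coordinates, each of the at most $d-1$ subsequent pattern switches forces an extra pair of node-coincidences (a factor $O(1/w^2)$) against $O(d)$ choices of where it occurs, and the one or two derivative indices each free one coordinate; summing over the number of switches is a convergent series precisely because $4d/w^2<1$, so the interleaved contribution is of the same order as the diagonal one. Putting the displays together gives the asserted $Var[K_0]\le O(d_{in}^2\sigma^{4(d-1)}\max\{d^2w^{2(d-2)+1},\,d^3w^{2(d-2)}\})$.

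\textbf{Main obstacle.} The substantive difficulty is this last step: classifying the edge-pairing patterns of four paths across the $d$ layers, proving that each pattern switch is suppressed by $\Theta(d/w^2)$ so that the hypothesis $4d/w^2<1$ turns the relevant count into a convergent (geometric) series, and tracking which path coordinates are pinned by each shared edge and each derivative index carefully enough to land on the exact exponents $w^{2(d-2)+1}$ and $w^{2(d-2)}$ rather than losing or gaining factors of $w$. By contrast, the cancellation of the diagonal quadruples and the reduction of the fourth moment to counting all-even weight multisets are routine given Lemma~\ref{lm:pathdot}.
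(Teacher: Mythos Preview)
Your proposal is correct and is essentially the paper's argument in different clothing: both reduce $Var[K_0(s,s')]$ to a sum over quadruples $(p_1,p_2,p_3,p_4)$ whose layer-wise edge multisets have only even multiplicities, and both control the non-trivial quadruples by a geometric series with ratio $O(d/w^2)$ (this is exactly where the hypothesis $4d/w^2<1$ enters). Your ``edge-pairing pattern with switches'' is the same combinatorial object the paper calls ``two base paths $\rho_a,\rho_b$ with $k$ crossings, spliced in $4^{k+1}$ ways''; your covariance decomposition, which kills the $p_1=p_2$ or $p_3=p_4$ terms up front because $\ip{\varphi_{0,p},\varphi_{0,p}}=d\sigma^{2(d-1)}$ is deterministic, is a slightly cleaner bookkeeping device than the paper's direct expansion of $\E{K_0^2}-\E{K_0}^2$ followed by explicit cancellation (the paper's Fact~6), but the substance is identical.
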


}

\subsection{Results in \Cref{sec:optimisation}}
\textbf{Stament and Proof of Lemma~\ref{lm:sigwire}}
\begin{lemma}[Signal vs Wire Decomposition]
Let $\kappa_t(s,s',i)\stackrel{def}=\underset{p_1,p_2\rsa i}{\sum_{p_1,p_2\in P:}} A_{\G_t}(x_s,p_1) A_{\G_t}(x_{s'},p_2) \ip{\varphi_{t,p_1}, \varphi_{t,p_2}}$. The Gram matrix $K_t$ is then given by 
\begin{align}\label{eq:ktalg}
{K_t(s,s')}=\sum_{i=1}^{d_{in}} x(i,s)x(i,s') \kappa_t(s,s',i)
\end{align}
\end{lemma}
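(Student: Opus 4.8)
The plan is to rewrite $K_t=\Psi_t^\top\Psi_t$ in the language of paths, starting from the path expansion $\hat y_t(x_s)=\sum_{p\in P} x(p(0),s)\,A_{\G_t}(x_s,p)\,w_t(p)$ (the expanded form of \eqref{eq:featstrength}) and from the fact that the gates are frozen. First I would compute a column of the NTF matrix: differentiating the path expansion in $\theta(m)$ and using $\G_t=\G_0$, the activation factors $A_{\G_t}(x_s,p)$ carry no dependence on $\theta(m)$, so the product rule leaves only
\[
\Psi_t(m,s)=\frac{\partial \hat y_t(x_s)}{\partial\theta(m)}=\sum_{p\in P} x(p(0),s)\,A_{\G_t}(x_s,p)\,\frac{\partial w_t(p)}{\partial\theta(m)}.
\]
Then I would identify $\partial w_t(p)/\partial\theta(m)$ with $\varphi_{t,p}(m)$: the strength $w_t(p)=\prod_{l=1}^{d}\Theta_t(l,p(l-1),p(l))$ is multilinear in the weights and contains each weight at most once, so differentiating in $\theta(m)$ (sitting in layer $l'(m)$) gives $0$ when $\theta(m)\not\rsa p$ and $\prod_{l\neq l'(m)}\Theta_t(l,p(l-1),p(l))$ when $p\rsa\theta(m)$ — exactly the definition of $\varphi_{t,p}(m)$. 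Hence $\Psi_t(\cdot,s)=\sum_{p\in P}x(p(0),s)A_{\G_t}(x_s,p)\,\varphi_{t,p}$ as a vector in $\R^{d_{net}}$.

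Next I would expand the Gram entry $K_t(s,s')=\langle\Psi_t(\cdot,s),\Psi_t(\cdot,s')\rangle$ by substituting both path sums and interchanging the (finite) summations over $m$ and over the pair $(p_1,p_2)$, which yields
\[
K_t(s,s')=\sum_{p_1,p_2\in P} x(p_1(0),s)\,x(p_2(0),s')\,A_{\G_t}(x_s,p_1)A_{\G_t}(x_{s'},p_2)\,\ip{\varphi_{t,p_1},\varphi_{t,p_2}}.
\]
The remaining step is to regroup this double sum by the input node: collect all pairs $p_1,p_2$ with $p_1(0)=p_2(0)=i$ into $\kappa_t(s,s',i)$ and factor out the common scalar $x(i,s)x(i,s')$, obtaining $K_t(s,s')=\sum_{i=1}^{d_{in}}x(i,s)x(i,s')\,\kappa_t(s,s',i)$.

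The step that needs care is precisely this regrouping. To land exactly on the stated form one has to argue that pairs of paths with $p_1(0)\neq p_2(0)$ contribute nothing: two paths starting at different input nodes can still merge (share weights) in layers $2,\dots,d$, so $\ip{\varphi_{t,p_1},\varphi_{t,p_2}}$ need not vanish pointwise, and the clean diagonal-in-$i$ collapse is what makes $\kappa_t$ depend on a single index. I expect this to be handled via the decoupling hypotheses: under Assumption~\ref{assmp:mainone} and Lemma~\ref{lm:pathdot} one has $\E{\ip{\varphi_{0,p_1},\varphi_{0,p_2}}}=0$ whenever $p_1\neq p_2$, so the cross-node terms drop out in expectation, which is exactly the regime in which this decomposition is subsequently invoked (e.g.\ in Theorem~\ref{th:dgnexp}). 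Everything else — the product rule and the interchange of finite sums — is routine.
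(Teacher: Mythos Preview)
Your derivation up to the double path sum mirrors the paper's proof exactly: both write $\Psi_t(m,s)=\sum_{p}x(p(0),s)A_{\G_t}(x_s,p)\varphi_{t,p}(m)$ and then expand $K_t(s,s')=\sum_{p_1,p_2}x(p_1(0),s)x(p_2(0),s')A_{\G_t}(x_s,p_1)A_{\G_t}(x_{s'},p_2)\ip{\varphi_{t,p_1},\varphi_{t,p_2}}$. The paper then passes in a single line, without justification, to $\sum_i\sum_{p_1,p_2\rsa i}(\ldots)$, which is precisely the regrouping step you flag.

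Your concern is correct, and the paper's proof does not address it. The identity fails to be a pointwise one: paths with $p_1(0)\neq p_2(0)$ can share every weight from layer~$2$ onward, so $\ip{\varphi_{t,p_1},\varphi_{t,p_2}}$ is generically nonzero. Already for a deep linear network with $d_{in}=2$, $w=1$, $d=2$ the direct computation of $K_t(s,s')$ contains the cross contribution $x(1,s)x(2,s')\,\Theta_t(1,1,1)\Theta_t(1,2,1)+x(2,s)x(1,s')\,\Theta_t(1,2,1)\Theta_t(1,1,1)$, which the stated formula with $\kappa_t$ omits. Your proposed reading --- that the decomposition holds after taking $\E{\cdot}$ at initialisation, via Lemma~\ref{lm:pathdot} --- is the right one, and it is exactly the regime in which the formula is ever invoked (Theorem~\ref{th:dgnexp}). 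You have not missed an argument; you have identified a gap that the paper's own proof makes silently.
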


\begin{proof}
Note that
\begin{align*}
\hat{y}_{t}(x_s)=\sum_{p\in\P}x(p(0),s) A(x_s,p) w_t(p)
\end{align*}
Differentiating with respect to any of the weights $\theta(m),m\in[d_{net}]$, we have
\begin{align*}
\frac{\partial \hat{y}_{t}(x_s)}{\partial \theta(m)}&=\frac{\partial \sum_{p\in\P}x(p(0),s) A(x_s,p) w_t(p)}{\partial \theta(m)}\\
\Psi_t(m,s)&=\sum_{p\in\P}x(p(0),s) A(x_s,p) \frac{\partial w_t(p)}{\partial \theta(m)}\\
&=\sum_{p\in\P}x(p(0),s) A(x_s,p) \varphi_{t,p}(m)
\end{align*}

Since, only the path strengths are changing, the Gram matrix $K_t$ is given by 
\begin{align*}
K_t(s,s')&={\Psi_t(\cdot,s)}^\top \Psi_t(\cdot,s')\\
&=\sum_{m=1}^{d_{net}} \Psi_t(m,s) \Psi_t(m,s')\\
&=\sum_{m=1}^{d_{net}} \left(\sum_{p_1\in\P}x(p_1(0),s) A(x_s,p_1) \varphi_{t,p_1}(m)\right)\left(\sum_{p_2\in\P}x(p_2(0),s') A(x_{s'},p_2) \varphi_{t,p_2}(m)\right)\\
&=\sum_{m=1}^{d_{net}} \underset{p_1,p_2\rsa\theta(m)}{\sum_{p_1,p_2\in P:}} x(p_1(0),s) A(x_s,p_1)x(p_2(0),s') A(x_{s'},p_2) \varphi_{t,p_1}(m) \varphi_{t,p_2}(m)\\
&=\sum_{i=1}^{d_{in}}\underset{p_1,p_2\rsa i}{\sum_{p_1,p_2\in P:}} x(p_1(0),s) A(x_s,p_1)x(p_2(0),s') A(x_{s'},p_2) \ip{\varphi_{t,p_1}, \varphi_{t,p_2}}\\
&=\sum_{i=1}^{d_{in}}\underset{p_1,p_2\rsa i}{\sum_{p_1,p_2\in P:}} x(i,s) A(x_s,p_1)x(i,s') A(x_{s'},p_2) \ip{\varphi_{t,p_1}, \varphi_{t,p_2}}\\
&=\sum_{i=1}^{d_{in}} x(i,s)x(i,s') \underset{p_1,p_2\rsa i}{\sum_{p_1,p_2\in P:}} A(x_s,p_1) A(x_{s'},p_2) \ip{\varphi_{t,p_1}, \varphi_{t,p_2}}
\end{align*}
\end{proof}

\textbf{Statement and Proof of Lemma~\ref{lm:pathdot}}
\begin{lemma}
Under Assumption~\ref{assmp:mainone}, for paths $p,p_1,p_2\in \P, p_1\neq p_2$, at initialisation we have (i) $\E{\ip{\varphi_{0,p_1}, \varphi_{0,p_2}}}= 0$, (ii) ${\ip{\varphi_{0,p}, \varphi_{0,p}}}= d\sigma^{2(d-1)}$
\end{lemma}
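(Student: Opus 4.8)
The plan is to unpack the definition of the path sensitivities $\varphi_{0,p}(m)$ and then use the two features of Assumption~\ref{assmp:mainone}: distinct entries of $\Theta_0$ are mutually independent, and each has mean zero. Recall that $\varphi_{0,p}(m)=0$ unless the path $p$ passes through the weight $\theta(m)$, and that when $p\rsa\theta(m)$ (say $\theta(m)$ sits in layer $l'$) we have $\varphi_{0,p}(m)=\prod_{l=1,\,l\neq l'}^{d}\Theta_0(l,p(l-1),p(l))$, a product of exactly $d-1$ weight entries.

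For part (ii) I would write $\ip{\varphi_{0,p},\varphi_{0,p}}=\sum_{m=1}^{d_{net}}\varphi_{0,p}(m)^2$. Each nonzero summand is a product of $d-1$ entries each equal to $\pm\sigma$, so $\varphi_{0,p}(m)^2=\sigma^{2(d-1)}$ \emph{deterministically}; and since a path uses exactly one weight in each of the $d$ layers, there are exactly $d$ indices $m$ with $p\rsa\theta(m)$. Summing gives $\ip{\varphi_{0,p},\varphi_{0,p}}=d\,\sigma^{2(d-1)}$, with no expectation needed (consistent with the statement).

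For part (i), expand $\ip{\varphi_{0,p_1},\varphi_{0,p_2}}=\sum_{m}\varphi_{0,p_1}(m)\varphi_{0,p_2}(m)$; a summand vanishes unless both paths traverse $\theta(m)$, so fix such an $m$ with $\theta(m)$ in layer $l'$ and consider $\varphi_{0,p_1}(m)\varphi_{0,p_2}(m)=\prod_{l\neq l'}\Theta_0(l,p_1(l-1),p_1(l))\,\Theta_0(l,p_2(l-1),p_2(l))$. The crucial combinatorial observation is that, since $p_1$ and $p_2$ share the layer-$l'$ edge but $p_1\neq p_2$, they must differ on the edge of some layer $\tilde l\neq l'$; hence the entry $W\defeq\Theta_0(\tilde l,p_1(\tilde l-1),p_1(\tilde l))$ appears with multiplicity exactly one among the $2(d-1)$ factors of this product — it lies in a different layer from every factor with $l\neq\tilde l$, and within layer $\tilde l$ it differs from $p_2$'s factor by the choice of $\tilde l$. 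Factoring $W$ out and using that it is independent of all the remaining factors and has mean zero yields $\E{\varphi_{0,p_1}(m)\varphi_{0,p_2}(m)}=\E{W}\cdot\E{\text{remaining factors}}=0$; summing over $m$ and using linearity of expectation completes (i).

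The one non-routine point — and the place I expect any friction — is making the combinatorial claim fully precise: that two distinct paths agreeing on one edge necessarily disagree on an edge in some \emph{other} layer $\tilde l\neq l'$ (a short case analysis on where the paths first differ, noting that sharing the layer-$l'$ weight forces agreement on positions $l'-1$ and $l'$), and that the associated weight entry then occurs exactly once in the product so it can be isolated to annihilate the expectation. Everything else — independence and zero mean of the Bernoulli entries, and the count of $d$ weights per path — is immediate from Assumption~\ref{assmp:mainone} and the definitions of $\varphi$, strength, and activation.
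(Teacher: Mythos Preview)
Your proposal is correct and follows essentially the same approach as the paper: for (ii) you both count the $d$ nonzero summands and observe each equals $\sigma^{2(d-1)}$ deterministically; for (i) you both reduce to a fixed $m$ with $p_1,p_2\rsa\theta(m)$, locate a layer $\tilde l\neq l'$ where the two paths use distinct weights, and then exploit independence and the zero mean of $\Theta_0$ to kill the expectation. The only cosmetic difference is that the paper pushes the expectation through the layerwise product and then factors the layer-$\tilde l$ term, whereas you isolate a single multiplicity-one weight $W$ and factor it out directly; these are equivalent.
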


\begin{proof}
\begin{align*}
\ip{\varphi_{t,p_1}, \varphi_{t,p_2}}= \sum_{m=1}^{d_{net}} \varphi_{t,p_1}(m)\varphi_{t,p_2}(m)
\end{align*}
Let $\theta(m),m\in[d_{net}]$ be any weight such that $p\rsa \theta(m)$, and w.l.o.g let $\theta(m)$ belong to layer $l'\in[d]$. 
If either $p_1\bcancel{\rsa}\theta(m)$ or $p_2\bcancel{\rsa}\theta(m)$, then it follows that $\varphi_{t,p_1}(m)\varphi_{t,p_2}(m)=0$. In the case when $p_1,p_2\rsa\theta(m)$, we have
\begin{align*}
&\E{\varphi_{0,p_1}(m)\varphi_{0,p_2}(m)}\\
&=\E{\underset{l\neq l'}{\underset{l=1}{\overset{d}{\Pi}}} \Bigg(\Tb_0(l,p_1(l-1),p_1(l))\Tb_0(l,p_2(l-1),p_2(l)) \Bigg)}\\
&=\underset{l\neq l'}{\underset{l=1}{\overset{d}{\Pi}}}\E{\Tb_0(l,p_1(l-1),p_1(l))\Tb_0(l,p_2(l-1),p_2(l))}
\end{align*}
where the $\E{\cdot}$ moved inside the product because at initialisation the weights (of different layers) are independent of each other.
Since $p_1\neq p_2$, in one of the layers $\tilde{l}\in[d-1],\tilde{l}\neq l'$ they do not pass through the same weight, i.e., $\Tb_0(\tilde{l},p_1(\tilde{l}-1),p_1(\tilde{l}))$ and $\Tb_0(\tilde{l},p_2(\tilde{l}-1),p_2(\tilde{l}))$ are distinct weights. Using this fact
\begin{align*}
&\E{\varphi_{0,p_1}(m)\varphi_{0,p_2}(m)}\\
&=\underset{l\neq l',\tilde{l}}{\underset{l=1}{\overset{d}{\Pi}}}\E{\Tb_0(l,p_1(l-1),p_1(l))\Tb_0(l,p_2(l-1),p_2(l))}\\
&=\E{\Tb_0(\tilde{l},p_1(\tilde{l}-1),p_1(\tilde{l}))}\E{\Tb_0(\tilde{l},p_2(\tilde{l}-1),p_2(\tilde{l}))}\\
&=0
\end{align*}

The proof of (ii) is complete by noting that $\sum_{m=1}^{d_{net}} \varphi_{t,p}(m)\varphi_{t,p}(m)$ has $d$ non-zero terms for a single path $p$ and at initialisation we have 
\begin{align*}
&{\varphi_{0,p}(m)\varphi_{0,p}(m)}\\
&={\underset{l\neq l'}{\underset{l=1}{\overset{d}{\Pi}}} \Tb^2_0(l,p(l-1),p(l))}\\
&=\sigma^{2(d-1)}
\end{align*}
\end{proof}

\textbf{Statement and Proof of Theorem~\ref{th:dgnexp}}
\begin{theorem}[DIP in DGN]
Under Assumption~\ref{assmp:mainone}, ~\ref{assmp:maintwo}, and $\frac{4d}{w^2}<1$ it follows that
 \begin{align*}
&\E{K_0}=d\sigma^{2(d-1)}(x^\top x \odot \lambda_0)\\
&Var\left[K_0\right]\leq O\left(d^2_{in}\sigma^{4(d-1)}\max\{d^2w^{2(d-2)+1}, d^3w^{2(d-2)}\}\right)
\end{align*}
\end{theorem}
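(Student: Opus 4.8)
I would begin from the path-space form of the Gram matrix supplied by \Cref{lm:sigwire}, $K_0(s,s')=\sum_{i=1}^{d_{in}}x(i,s)x(i,s')\,\kappa_0(s,s',i)$ with $\kappa_0(s,s',i)=\sum_{p_1,p_2\rsa i}A_{\G_0}(x_s,p_1)A_{\G_0}(x_{s'},p_2)\ip{\varphi_{0,p_1},\varphi_{0,p_2}}$, and exploit the two decoupling structures: throughout, the expectation is over $\Theta_0$ with the gating $\G_0$ held fixed (equivalently, conditional on $\G_0$), which is the natural setting in which $\lambda_0$ appears unaveraged and which covers the GaLU and DLN variants directly; \Cref{assmp:maintwo} is what guarantees the path-activity factors $A_{\G_0}(\cdot,\cdot)$ and the path-strength factors $\ip{\varphi_{0,\cdot},\varphi_{0,\cdot}}$ are decoupled, and \Cref{assmp:mainone} makes every odd moment of a weight vanish. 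For the mean, I would move $\E{\cdot}$ inside the path sums; since the $A$-factors are then constants, $\E{\kappa_0(s,s',i)}=\sum_{p_1,p_2\rsa i}A(x_s,p_1)A(x_{s'},p_2)\E{\ip{\varphi_{0,p_1},\varphi_{0,p_2}}}$, and \Cref{lm:pathdot} annihilates every cross term ($p_1\neq p_2$) and replaces each diagonal term by $d\sigma^{2(d-1)}$. Hence $\E{\kappa_0(s,s',i)}=d\sigma^{2(d-1)}\sum_{p\rsa i}A(x_s,p)A(x_{s'},p)=d\sigma^{2(d-1)}\lambda_0(s,s')$ (the sum being independent of $i$ by symmetry), and pulling the data factor outside the sum over $i$ gives $\E{K_0(s,s')}=d\sigma^{2(d-1)}(x^\top x)(s,s')\,\lambda_0(s,s')$, which is the claimed Hadamard product.

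\textbf{Reduction for the variance.} Write $Z_{p_1,p_2}\defeq\ip{\varphi_{0,p_1},\varphi_{0,p_2}}$, and split $\kappa_0(s,s',i)$ into its diagonal part $\sum_{p\rsa i}A(x_s,p)A(x_{s'},p)Z_{p,p}$ and its off-diagonal part $R_i\defeq\sum_{p_1\neq p_2\rsa i}A(x_s,p_1)A(x_{s'},p_2)Z_{p_1,p_2}$. By \Cref{lm:pathdot}(ii), $Z_{p,p}=d\sigma^{2(d-1)}$ is deterministic, so the diagonal part contributes nothing to the variance, and $\mathrm{Var}[K_0(s,s')]=\sum_{i,j}x(i,s)x(i,s')x(j,s)x(j,s')\,\mathrm{Cov}[R_i,R_j]$. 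Since $\E{R_i}=0$ by \Cref{lm:pathdot}(i), $\mathrm{Cov}[R_i,R_j]=\E{R_iR_j}=\sum_{p_1\neq p_2\rsa i,\;p_3\neq p_4\rsa j}A(x_s,p_1)A(x_{s'},p_2)A(x_s,p_3)A(x_{s'},p_4)\,\E{Z_{p_1,p_2}Z_{p_3,p_4}}$. Bounding the $A$-factors by $1$ and $\sum_{i,j}|x(i,s)x(i,s')x(j,s)x(j,s')|$ by $d_{in}^2\|x\|_\infty^4$ (this is where the $d_{in}^2$ comes from), the whole problem reduces to bounding, for each fixed pair $i,j$, the sum $\sum_{p_1\neq p_2\rsa i,\;p_3\neq p_4\rsa j}\bigl|\E{Z_{p_1,p_2}Z_{p_3,p_4}}\bigr|$.

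\textbf{The moment/counting estimate.} Expanding $Z_{p_1,p_2}=\sum_{l'}\mathbbm{1}\{p_1,p_2\text{ share the layer-}l'\text{ edge}\}\prod_{l\neq l'}\Theta_0(l,p_1(l-1),p_1(l))\Theta_0(l,p_2(l-1),p_2(l))$, the quantity $\E{Z_{p_1,p_2}Z_{p_3,p_4}}$ becomes a double sum over the two omitted layers $l',l''$ of expectations of products of at most $4(d-1)$ weights. By \Cref{assmp:mainone}, such an expectation vanishes unless, in every layer, the multiset of edges used by the four paths has every edge with even multiplicity; this forces layer-by-layer coincidence patterns among $p_1,p_2,p_3,p_4$ and fixes the scalar value of each surviving term to $\sigma^{4(d-1)}$ (using $\Theta_0(l,\cdot,\cdot)^2=\sigma^2$). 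What remains is purely to count, for fixed input nodes $i,j$, the four-path configurations compatible with some admissible coincidence pattern, weighted by the omitted-layer sums. Organising this count by the number of layers in which a coincidence is forced: each free intermediate node contributes a factor $w$, and each choice of a forced-coincidence layer (or of $l'$, $l''$) contributes a factor $O(d)$; the two dominant families — those carrying only the minimal set of forced coincidences — account respectively for the $d^2w^{2(d-2)+1}$ and $d^3w^{2(d-2)}$ terms, and multiplying by the $d_{in}^2$ and $\sigma^{4(d-1)}$ factors collected above yields the stated variance bound.

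\textbf{Role of $\tfrac{4d}{w^2}<1$ and the main obstacle.} Each additional layer in which the even-multiplicity constraint forces both path-pairs (or all four paths) to coincide costs a factor $\lesssim 4d/w^2$: roughly, two node coordinates become pinned ($w^{-2}$), while there are $\le\binom{4}{2}$ admissible pairings and $\le d$ choices of layer. Hence the contribution of configurations with $k$ such extra coincidences is $\lesssim (4d/w^2)^k$ times the leading term, and the hypothesis $4d/w^2<1$ makes $\sum_{k\ge0}(4d/w^2)^k=O(1)$, so these higher-order terms are absorbed into the constant rather than dominating. The main obstacle is precisely this combinatorial bookkeeping in the moment/counting step: systematically enumerating, for four paths through two prescribed input nodes, all edge-coincidence patterns satisfying the even-degree constraint, keeping the powers of $w$, $d$, and $\sigma$ cleanly separated, and verifying that $4d/w^2<1$ genuinely controls the tail of extra-coincidence terms. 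The expectation identity, by contrast, is immediate once \Cref{assmp:maintwo} and \Cref{lm:pathdot} are in hand.
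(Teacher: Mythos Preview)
Your proposal is correct and matches the paper's approach: the mean follows directly from \Cref{lm:pathdot}, and the variance is handled by expanding into four-path moments $\E{Z_{p_1,p_2}Z_{p_3,p_4}}$, invoking the even-edge-multiplicity constraint forced by the symmetric Bernoulli weights, counting surviving configurations by the number of layers in which the two ``base'' paths cross, and summing the geometric tail of extra-crossing contributions under $4d/w^2<1$. Your organisation is marginally cleaner in one place---observing that $Z_{p,p}=d\sigma^{2(d-1)}$ is \emph{deterministic} under \Cref{assmp:mainone} lets you isolate the variance in the off-diagonal remainder $R_i$ immediately, whereas the paper writes out $\E{K_0^2}$ and $\E{K_0}^2$ separately and identifies the $p_1=p_2$, $p_3=p_4$ cancellations by hand---but the combinatorics (the paper's eight explicit Cases versus your ``dominant families'') and the resulting powers of $d,w$ are the same.
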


\begin{proof}
The first of the above two claims follow from the algebraic expression for $K_t$ and Lemma~\ref{lm:pathdot}. We now look at the variance calculation. The idea is that we expand  $Var\left[K_0(s,s')\right]=\E{K_0(s,s')^2} -\E{K_0(s,s')}^2$ and identify the the terms which cancel due to subtraction and then bound the rest of the terms.

Let $\theta(m)$ belong to layer $l'(m)$, then 
\begin{align}\label{eq:kexpect}
\E{K_0(s,s')}&=\sum_{m=1}^{d_{net}}\E{\left(\sum_{p_1 \in P}x(p_1(0),s)A(s,p_1)\frac{\partial w_{\Tb}(p_1)}{\partial \theta(m)}\right)\left(\sum_{p_2\in P}x(p_2(0),s)A(s',p_2)\frac{\partial w_{\Tb}(p_2)}{\partial \theta(m)}\right)}\nn\\
&=\sum_{m=1}^{d_{net}}\E{\sum_{p_1,p_2\in P}x(p_1(0),s)A(s,p_1)\frac{\partial w_{\Tb}(p_1)}{\partial \theta(m)}x(p_2(0),s')A(s',p_2)\frac{\partial w_{\Tb}(p_2)}{\partial \theta(m)}}\nn\\
&=\sum_{m=1}^{d_{net}}\underset{p_1,p_2\rsa\theta(m)}{\sum_{p_1,p_2\in P}}x(p_1(0),s)A(s,p_1)x(p_2(0),s')A(s',p_2) \E{\underset{l\neq l'(m)}{\underset{l=1}{\overset{d-1}{\Pi}}} \Tb_0(l,p_1(l-1),p_1(l)) \Tb_0(l,p_2(l-1),p_2(l))}\nn\\
&\stackrel{(a)}=\sum_{m=1}^{d_{net}}\underset{p_1,p_2\rsa\theta(m)}{\sum_{p_1,p_2\in P}}x(p_1(0),s)A(s,p_1)x(p_2(0),s')A(s',p_2) \underset{l\neq l'(m)}{\underset{l=1}{\overset{d-1}{\Pi}}} \E{\Tb_0(l,p_1(l-1),p_1(l)) \Tb_0(l,p_2(l-1),p_2(l))}
\end{align}
where $(a)$ follows from the fact that at initialisation the layer weights are independent of each other. Note that the right hand side of \eqref{eq:kexpect} only terms with $p_1=p_2$ will survive the expectation.

In the expression in \eqref{eq:kexpectsquare} note that $p_1=p_2$ and $p_3=p_4$.
\begin{align}\label{eq:kexpectsquare}
&\E{K_0(s,s')}^2=\nn\\
&\left(\sum_{m=1}^{d_{net}}\underset{p_1,p_2\rsa\theta(m)}{\sum_{p_1,p_2\in P}}x(p_1(0),s)A(s,p_1)x(p_2(0),s')A(s',p_2) \underset{l\neq l'(m)}{\underset{l=1}{\overset{d-1}{\Pi}}} \E{\Tb_0(l,p_1(l-1),p_1(l)) \Tb_0(l,p_2(l-1),p_2(l))}\right)\nn\\
&\left(\sum_{m'=1}^{d_{net}}\underset{p_3,p_4\rsa\theta(m')}{\sum_{p_3,p_4\in P}}x(p_3(0),s)A(s,p_3)x(p_4(0),s')A(s',p_4) \underset{l\neq l'(m')}{\underset{l=1}{\overset{d-1}{\Pi}}} \E{\Tb_0(l,p_3(l-1),p_3(l)) \Tb_0(l,p_4(l-1),p_4(l))}\right)\nn\\
&=\nn\\
&\sum_{m,m'=1}^{d_{net}}\underset{p_3,p_4\rsa\theta(m')}{\underset{p_1,p_2\rsa\theta(m)}{\sum_{p_1,p_2,p_3,p_4\in P}}}\Bigg[\bigg(x(p_1(0),s)A(s,p_1)x(p_2(0),s')A(s',p_2)x(p_3(0),s)A(s,p_3)x(p_4(0),s')A(s',p_4)\bigg)\nn\\
&\bigg( \underset{l\neq l'(m)} {\underset{l\neq l'(m')}{\underset{l=1}{\overset{d-1}{\Pi}}}} \E{\Tb_0(l,p_1(l-1),p_1(l)) \Tb_0(l,p_2(l-1),p_2(l))}\E{\Tb_0(l,p_3(l-1),p_3(l)) \Tb_0(l,p_4(l-1),p_4(l))} \bigg)\nn\\
&\bigg( \E{\Tb_0(l,p_1(l'(m')-1),p_1(l'(m'))) \Tb_0(l,p_2(l'(m')-1),p_2(l'(m')))}\bigg)\nn\\
&\bigg(\E{\Tb_0(l,p_3(l'(m)-1),p_3(l'(m))) \Tb_0(l,p_4(l'(m)-1),p_4(l'(m)))} \bigg)\Bigg]\nn\\
\end{align}

In the expression in \eqref{eq:ksquareexpect}, paths $p_1,p_2,p_3,p_4$ do not have constraints, and can be distinct.
\begin{align}\label{eq:ksquareexpect}
&\E{K^2_0(s,s')}=\nn\\
&\sum_{m,m'=1}^{d_{net}}\underset{p_3,p_4\rsa\theta(m')}{\underset{p_1,p_2\rsa\theta(m)}{\sum_{p_1,p_2,p_3,p_4\in P}}}\Bigg[\bigg(x(p_1(0),s)A(s,p_1)x(p_2(0),s')A(s',p_2)x(p_3(0),s)A(s,p_3)x(p_4(0),s')A(s',p_4)\bigg)\nn\\
&\bigg( \underset{l\neq l'(m)} {\underset{l\neq l'(m')}{\underset{l=1}{\overset{d-1}{\Pi}}}} \E{\Tb_0(l,p_1(l-1),p_1(l)) \Tb_0(l,p_2(l-1),p_2(l))\Tb_0(l,p_3(l-1),p_3(l)) \Tb_0(l,p_4(l-1),p_4(l))} \bigg)\nn\\
&\bigg( \E{\Tb_0(l,p_1(l'(m')-1),p_1(l'(m'))) \Tb_0(l,p_2(l'(m')-1),p_2(l'(m')))}\bigg)\nn\\
&\bigg(\E{\Tb_0(l,p_3(l'(m)-1),p_3(l'(m))) \Tb_0(l,p_4(l'(m)-1),p_4(l'(m)))} \bigg)\Bigg]\nn\\
\end{align}

We now state the following facts/observations.

$\bullet$ \textbf{Fact 1:} Any term that survives the expectation (i.e., does not become $0$) and participates in \eqref{eq:ksquareexpect} is of the form $\sigma^{4(d-1)}\big(x(p_1(0),s)A(s,p_1)x(p_2(0),s')A(s',p_2)x(p_3(0),s)A(s,p_3)x(p_4(0),s')A(s',p_4)\big)$, where $p_1,p_2,p_3,p_4$ are free variables, and participates in \eqref{eq:kexpectsquare} is of the form $\sigma^{4(d-1)}\big(x(p_1(0),s)A(s,p_1)x(p_2(0),s')A(s',p_2)x(p_3(0),s)A(s,p_3)x(p_4(0),s')A(s',p_4)\big)$, where $p_1=p_2,p_3=p_4$.

$\bullet$ \textbf{Fact 2:} The number of paths through a particular weight $\theta(m)$ in one of the middle layers is $d_{in}w^{d-3}$, and the number of paths through a particular weight $\theta(m)$ in either the first or the last layer is $d_{in}w^{d-2}$ .

$\bullet$ \textbf{Fact 3:} Let $\P'$ be an arbitrary set of paths constrained to pass through some set of weights. Let $\P''$ be the set of paths obtained by adding an additional constraint that the paths also should pass through a particular weight say $\theta(m)$. Now, if $\theta(m)$ belongs to :

$1.$ a middle layer, then $|\P''|=\frac{|\P'|}{w^2}$.

$2.$ the first layer or the last layer, then $|\P''|=\frac{|\P'|}{w}$.

$\bullet$ \textbf{Fact 4:} For any $p_1,p_2,p_3,p_4$ combination that survives the expectation in \eqref{eq:ksquareexpect} can be written as 

\begin{align*}
&\bigg( \underset{l\neq l'(m)} {\underset{l\neq l'(m')}{\underset{l=1}{\overset{d-1}{\Pi}}}} \E{\Tb_0(l,p_1(l-1),p_1(l)) \Tb_0(l,p_2(l-1),p_2(l))\Tb_0(l,p_3(l-1),p_3(l)) \Tb_0(l,p_4(l-1),p_4(l))} \bigg)\nn\\
&\bigg( \E{\Tb_0(l,p_1(l'(m')-1),p_1(l'(m'))) \Tb_0(l,p_2(l'(m')-1),p_2(l'(m')))}\bigg)\nn\\
&\bigg(\E{\Tb_0(l,p_3(l'(m)-1),p_3(l'(m))) \Tb_0(l,p_4(l'(m)-1),p_4(l'(m)))} \bigg)\Bigg]\nn\\
&=\\
&\bigg( \underset{l\neq l'(m)} {\underset{l\neq l'(m')}{\underset{l=1}{\overset{d-1}{\Pi}}}} \Tb^2_0(l,\rho_a(l-1),\rho_a(l)) \Tb^2_0(l,\rho_b(l-1),\rho_b(l)) \bigg)\nn\\
&\bigg( \Tb^2_0(l,\rho_a(l'(m')-1),\rho_a(l'(m')))\bigg)\nn\\
&\bigg({\Tb^2_0(l,\rho_b(l'(m)-1),\rho_b(l'(m)))} \bigg),
\end{align*}

where $\rho_a\rsa \theta(m)$ and $\rho_b\rsa \theta(m')$ are what we call as \emph{base} (case) paths.

$\bullet$ \textbf{Fact 5:} For any given base paths $\rho_a$ and $\rho_b$ there could be multiple assignments possible for $p_1,p_2,p_3,p_4$.

$\bullet$ \textbf{Fact 6:}  Terms in \eqref{eq:ksquareexpect}, wherein, the base case is generated as $p_1=p_2=\rho_a$ and $p_3=p_4=\rho_b$ (or $p_1=p_2=\rho_b$ and $p_3=p_4=\rho_a$), get cancelled with the corresponding terms in \eqref{eq:kexpectsquare}.

$\bullet$ \textbf{Fact 7:}  When the bases paths $\rho_a$ and $\rho_b$ do not intersect (i.e., do not pass through the same weight in any one of the layers), the only possible assignment is $p_1=p_2=\rho_a$ and $p_3=p_4=\rho_b$ (or $p_1=p_2=\rho_b$ and $p_3=p_4=\rho_a$), and such terms are common in \eqref{eq:ksquareexpect} and \eqref{eq:kexpectsquare}, and hence do not show up in the variance term.

$\bullet$ \textbf{Fact 7:} Let base paths $\rho_a$ and $\rho_b$ cross at layer $l_1, \ldots, l_k, k \in [d-1]$, and let $\rho_a=(\rho_a(1),\ldots,\rho_a(k+1))$ where $\rho_a(1)$ is a sub-path string from layer $1$ to $l_1$, and $\rho_a(2)$ is the sub-path string from layer $l_1+1$ to $l_2$ and so on, and $\rho_a(k+1)$ is the sub-path string from layer $l_k+1$ to the output node. Then the set of paths that can occur in $\E{K_0(s,s')^2}$ are of the form:
\begin{enumerate}
\item $p_1=p_2=\rho_a, p_3=p_4=\rho_b$ (or $p_1=p_2=\rho_b, p_3=p_4=\rho_a$) which get cancelled in the $\E{K_0(s,s')}^2$ term.
\item $p_1=\rho_a$, $p_3=\rho_b$, $p_2=(\rho_b(1),\rho_a(2),\rho_a(3),\ldots,\rho_a(k+1))$, $p_4=(\rho_a(1),\rho_b(2),\rho_b(3),\ldots,\rho_b(k+1))$, which are obtained by splicing the base paths in various combinations. Note that for such spliced paths $p_1\neq p_2$ and $p_3\neq p_4$ and hence do not occur in the expression for $\E{K_0(s,s')}^2$ in \eqref{eq:kexpectsquare}.
\end{enumerate}

$\bullet$ \textbf{Fact 8:} For $k$ crossings of the base paths there are $4^{k+1}$ splicings possible, and those many terms are extra in the $\E{K_0(s,s')^2}$ calculation in \eqref{eq:ksquareexpect} comparison to the $\E{K_0(s,s')}^2$ calculation. We now enumerate cases of possible crossings, and reason out the magnitude of their contribution to the variance term using the \textbf{Fact 1} to \textbf{Fact 8}.

\textbf{Case $1$} $k=1$ crossing in either first or last layer. There are $2w$ weights in the first and the last layer, and the number of base path combinations is $w^{d-2}\times w^{d-2}$, and for each of these cases, $m,m'$ could take $O(d^2)$ possible values. And the multiplication of the weights themselves contribute to $\sigma^{4(d-1)}$. Putting them together we have
\begin{align*}
d^2_{in}\sigma^{4(d-1)}\times (2w)\times d^2\times (w^{d-2}\times w^{d-2})\times 4^2 = 32d^2_{in}\sigma^{4(d-1)}d^2 w^{2(d-2)+1}
\end{align*}

\textbf{Case $2$} $k=1$ crossing in one of the middle layers. There are $w^2(d-2)$ weights in the first and the last layer, and the number of base path combinations is $w^{d-3}\times w^{d-3}$, and for each of these cases, $m,m'$ could take $O(d^2)$ possible values. And the multiplication of the weights themselves contribute to $\sigma^{4(d-1)}$. Putting them together we have
\begin{align*}
d^2_{in}\sigma^{4(d-1)}\times w^2(d-2)\times d^2\times (w^{d-3}\times w^{d-3})\times 4^2\leq 16d^2_{in}\sigma^{4(d-1)} d^3 w^{2(d-3)}
\end{align*}

\textbf{Case $3$} $k=2$ crossings one in the first layer and other in the last layer. This case can be covered using Case $1$ and then further restricting that the base paths should also in the other layer. So, we have
\begin{align*}
32d^2_{in}\sigma^{4(d-1)}d^2 w^{2(d-2)+1} \times \underbrace{w}_{\text{possible weights in other layer}} \times \underbrace{w^{-1}\times w^{-1}}_{\text{reduction in paths due to additional restriction}} \times 4 = (32d^2_{in}\sigma^{4(d-1)}d^2 w^{2(d-2)+1})\times (4w^{-1}),
\end{align*}
where the $4$ is for the $4$ extra possible ways of splicing the base paths.

\textbf{Case $4$} $k=2$ crossings first one in the first layer or the last layer, and the second one in the middle layer. This can be obtained by looking at the Case $1$ and then adding the further restriction that the base paths should cross each other in the middle layer. 
\begin{align*}
32d^2_{in}\sigma^{4(d-1)}d^2 w^{2(d-2)+1}\times w^2(d-2) \times (w^{-2}w^{-2}) \times 4= (32d^2_{in}\sigma^{4(d-1)}d^2 w^{2(d-2)+1} )\times (4dw^{-2}) 
\end{align*}

\textbf{Case $5$} $k=2$ crossings in the middle layer. This can be obtained by taking Case $2$ and then adding the further restriction that the base paths should cross each other in the middle layer. 
\begin{align*}
16d^2_{in}\sigma^{4(d-1)} d^3 w^{2(d-3)}\times w^2(d-2) w^{-2}w^{-2}\times 4\leq (16d^2_{in}\sigma^{4(d-1)} d^3 w^{2(d-3)}) \times (4dw^{-2})
\end{align*}

\textbf{Case $6$} $k=3$ crossings first one in the first layer or the last layer, and the other two in the middle layers. This can be obtained by considering Case $4$ and then adding the further restriction that the base paths should cross each other in the middle layer. 
\begin{align*}
(32d^2_{in}\sigma^{4(d-1)}d^2 w^{2(d-2)+1} )\times (4dw^{-2}) \times (4dw^{-2}) 
\end{align*}

\textbf{Case $7$} $k=3$ crossings first two in the first and last layers and the third one in the middle layers. This can be obtained by considering Case $3$ and then adding the further restriction that the base paths should cross each other in the middle layer. 

\begin{align*}
(32d^2_{in}\sigma^{4(d-1)}d^2 w^{2(d-2)+1})\times (4w^{-1})\times (4dw^{-2}) 
\end{align*}

\textbf{Case $8$} $k=3$ crossings in the middle layer. This can be obtained by considering Case $5$ and then adding the further restriction that the base paths should cross each other in the middle layer. 
\begin{align*}
 (16d^2_{in}\sigma^{4(d-1)} d^3 w^{2(d-3)}) \times (4dw^{-2})\times (4dw^{-2}) 
\end{align*}

The cases can be extended in a similar way, increasing the number of crossings.  Now, assuming $\frac{4d}{w^2}<1$, the bounds in the various terms can be lumped together as below:

$\bullet$ We can add the bounds for Case $1$, Case $4$, Case $6$ and other cases obtained by adding more crossings (one at a time) in the middle layer to Case $6$. This gives rise to a term which is upper bounded by 
\begin{align*}
d^2_{in}\sigma^{4(d-1)}d^2w^{2(d-2)+1}\left(\frac{1}{1-4dw^{-2}}\right)
\end{align*}

$\bullet$ We can add the bounds for Case $3$, Case $7$ and other cases obtained by adding more crossings (one at a time) in the middle layer to Case $6$. This gives rise to a term which is upper bounded by 
\begin{align*}
d^2_{in}\sigma^{4(d-1)}d^3w^{2(d-2)} \left(\frac{1}{1-4dw^{-2}}\right)
\end{align*}

$\bullet$ We can add the bounds for Case $2$, Case $5$, Case $8$ and other cases obtained by adding more crossings (one at a time) in the middle layer to Case $6$. This gives rise to a term which is upper bounded by 
\begin{align*}
d^2_{in}\sigma^{4(d-1)}d^2w^{2(d-2)} \left(\frac{1}{1-4dw^{-2}}\right)
\end{align*}

Putting together we have the variance to be bounded by 
\begin{align*}
Cd^2_{in}\sigma^{4(d-1)}\max\{d^2w^{2(d-2)+1}, d^3w^{2(d-2)}\},
\end{align*}
for some constant $C>0$.
\end{proof}

\textbf{Statement and Proof of Lemma~\ref{lm:dgn-fra}}
\begin{lemma}
 Under Assumption~\ref{assmp:mainone},~\ref{assmp:maintwo} and gates sampled iid $Ber(\mu)$, we have, $\forall s,s'\in[n]$

(i) $\mathbb{E}_p\left[\lambda_0(s,s)\right]=\bar{\lambda}_{self}=(\mu w)^{d-1}$

ii) $\mathbb{E}_p\left[\lambda_0(s,s')\right]=\bar{\lambda}_{cross}= (\mu^2w)^{d-1}$
\end{lemma}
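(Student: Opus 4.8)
The plan is to unwind the definition of $\lambda_0$ in terms of path activities and push the expectation (over the random gating $\G_0$) inside the sum over paths, exploiting the independence structure of the Bernoulli gates. Recall that $\lambda_0(s,s')=\sum_{p\rsa i}A_{\G_0}(x_s,p)A_{\G_0}(x_{s'},p)$ for any fixed input node $i\in[d_{in}]$ (the value being independent of $i$ by symmetry), and that $A_{\G_0}(x_s,p)=\Pi_{l=1}^{d-1}G_{x_s,0}(l,p(l))$ with the gates $\{G_{x_s,0}(l,i)\}$ sampled i.i.d.\ $Ber(\mu)$. The first observation is a counting fact: the paths $p$ with $p(0)=i$ are indexed by the free coordinates $p(1),\dots,p(d-1)\in[w]$ (with $p(d)=1$ fixed), so there are exactly $w^{d-1}$ of them.

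For part (i), I would first note that since the gates (and hence path activities) are $0/1$-valued, $A_{\G_0}(x_s,p)^2=A_{\G_0}(x_s,p)$, so $\lambda_0(s,s)=\sum_{p\rsa i}A_{\G_0}(x_s,p)$. Taking expectations and using linearity,
\begin{align*}
\E{\lambda_0(s,s)}=\sum_{p\rsa i}\E{\Pi_{l=1}^{d-1}G_{x_s,0}(l,p(l))}=\sum_{p\rsa i}\Pi_{l=1}^{d-1}\E{G_{x_s,0}(l,p(l))}=\sum_{p\rsa i}\mu^{d-1},
\end{align*}
where the middle equality uses that, along a fixed path $p$, the gates $G_{x_s,0}(l,p(l))$ for $l=1,\dots,d-1$ sit in distinct layers and are therefore independent. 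Combined with the count of $w^{d-1}$ paths through $i$, this gives $\E{\lambda_0(s,s)}=(\mu w)^{d-1}$.

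For part (ii), with $s\neq s'$, I would expand
\begin{align*}
\E{\lambda_0(s,s')}=\sum_{p\rsa i}\E{\Pi_{l=1}^{d-1}G_{x_s,0}(l,p(l))G_{x_{s'},0}(l,p(l))}=\sum_{p\rsa i}\Pi_{l=1}^{d-1}\E{G_{x_s,0}(l,p(l))G_{x_{s'},0}(l,p(l))}.
\end{align*}
Here each factor involves two gates that are distinct (different input examples, hence i.i.d.) and so $\E{G_{x_s,0}(l,p(l))G_{x_{s'},0}(l,p(l))}=\mu^2$; the product over the $d-1$ intermediate layers is again justified by cross-layer independence. Summing $\mu^{2(d-1)}$ over the $w^{d-1}$ paths through $i$ yields $\E{\lambda_0(s,s')}=(\mu^2 w)^{d-1}$.

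The argument is essentially bookkeeping, so there is no single hard step; the only place requiring care is articulating precisely which collections of gates are independent (gates in distinct layers along one path, and gates for distinct input examples at the same node), so that the expectation of the product factors as the product of expectations. Note that Assumptions~\ref{assmp:mainone},~\ref{assmp:maintwo} are not actually needed for this particular computation — it only uses the i.i.d.\ $Ber(\mu)$ structure of $\G_0$ — but stating the lemma under them is harmless.
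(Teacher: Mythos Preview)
Your proposal is correct and follows essentially the same approach as the paper: both compute the expected number of active paths by multiplying per-layer activation probabilities ($\mu$ for a single input, $\mu^2$ for the overlap of two inputs) across the $d-1$ hidden layers and then summing over the $w^{d-1}$ paths through a fixed input node. Your write-up is simply a more formal version of the paper's two-sentence sketch, and your remark that Assumptions~\ref{assmp:mainone},~\ref{assmp:maintwo} are not actually used here is accurate.
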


\begin{proof}
The proof of (i) follows by noting that the average number of gates that are \emph{on} in each layer is $(\mu w)$, and there are $(\mu w)^{d-1}$ paths starting from a given input node $i\in[d_{in}]$ and ending at the output node. The proof of (ii) follow by noting that on an average $\mu^2w$ gates overlap per layer for two different inputs.
\end{proof}

\begin{lemma} 
Under Assumptions~\ref{assmp:mainone},~\ref{assmp:maintwo}, in soft-GaLU networks we have: (i) $\E{K_0}=\E{K^w_0}+\E{K^a_0}$, 
 (ii) $\E{K^w_0}=\sigma^{2(d-1)} (x^\top x)\odot \lambda$,  (iii) $\E{K^a_0}=\sigma^{2d}  (x^\top x)\odot \delta$
\comment{where (with $\partial_{m}=\partial_{\tg(m)}$ in the below), we have
\begin{align*}
\mu_w^{s,s'}(i)\stackrel{def}=&\sum_{m=1}^{d_{net}} \underset{p\rsa\theta(m)}{\sum_{p\in P: p(0)=i}}A_{\Tg_0}(x_s,p) A_{\Tg_0}(x_{s'},p)\\
\mu_a^{s,s'}(i)\stackrel{def}=&\sum_{m=1}^{d_{net}} \underset{}{\sum_{p\in P: p(0)=i}} \partial_{m}A_{\Tg}(x_s,p) \partial_{m}A_{\Tg}(x_{s'},p)
\end{align*}
}
\comment{
\begin{align*}
&\E{K^a_0(s,s')}=\left( \frac{2\sigma^2}{w}\right)^{d} \Big(\sum_{i=1}^{d_{in}}x(i,s) x(i,s') \\
&\sum_{m=1}^{d_{gnet}}\underset{p\rsa i}{{\sum_{p\in \P:} }} \partial_{\tg(m)}A_{\Tg}(s,p) \partial_{\tg(m)}A_{\Tg}(s',p)\Big)\\
\end{align*}
}
\end{lemma}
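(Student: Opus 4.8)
The plan is to exploit the fact that in a soft-GaLU network the gate-controlling parameters $\Tg$ and the strength-controlling parameters $\Tw$ occupy two disjoint coordinate blocks of the parameter vector, so the Jacobian splits as $\Psi_0^\top=[\Psi^w_0{}^\top,\Psi^a_0{}^\top]$ with no shared coordinate. Part (i) is then immediate: $K_0=\Psi_0^\top\Psi_0=\Psi^w_0{}^\top\Psi^w_0+\Psi^a_0{}^\top\Psi^a_0=K^w_0+K^a_0$, with no cross term (this is precisely where soft-GaLU differs from soft-ReLU, for which $\Psi^w_0{}^\top\Psi^a_0+\Psi^a_0{}^\top\Psi^w_0$ survives), and taking expectations gives $\E{K_0}=\E{K^w_0}+\E{K^a_0}$. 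Concretely, a $\Tw$-coordinate $\tw(m)$ has $\partial_{\tw(m)}\phi_{x_s,\G_0}=0$ (the feature depends on the gates only), while a $\Tg$-coordinate $\tg(m)$ has $\partial_{\tg(m)}w_0=0$ (path strengths depend on $\Tw$ only), so $\Psi^w_0(m,s)=\sum_p x(p(0),s)A_{\Tg_0}(x_s,p)\varphi_{0,p}(m)$ with $\varphi_{0,p}(m)=\partial_{\tw(m)}w_0(p)$, and $\Psi^a_0(m,s)=\sum_p x(p(0),s)w_0(p)\,\partial_{\tg(m)}A_{\Tg_0}(x_s,p)$.

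For (ii), I would replay the manipulations in the proofs of Lemma~\ref{lm:sigwire} and Theorem~\ref{th:dgnexp} restricted to the weight block. Substituting the path expansion of $\Psi^w_0$ and collecting over input coordinates yields $K^w_0(s,s')=\sum_{i}x(i,s)x(i,s')\sum_{p_1,p_2\rsa i}A_{\Tg_0}(x_s,p_1)A_{\Tg_0}(x_{s'},p_2)\ip{\varphi_{0,p_1},\varphi_{0,p_2}}$. Now invoke Assumption~\ref{assmp:maintwo}: $\Tg_0$ (hence every $A_{\Tg_0}(\cdot,p)$) is independent of $\Tw_0$ (hence of every $\ip{\varphi_{0,p_1},\varphi_{0,p_2}}$), so the expectation factors. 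Lemma~\ref{lm:pathdot} then annihilates all $p_1\neq p_2$ terms and evaluates the diagonal ones, so the surviving single path sum reproduces $\lambda_0(s,s')$ weighted by $x(i,s)x(i,s')$ and the per-path constant $\ip{\varphi_{0,p},\varphi_{0,p}}=d\sigma^{2(d-1)}$ of Lemma~\ref{lm:pathdot}, which gives the stated $\E{K^w_0}=\sigma^{2(d-1)}(x^\top x)\odot\lambda_0$.

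For (iii), expand $K^a_0(s,s')=\sum_m\Psi^a_0(m,s)\Psi^a_0(m,s')$ and collect the double path sum as $K^a_0(s,s')=\sum_{p_1,p_2}x(p_1(0),s)x(p_2(0),s')\,w_0(p_1)w_0(p_2)\sum_m\partial_{\tg(m)}A_{\Tg_0}(x_s,p_1)\,\partial_{\tg(m)}A_{\Tg_0}(x_{s'},p_2)$. Again by Assumption~\ref{assmp:maintwo} the expectation of $w_0(p_1)w_0(p_2)$ (a function of $\Tw_0$) factors out of the expectation of the gate-sensitivity sum (a function of $\Tg_0$); and by Assumption~\ref{assmp:mainone}, since $w_0(p)$ is a product of $d$ i.i.d.\ symmetric $\pm\sigma$ weights, $\E{w_0(p_1)w_0(p_2)}=0$ unless $p_1=p_2$, in which case it equals $\sigma^{2d}$. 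Only the diagonal $p_1=p_2$ terms survive; writing $\sum_p=\sum_{i}\sum_{p\rsa i}$, the inner sum is exactly $\delta(s,s')$, and $\E{K^a_0}=\sigma^{2d}(x^\top x)\odot\delta$ follows.

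The whole computation is routine once Lemmas~\ref{lm:sigwire},~\ref{lm:pathdot} and the decoupling Assumption~\ref{assmp:maintwo} are in place; the only points that need care are bookkeeping the two independent randomness sources $\Tw_0,\Tg_0$ and taking their expectations in the right order, and noticing why the power of $\sigma$ differs between (ii) and (iii) — in the weight block one differentiates a strength and loses one weight factor, leaving $\sigma^{2(d-1)}$, whereas in the gate block the full strength $w_0(p)$ (a product of $d$ weights) multiplies the gate sensitivities, leaving $\sigma^{2d}$. I would also make explicit that the cross term genuinely vanishes here, since that is what distinguishes this decomposition from the soft-ReLU one.
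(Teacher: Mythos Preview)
Your proposal is correct and follows the same route as the paper, whose proof is a single line (``Follows from Lemma~\ref{lm:pathdot}, and noting that $\Tg_0$ and $\Tw_0$ are iid''); you have simply unpacked what that line means, including the key observation that the block structure of the soft-GaLU parameterisation kills the cross term exactly and that for (iii) one uses $\E{w_0(p_1)w_0(p_2)}=\sigma^{2d}\mathbbm{1}_{\{p_1=p_2\}}$ in place of Lemma~\ref{lm:pathdot}. One minor bookkeeping remark: in part (ii) you invoke $\ip{\varphi_{0,p},\varphi_{0,p}}=d\sigma^{2(d-1)}$ from Lemma~\ref{lm:pathdot} and then report $\sigma^{2(d-1)}(x^\top x)\odot\lambda_0$ --- the factor $d$ silently vanished; this matches the lemma as stated but not Theorem~\ref{th:dgnexp}, so be aware it is a notational inconsistency in the paper rather than an error in your argument.
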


\begin{proof}
Follows from Lemma~\ref{lm:pathdot}, and noting that $\Tg_0$ and $\Tw_0$ are iid.
\end{proof}

\section{Deep Linear Networks}

In this case, $G(s,l,i)=1,\forall s\in[n],i\in[w],l\in[d-1]$. Note that all the paths are always active irrespective of which input is presented to the DLN. We can define the effective weight that multiplies each of the input dimensions as 
\begin{align}
\eta_{t}(i)\stackrel{def}= \sum_{p\in P: p(0)=i} w_{t}(p), i\in [d_{in}]
\end{align}
Using the above definition of $\eta=(\eta(i),i\in[d_{in}])\in \R^{d_{in}}$, the hidden feature representation can be simplified as 
\begin{align}
\hat{y}_t&=\Phi^\top_{x,1_{\dagger}} w_{t} \\&=x^\top \eta_{t}
\end{align}
 Thus it is clear that the DLN does not provide any high dimensional feature representation and the input features are retained as such. All that the depth adds is just a non-linear re-parameterisation of the weights. It also follows that $\lambda_0(s,s')=w^{d-1},\forall ,s,s'\in [n]$.

\begin{corollary}\label{th:dln} Under Assumption~\ref{assmp:maintwo}, for a DLN with $d_{in}=1$, and dataset with $n=1$ we have, \begin{align} \mathbf{E}_{\Tb}\left[K_0\right]=d(w\sigma^2)^{(d-1)}\end{align}
\end{corollary}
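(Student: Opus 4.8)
The plan is to read this off as a direct specialisation of \Cref{th:dgnexp} (the $\E{K_0}$ half), since a deep linear network is precisely the DGN in which every gate is frozen at the value $1$. First I would record the two structural simplifications that the DLN assumption forces: (i) $A_{\G_0}(x_s,p)=\Pi_{l=1}^{d-1}G(s,p(l),l)=1$ for every path $p$ and every input $s$, so that in particular Assumption~\ref{assmp:maintwo} holds vacuously (a deterministic gating pattern is statistically independent of the weights); and (ii) consequently $\lambda_0(s,s')=\sum_{p\rsa i}A_{\G_0}(x_s,p)A_{\G_0}(x_{s'},p)$ equals the number of paths emanating from a fixed input coordinate $i\in[d_{in}]$, which is $w^{d-1}$ because there are $w$ independent choices of node at each of the $d-1$ intermediate layers.

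Next I would substitute $d_{in}=1$ and $n=1$ into \Cref{th:dgnexp}. With a single input coordinate and a single example, $x^\top x\in\R^{1\times 1}$ is the scalar $\ip{x_1,x_1}$, which we normalise to $1$ (equivalently, take the lone input to be the scalar $x_1=1$ as in \textbf{Experiment $1$}), and $\lambda_0\in\R^{1\times1}$ is the scalar $w^{d-1}$ from the previous paragraph. Hence
\begin{align*}
\E{K_0}=d\sigma^{2(d-1)}\bigl(x^\top x\odot\lambda_0\bigr)=d\sigma^{2(d-1)}\cdot 1\cdot w^{d-1}=d(w\sigma^2)^{d-1},
\end{align*}
which is the claim. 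Alternatively, and perhaps more transparently, I would give the self-contained derivation bypassing the full theorem: since all path activities equal $1$ and $x\equiv1$, \Cref{lm:sigwire} collapses to $K_0=\sum_{p_1,p_2\in\P}\ip{\varphi_{0,p_1},\varphi_{0,p_2}}$; taking expectations and invoking \Cref{lm:pathdot}, the $p_1\neq p_2$ terms vanish and each of the $P=w^{d-1}$ diagonal terms contributes $d\sigma^{2(d-1)}$, giving the same total.

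There is no genuine obstacle here: the statement is a bookkeeping corollary of results already established. The only points that need a moment's care are the path count --- it is $w^{d-1}$, not $w^{d}$ or $dw^{d-1}$, since the output index $p(d)=1$ is trivial and the extra factor $d$ instead enters through the $d$ nonzero coordinates of $\varphi_{0,p}$ via \Cref{lm:pathdot}(ii) --- and the harmless normalisation $\ip{x_1,x_1}=1$ implicit in stating the result with no input-dependent prefactor. One should also flag explicitly that, although the corollary is stated only under Assumption~\ref{assmp:maintwo}, the computation still relies on the symmetric Bernoulli law of Assumption~\ref{assmp:mainone} through \Cref{lm:pathdot}; in the DLN case Assumption~\ref{assmp:maintwo} is automatically satisfied and hence is not separately needed.
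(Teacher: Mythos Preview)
Your proposal is correct and matches the paper's approach: the corollary is stated without proof, immediately following the observation that in a DLN $\lambda_0(s,s')=w^{d-1}$, and is meant to be read off from \Cref{th:dgnexp} exactly as you do. Your additional remarks --- that Assumption~\ref{assmp:mainone} is implicitly needed via \Cref{lm:pathdot}, and that the statement tacitly normalises $x_1=1$ --- are accurate and usefully fill in what the paper leaves unsaid.
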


\textbf{Experiment 8:} We consider a dataset with $n=1$ and $(x,y)=(1,1)$, i.e., $d_{in}=1$, let $w=100$ and look at various value of depth namely  $d=2,4,6,8,10$. We set $\sigma=\sqrt{\frac{1}{w}}$ and the weights are drawn according to Assumption~\ref{assmp:mainone}. We set the learning rate to be $\alpha=\frac{0.1}{d}$, and for this setting we expect the error dynamics to be the following $\frac{e^2_{t+1}}{e^2_t}=0.81$.
\comment{
\begin{align*}
e_{t+1}&=e_t-\frac{0.1}{d}d(w\sigma^2)^{2(d-1)}e_t\\
&=0.9^te_t
\end{align*}
}
The results are shown in \Cref{fig:dln}. We observe that irrespective of the depth the error dynamics is similar (since $\alpha=\frac{0.1}{d}$ ). However, we observe faster (in comparison to the ideal rate of $0.81$) convergence of error to zero since the magnitude of $K_t$ increases with time (see \Cref{fig:dln}).

\FloatBarrier
\begin{figure*}[h]
\resizebox{\textwidth}{!}{
\begin{tabular}{cccc}
\includegraphics[scale=0.4]{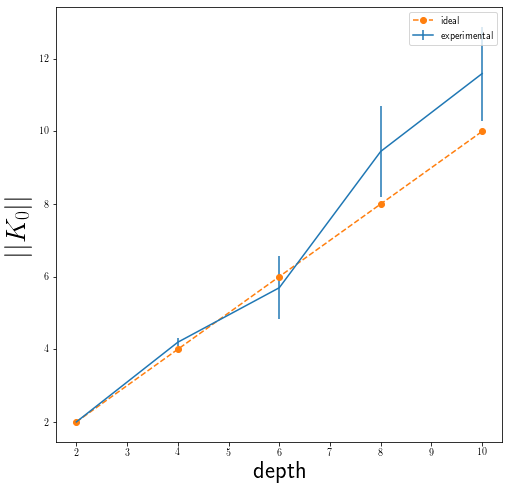}
&
\includegraphics[scale=0.4]{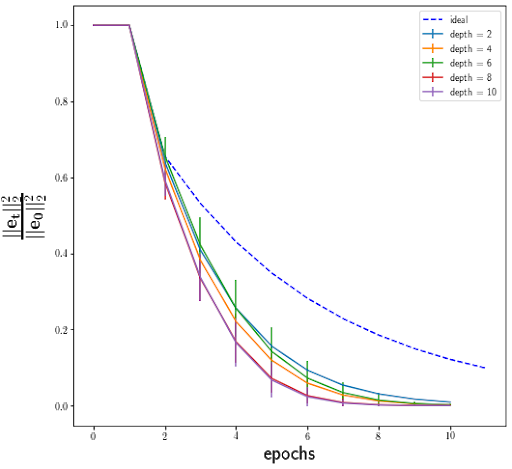}
&
\includegraphics[scale=0.4]{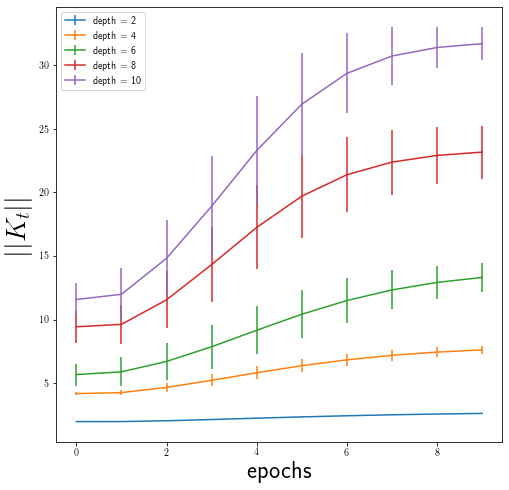}
&
\includegraphics[scale=0.4]{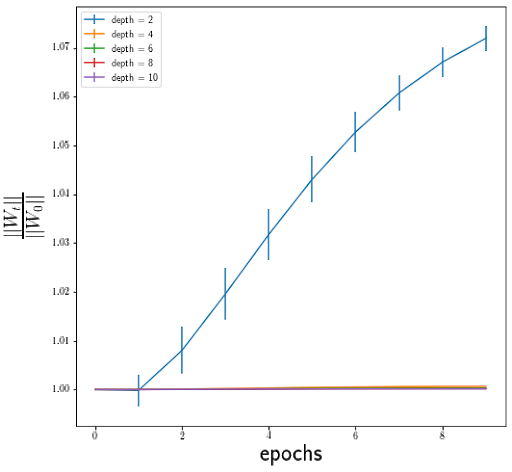}
\end{tabular}
}
\caption{In all the plots $d_{in}=1, n=1, w=100,\sigma^2=\frac{1}{w}$ averaged over $5$ runs. The left most plot shows $K_0$ as a function of depth. The second from left plot shows the convergence rate. The third plot from left shows the growth of $K_t$ over the course of training, and the right most plot shows the growth of weights ($L_2$-norm) with respect to time.}
\label{fig:dln}
\end{figure*}

\comment{\section{DGN-FRG}
\textbf{Effect of $p$} is shown in \Cref{fig:peff}. For $w=100$, we observe that for  the e.c.d.f gets better as the value of $p$ reduces till $p=0.3$, after which it starts degrading. This is due to the fact that the variance gets worse with $\frac{1}p$ (since $\sigma=\sqrt{\frac{1}{pw}}$. It can be seen that for $w=50$ the variance is more and hence the e.c.d.f gets better as we reduce $p$ only till $p=0.4$, after which it starts to degrade.

\FloatBarrier
\begin{figure*}[h]
\resizebox{\columnwidth}{!}{
\begin{tabular}{cc}
\includegraphics[scale=0.4]{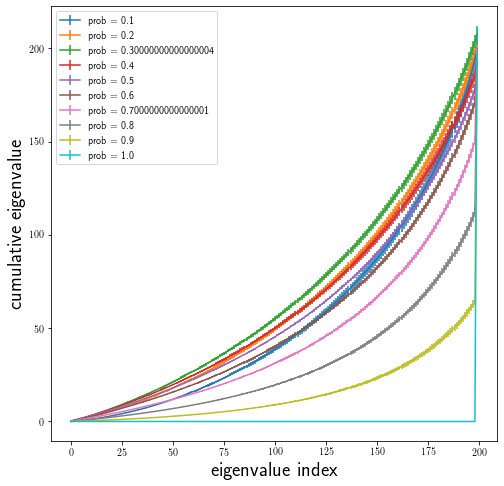}
&
\includegraphics[scale=0.4]{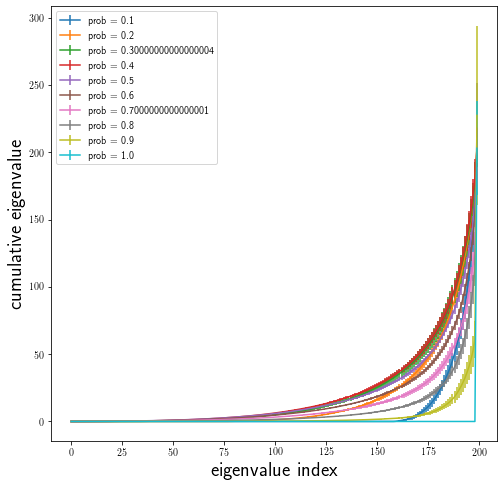}
\end{tabular}
}
\caption{Shows e.c.d.f for various values of $p$.}
\label{fig:peff}
\end{figure*}}

\textbf{Statement and Proof of Lemma~\ref{lm:invariance}}
\begin{lemma}
At $t=0$, under Assumptions~\ref{assmp:mainone},\ref{assmp:maintwo}, convolutional layers with global average pooling at the end causes translational invariance.
\begin{align*}
&\E{x_s(L,1)x_{s'}(L,1)}\\&=\frac{\sigma^{2(d-1)}}{d^2_{in}}\sum_{k=1}^{\hat{B}} \sum_{p_1,p_2\in b_k}  \Big( x(p_1(0),s) A(x_s,p_1)\\
&\quad\quad \quad\quad \quad\quad x(p_2(0),s') A(x_{s'},p_2) \Big)
\end{align*}
\end{lemma}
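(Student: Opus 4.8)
The plan is to start from the last of the observations preceding the statement, namely $x_s(L,1)=\phi_{x_s,\G_0}^\top w_0$ and $x_{s'}(L,1)=\phi_{x_{s'},\G_0}^\top w_0$, and to expand the product into a double sum over paths:
\[
x_s(L,1)\,x_{s'}(L,1)=\sum_{p_1=1}^{\hat P}\sum_{p_2=1}^{\hat P}\phi_{x_s,\G_0}(p_1)\,\phi_{x_{s'},\G_0}(p_2)\,w_0(p_1)\,w_0(p_2).
\]
The expectation $\E{\cdot}$ is taken over the strength weights with the gating network held fixed (by Assumption~\ref{assmp:maintwo} this is the same as the unconditional expectation), so the feature coordinates $\phi_{x_s,\G_0}(p)=x(p(0),s)A(x_s,p)$ are constants and pass through $\E{\cdot}$. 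Thus the claim reduces to computing the path-strength Gram matrix $M(p_1,p_2)\defeq\E{w_0(p_1)w_0(p_2)}$ and resubstituting.

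The second step is to evaluate $M$ using the bundle structure. In the circular-convolution network with global average pooling, a path strength factors as $w_0(p)=\tfrac1{d_{in}}\prod_{l=1}^{d-1}\theta^{(l)}(j_l(p))$, where $j_l(p)\in[\hat w]$ is the kernel offset used by $p$ in convolutional layer $l$ and the factor $\tfrac1{d_{in}}$ comes from the deterministic averaging weights of the last layer. Two paths lie in the same bundle $b_k$ exactly when they share the same offset tuple $(j_1,\dots,j_{d-1})$, hence carry the \emph{identical} random strength. Therefore, for $p_1,p_2\in b_k$,
\[
M(p_1,p_2)=\E{w_0(p_1)^2}=\frac1{d_{in}^2}\prod_{l=1}^{d-1}\E{\bigl(\theta^{(l)}(j_l)\bigr)^2}=\frac{\sigma^{2(d-1)}}{d_{in}^2},
\]
using Assumption~\ref{assmp:mainone}. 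If instead $p_1$ and $p_2$ lie in different bundles, their offset tuples disagree in some layer $\tilde l$, so $w_0(p_1)w_0(p_2)$ contains the product $\theta^{(\tilde l)}(j_{\tilde l}(p_1))\,\theta^{(\tilde l)}(j_{\tilde l}(p_2))$ of two distinct i.i.d.\ $\{-\sigma,+\sigma\}$ weights; since weights in distinct layers and distinct kernel entries of the same layer are independent, the expectation factorizes over layers and layer $\tilde l$ contributes $\E{\theta^{(\tilde l)}(j_{\tilde l}(p_1))}\E{\theta^{(\tilde l)}(j_{\tilde l}(p_2))}=0$. Hence $M(p_1,p_2)=\tfrac{\sigma^{2(d-1)}}{d_{in}^2}\,\mathbbm 1\{p_1,p_2\in\text{same bundle}\}$.

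Finally I would resubstitute: plugging $M$ into the double sum collapses it to a sum over bundles,
\[
\E{x_s(L,1)\,x_{s'}(L,1)}=\frac{\sigma^{2(d-1)}}{d_{in}^2}\sum_{k=1}^{\hat B}\ \sum_{p_1,p_2\in b_k}\phi_{x_s,\G_0}(p_1)\,\phi_{x_{s'},\G_0}(p_2),
\]
and unfolding $\phi_{x_s,\G_0}(p)=x(p(0),s)A(x_s,p)$ yields exactly the stated identity. The translational-invariance remark then follows by re-indexing the $d_{in}$ paths inside each bundle by the cyclic shift $i\in\{0,\dots,d_{in}-1\}$ of the input coordinate they originate from, which turns the inner double sum over $b_k$ into the displayed sum over $i$ with the rotated data matrix $x^{(i)}$.

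I do not anticipate a deep obstacle; the only points that need care are (i) pinning down the correspondence ``bundle $\leftrightarrow$ offset tuple,'' which is exactly where the weight-tied, circular nature of the convolution enters and which guarantees that all $d_{in}$ paths of a bundle share a single random strength, and (ii) making the cross-bundle cancellation rigorous, i.e., checking that one mismatched kernel offset zeroes out the whole product of $2(d-1)$ random factors — immediate from the symmetric-Bernoulli, i.i.d.\ assumption, but worth spelling out layer by layer.
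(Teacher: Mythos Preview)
Your proposal is correct and follows essentially the same approach as the paper: both write $x_s(L,1)=\phi_{x_s,\G_0}^\top w_0$, use the independence of $\G_0$ and $\Theta_0$ to pull the feature vectors outside the expectation, compute the path-strength Gram matrix $M=\E{w_0w_0^\top}$ (constant $\sigma^{2(d-1)}/d_{in}^2$ within a bundle, zero across bundles), and resubstitute. Your write-up is in fact more explicit than the paper's in justifying the two entries of $M$ via the offset-tuple description of bundles and the layerwise factorisation, whereas the paper simply asserts them as observations.
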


\begin{proof}
\begin{align*}
\E{x_s(L,1)x_{s'}(L,1)}&=\E{\phi^\top_{x_s,\G_0} w_0w_0^\top \phi^\top_{x_{s'},\G_0}}\\
&=\phi^\top_{x_s,\G_0}\E{ w_0 w_0^\top} \phi^\top_{x_{s'},\G_0},
\end{align*}
where we use the fact that the gates $\G_0$ are statistically independent of the weights. Now let $M=\E{ w_0 w_0^\top}$, we make the following observations about $M$:

$1.$ $M(p_1,p_2)=0$, if $p_1$ and $p_2$ belong to the different bundles.

$2.$ $M(p_1,p_2)=\frac{\sigma^{2(d-1)}}{d^2_{in}}$, if $p_1$ and $p_2$ belong to the same bundle.

Using the above two observations, we have at  $t=0$:

\begin{align*}
&\E{x_s(L,1)x_{s'}(L,1)}\\&=\phi^\top_{x_s,\G_0} M \phi^\top_{x_{s'},\G_0}\\
&=\sum_{p_1,p_2=1}^{\hat{P}} \Big(x(p_1(0),s) A(x_s,p_1) \\
&\quad\quad \quad\quad \quad\quad x(p_2(0),s') A(x_{s'},p_2) M(p_1,p_2)\Big)\\
&=\frac{\sigma^{2(d-1)}}{d^2_{in}}\sum_{k=1}^{\hat{B}} \sum_{p_1,p_2\in b_k}  \Big( x(p_1(0),s) A(x_s,p_1)\\
&\quad\quad \quad\quad \quad\quad x(p_2(0),s') A(x_{s'},p_2) \Big)
\end{align*}
\end{proof}

\end{document}
